\documentclass{article}

\PassOptionsToPackage{numbers, sort&compress}{natbib}
\usepackage[preprint]{neurips_2026}

\usepackage[utf8]{inputenc} 
\usepackage[T1]{fontenc}    
\usepackage{hyperref}       
\usepackage{url}            
\usepackage{booktabs}       
\usepackage{amsfonts}       
\usepackage{nicefrac}       
\usepackage{microtype}      
\usepackage{xcolor}         
\usepackage{tikz}           
\usetikzlibrary{fit,calc,arrows.meta,patterns}

\usepackage{xparse}             
\usepackage{amsthm, aliascnt}   
\usepackage{thmtools}           
\usepackage{amsmath, amssymb}   
\usepackage{mathtools}          
\usepackage{cleveref}
\usepackage{dsfont}             
\usepackage[inline]{enumitem}
\usepackage{algorithm}          
\usepackage[noend]{algpseudocode}
\usepackage{tcolorbox}
\usepackage{placeins}           
\usepackage{titletoc}           

\declaretheoremstyle[
  style=definition,
  notefont=\bfseries,
  notebraces={(}{)}
]{bolddefinition}

\declaretheorem[name=Theorem,numberwithin=section]{theorem}
\declaretheorem[name=Lemma,sibling=theorem]{lemma}

\declaretheorem[name=Proposition,sibling=theorem]{proposition}

\declaretheorem[name=Example,sibling=theorem]{example}
\declaretheorem[style=bolddefinition,name=Definition,sibling=theorem]{definition}

\crefname{theorem}{theorem}{theorems}
\Crefname{theorem}{Theorem}{Theorems}
\crefname{lemma}{lemma}{lemmas}
\Crefname{lemma}{Lemma}{Lemmas}
\crefname{corollary}{corollary}{corollaries}
\Crefname{corollary}{Corollary}{Corollaries}
\crefname{proposition}{proposition}{propositions}
\Crefname{proposition}{Proposition}{Propositions}
\crefname{problem}{problem statement}{problem statements}
\Crefname{problem}{Problem Statement}{Problem Statements}
\crefname{assumption}{assumption}{assumptions}
\Crefname{assumption}{Assumption}{Assumptions}
\crefname{example}{example}{examples}
\Crefname{example}{Example}{Examples}
\crefname{definition}{definition}{definitions}
\Crefname{definition}{Definition}{Definitions}
\crefname{remark}{remark}{remarks}
\Crefname{remark}{Remark}{Remarks}

\newcounter{boxedproblem}
\renewcommand{\theboxedproblem}{\Roman{boxedproblem}}
\crefalias{boxedproblem}{problem}

\algdef{SE}[FORALLPAR]{ParForAll}{EndParForAll}[1]
  {\algorithmicforall\ #1\ \textbf{in parallel}\ \algorithmicdo}
  {\algorithmicend\ \algorithmicforall}

\algtext*{EndParForAll}

\newcommand{\fin}{\mathrm{fin}}

\newcommand{\R}{\mathbb{R}}
\newcommand{\N}{\mathbb{N}}

\newcommand{\valspace}{\ensuremath{\mathcal{V}}}
\newcommand{\keyspace}{\ensuremath{\mathcal{K}}}
\newcommand{\quespace}{\ensuremath{\mathcal{Q}}}
\newcommand{\kvspace}{\ensuremath{\mathcal{X}}}
\newcommand{\Prob}{\ensuremath{\mathcal{P}}}
\newcommand{\hilbert}{\ensuremath{\mathcal{H}}}
\newcommand{\ball}{\ensuremath{\mathsf{B}}}

\NewDocumentCommand{\kvmeasures}{s o}{%
  \ensuremath{%
    \Prob%
    \IfValueTF{#2}
      {_{#2}}
      {_{\mathrm{fin}}}%
    \IfBooleanT{#1}{^\star}%
    (\kvspace)%
  }%
}
\newcommand{\quemeasures}{\Prob(\quespace)}
\NewDocumentCommand{\compspace}{m o}{%
  \ensuremath{%
    \mathcal A_{#1}%
    \IfValueTF{#2}
      {^{\mathrm{#2}}}%
      {^{\mathrm{aw}}}%
  }%
}

\DeclareMathOperator{\Att}{Att}
\NewDocumentCommand{\att}{s m m}{%
  \ensuremath{%
    \Att%
    \IfBooleanTF{#1}
      {\bigl(#2 \,|\, #3\bigr)}
      {(#2 \mid #3)}%
  }%
}
\newcommand{\diff}{\mathop{}\!\mathrm{d}}

\newcommand{\Err}{\mathcal{E}}
\NewDocumentCommand{\err}{s m m m}{%
  \ensuremath{%
    \Err_{#2,#3}%
    \IfBooleanTF{#1}
      {\bigl(#4\bigr)}
      {(#4)}%
  }%
}

\newcommand{\cov}{\ensuremath{\Sigma}}

\newcommand{\minimax}[2]{%
  \ensuremath{\mathfrak{R}\!\left(#1,#2\right)}%
}

\DeclareMathOperator{\Ex}{\mathbb{E}}

\NewDocumentCommand{\dotp}{m m o}{%
  \ensuremath{%
    \left\langle#1,#2\right\rangle
    \IfValueTF{#3}
      {_{\mathrm{#3}}}%
      {}%
  }%
}

\DeclareMathOperator{\rank}{rank}
\DeclareMathOperator{\ran}{ran}
\DeclareMathOperator{\im}{im}
\DeclareMathOperator{\supp}{supp}

\DeclareMathOperator{\trace}{tr}
\DeclareMathOperator{\vspan}{span}
\DeclareMathOperator{\tail}{tail}

\DeclareMathOperator{\bigO}{\mathcal{O}}

\DeclareMathOperator{\reduce}{\mathsf{REDUCE}}

\title{The risk of KV cache compression}

\author{%
    Lukas Haverbeck\thanks{Work conducted at Stanford University.} \\
    RWTH Aachen University\\
    \texttt{lukas.haverbeck@rwth-aachen.de} \\
    \And
    Carmen Amo Alonso\\
    Stanford University\\
    \texttt{camoalon@stanford.edu} \\
    \AND
    Andres Felipe Posada-Moreno\\
    RWTH Aachen University\\
    \texttt{andres.posada@dsme.rwth-aachen.de} \\
    \And
    Sebastian Trimpe\\
    RWTH Aachen University\\
    \texttt{trimpe@dsme.rwth-aachen.de} \\
    \And
    Marco Pavone\\
    Stanford University\\
    \texttt{pavone@stanford.edu} \\
}

\begin{document}

\maketitle

\begin{abstract}
    Transformer inference on long sequences is expensive because softmax attention repeatedly reads from a large KV cache.
    The prevalent approach to this bottleneck is \emph{KV cache compression}, which replaces the full cache with a compact summary.
    Despite its practical importance, the design of such summaries is largely driven by empirical experimentation.
    On the theoretical side, existing results show that KV cache compression can be impossible in the worst case, but offer little systematic guidance for designing algorithms in regimes where accurate compression is possible.
    We bridge this gap by characterizing the \emph{minimax risk of KV cache compression} in terms of the intrinsic compressibility of a cache, revealing when and how accurate compression is possible.
    These results yield novel design principles for KV cache compression under causal masking that map efficiently to prefill and autoregressive decoding while achieving minimax-optimal risk.
    We instantiate these principles in a practical algorithm and report promising performance on LongBench in targeted experiments.
    Overall, our results provide a principled avenue for practical KV cache compression with theoretical guarantees.
\end{abstract}

\section{Introduction}
Transformers have become the dominant architecture for processing and generating long sequences across various domains \citep{devlin2019bert,baevski2020wav2vec,rives2021biological,avsec2021effective} due to their excellent ability to model complex interactions between tokens.
During inference, these interactions are mediated through the \emph{key--value (KV) cache} storing all previous tokens, and each new token is computed from this cache via softmax attention \citep{vaswani2017attention}.
The KV cache therefore serves as the model's working memory of the past.
While this persistent access to past data makes Transformers powerful, it is also costly.
As the sequence grows, the cache grows with it, and each new step must read from an increasingly large history.
For long-context applications, this creates a major bottleneck in both memory use and runtime.

An attractive remedy is \emph{KV compression}: replacing a long history of key--value pairs by a compact summary that approximately preserves the attention outputs of the original cache.\footnote{The term also refers to numerical compression of the stored tensors. We focus on reducing the number of key--value pairs.}
Such a summary is useful only if future attention cannot reliably distinguish the compressed cache from the full history.
This makes it challenging to decide which distinctions in the past can be safely forgotten, because the importance of a token is not intrinsic to the token itself.
A token that can be safely discarded in one context and for one set of future queries may be essential for another, while several distinct tokens may be interchangeable if attention uses them in the same way.
Thus, compression is safe only to the extent that the summary preserves the context that future queries can actually use.

Despite its practical importance, KV compression lacks a systematic design theory and remains largely guided by empirical exploration \citep{liu2023scissorhands,zhang2023h2o,xiao2023efficient,li2024snapkv,cai2024pyramidkv,adnan2024keyformer}, using heuristic proxies for future relevance such as recency, accumulated attention mass, and attention sinks.
These methods show substantial compression is often possible, but they do not explain which properties of a cache make compression safe, how the achievable error should scale with the compression budget, or what an optimal summary should preserve.
Conversely, theoretical results show that accurate KV compression can be impossible in the worst case \citep{haris2025compression}, and identify structural assumptions under which compression is possible \citep{kochetkova2025streaming,carrell2025low,zandieh2024subgen}.
This leaves open the intermediate regime most relevant to practical algorithm design: caches that are not adversarial, but also not prescribed by a fixed model class.

To bridge this gap, we provide a graded account of KV cache compressibility in terms of the intrinsic interaction between a cache and future queries through softmax attention.
Specifically, we characterize the \emph{minimax risk of KV compression}, asking: for a fixed compression budget, how much error is unavoidable for the way a given cache can be probed by future queries?
Beyond quantitative rates, this characterization identifies what an optimal summary must preserve.
It captures the continuum between easy regimes, where many past tokens are effectively redundant and a small summary can preserve attention accurately, and hard regimes, where the cache contains many separately retrievable pieces of information and any small summary must lose something, and yields concrete design criteria for algorithms to attain minimax-optimal risk.

Concretely, we make the following contributions.
\textbf{(1)} We recast KV compression as sparse approximation of a measure, strictly generalizing token eviction, and formalize its minimax risk.
\textbf{(2)} We prove tight upper and lower bounds on this risk in terms of an intrinsic complexity measure capturing how future queries probe the cache, and show a sharp separation between algorithms with and without access to the future-query distribution.
\textbf{(3)} We derive design principles for efficient KV compression during causally masked prefill and autoregressive decoding with minimax-optimal compression risk.
\textbf{(4)} We instantiate these principles in a concrete KV compression algorithm and evaluate it through targeted experiments on LongBench, where it shows promising performance.
Put together, our results advance KV compression towards practical methods with theoretical guarantees.
\section{Related work}

A large body of recent work reduces KV cache size by selecting tokens according to empirical proxies for future relevance.
These proxies include recency and attention sinks \citep{xiao2023efficient,zhang2023h2o}, accumulated or persistent attention mass \citep{liu2023scissorhands,zhang2023h2o,adnan2024keyformer}, and attention patterns observed during prefill or across layers \citep{li2024snapkv,cai2024pyramidkv}.
The strong empirical performance of these methods suggests that real Transformer caches often contain substantial redundancy.
At the same time, these proxies do not by themselves explain which properties of a cache make compression safe, how the achievable error should scale with the compression budget, or what information an optimal summary must preserve.
Our work addresses this gap by studying the minimax risk of KV compression algorithms broadly, aiming to inform practical compression beyond empirical exploration.

Existing theory has mainly identified two endpoints: impossibility results for sharp token retrieval \citep{alman2023fast,haris2025compression}, and positive approximation guarantees under additional structure \citep{zandieh2024subgen,kochetkova2025streaming,carrell2025low,aliakbarpour2026support}.
These results identify important limitations and tractable regimes, but they do not give a graded characterization from lossless compression to worst-case incompressibility.
Our upper bounds require no a priori structural model of the data beyond bounded values, which is weaker than bounded keys and queries.
Instead, we deliberately characterize the compression risk through the interaction between caches and softmax attention itself.
Our lower-bound assumptions are close in spirit to the sharp-attention regimes considered in prior hardness results.
However, we employ them to characterize when compression remains possible, even under sharp attention, not to establish impossibility alone.

Finally, we formalize the minimax risk of KV compression as sparse approximation of a measure.
Measure-based views of softmax attention and Transformers are not new, and have led to useful perspectives in several settings \citep{geshkovski2025mathematical,burger2025analysis}.
To the best of our knowledge, our recasting is new for KV compression, and it strictly generalizes token eviction from subset selection to sparse reweighting.
\section{Problem setup} \label{sec:setting}

We study KV compression: the problem of replacing the context seen by a single softmax attention head with a compact summary while preserving its outputs on future queries, in order to reduce memory use and computational cost.
Since softmax attention depends on the context only through the induced weighting of its key--value pairs, we formalize KV compression as sparse approximation of a probability measure over tokens.
Our goal is to characterize the achievable error for this approximation problem as a function of the context's \emph{intrinsic compressibility} under a query distribution, which we formalize as a minimax risk in terms of a data-dependent complexity measure.

\subsection{KV compression as sparse approximation of a probability measure}
In their seminal work, \citet{vaswani2017attention} define softmax attention as
\begin{equation} \label{eq:attention-standard}
    \Att\bigl(q \mid k_1, \dots, k_n, v_1, \dots, v_n\bigr)
    =
    \frac{
        \sum_{i=1}^n \exp\left(\left\langle q, k_i \right\rangle/\sqrt{d_k}\right) \, v_i
    }{
        \sum_{i=1}^n \exp\left(\left\langle q, k_i \right\rangle/\sqrt{d_k}\right)
    },
\end{equation}
where \(q\in\quespace\) is a \emph{query}, \(k_i\in\keyspace\) are \emph{keys}, and \(v_i \in\valspace\) are \emph{values}, in their respective spaces \(\keyspace,\quespace \subseteq \R^{d_k}\) and \(\valspace \subseteq \R^{d_v}\).
We write \(\kvspace \coloneqq \keyspace \times \valspace\) for the key--value space, and assume throughout that values are bounded in norm by a constant \(V > 0\), which is usually not restrictive in practice.

The collection \((k_i,v_i)_{i=1}^n\) of key--value pairs in \eqref{eq:attention-standard} is called the \emph{KV cache}. 
We study the problem of \emph{KV compression}: finding a compact representation of this collection that approximately preserves the attention map in \eqref{eq:attention-standard}.
Since \eqref{eq:attention-standard} depends on the KV cache only through a weighted combination, we represent the KV cache as a finitely supported probability measure \(P\) and define
\begin{equation} \label{eq:attention}
    \att*{q}{P}
    \coloneqq
    \int_{\kvspace} a_k(q \mid P)\, v \, \diff P(k,v),
    \qquad
    a_k(q \mid P)
    \coloneqq
    \frac{\kappa(q,k)}{\int_{\kvspace} \kappa(q,k') \, \diff P(k',v')}
\end{equation}
for a Gaussian kernel \(\kappa(q,k) \coloneqq \exp(-\|q-k\|_2^2 / (2\sqrt{d_k}))\) on \(\R^{d_k}\) with scale parameter \(\sqrt{d_k}\). 
This definition coincides with \eqref{eq:attention-standard} whenever \(P \propto \sum_{i=1}^n \exp\bigl(\|k_i\|^2 / (2\sqrt{d_k})\bigr) \cdot \delta_{(k_i,v_i)}\). For the remainder of this paper, we work with attention as defined in \eqref{eq:attention} so that attention depends on the KV cache only through the \emph{context measure} \(P\).

To formalize KV compression from this viewpoint, we introduce the notation \(\Prob(\mathcal S)\) for the probability measures on a subset \(\mathcal S \subseteq \hilbert\) of a Hilbert space \(\hilbert\), and denote by \(\Prob_\fin(\mathcal S)\) those with finite support and by \(\Prob_K(\mathcal S)\) those supported on at most \(K\) points.
KV compression then becomes the following problem.

\refstepcounter{boxedproblem}
\label{prb:kv-compression}
\begin{tcolorbox}[
  title={Problem statement~\theboxedproblem\ (KV compression)},
  colback=white,
  colframe=black,
  colbacktitle=black, 
  coltitle=white,    
  fonttitle=\bfseries
]
Given a budget \(K \ge 1\), a context measure \(P \in \kvmeasures\), and a query distribution \(\nu \in \quemeasures\), find a compressed context measure \(\hat P \in \kvmeasures[K]\) minimizing the mean squared error
\[
    \textstyle
    \err*{P}{\nu}{\hat P}
    \coloneqq
    \Ex_{q \sim \nu}\bigl\|\att*{q}{P} - \att*{q}{\hat P}\bigr\|_2^2.
\]
\end{tcolorbox}

\Cref{prb:kv-compression} strictly generalizes token eviction \citep{liu2023scissorhands,zhang2023h2o,xiao2023efficient,li2024snapkv,cai2024pyramidkv}, which seeks a small, unweighted subset of \(K\) tokens that approximately preserve the attention map.
In our setting, these tokens form the support of \(\hat P\).
However, we allow arbitrary sparse reweightings of the cache, which, to the best of our knowledge, provides a new view on KV compression.

\subsection{Capturing the intrinsic compressibility of a context}

To understand when accurate compression is possible, we identify how changes to \(P\) affect softmax attention.
Since \(P\) and \(\hat P\) are both probability measures, their difference \(\Delta \coloneqq \hat P - P\) is a signed, zero-mass measure.
Compression can therefore be viewed as a reweighting of the original context, and its error is determined by how this reweighting perturbs the softmax numerator and normalizer.

Indeed, a direct computation shows that, for every query \(q \in \quespace\),
\begin{equation} \label{eq:attention-reweighting}
    \att{q}{\hat P}-\att{q}{P}
    =
    \frac{
        \int_\kvspace a_k(q \mid P)\bigl(v-\att{q}{P}\bigr)\,\diff \Delta(k,v)
    }{
        1+\int_\kvspace \bigl(a_k(q \mid P)-1\bigr)\,\diff \Delta(k,v)
    }.
\end{equation}
Thus, every compression perturbation enters attention through two linear functionals of \(\Delta\): a centered numerator term and a centered renormalization term.
This identifies the pair of query-indexed coefficients with which each \((k,v) \in \kvspace\) contributes to the compression-induced reweighting of tokens.
We collect these coefficients with the following definition.

\begin{definition}[Response profile] \label{def:response-profile}
    Let \(P \in \kvmeasures\) be a context measure and \(\nu \in \quemeasures\) a query distribution. We call
    \begin{equation} \label{eq:response-profile}
        \Gamma_P(k,v) : \quespace \to \R^{d_v}\oplus\R,
        \qquad
        q \mapsto
        \Bigl(
            a_k(q \mid P)\bigl(v-\att{q}{P}\bigr),
            \;
            V\bigl(a_k(q \mid P)-1\bigr)
        \Bigr)
    \end{equation}
    the \emph{response profile} of \(P\). We view \(\Gamma_P(k,v)\) as an element of the Hilbert space \(\hilbert_\nu \coloneqq L_2(\nu, \R^{d_v} \oplus \R)\) and define the covariance operator
    \[
        \textstyle
        \cov_{P,\nu} \coloneqq \int_\kvspace \Gamma_P(k,v) \otimes \Gamma_P(k,v) \,\diff P(k,v).
    \]
\end{definition}

The first coordinate of \(\Gamma_P(k,v)\) records the token's contribution to the centered numerator perturbation in \eqref{eq:attention-reweighting}.
The second coordinate records its contribution to the centered denominator perturbation, scaled by the value bound \(V\) so that both coordinates have the same units.
The query distribution enters through the norm on \(\hilbert_\nu\): Directions that matter on frequent queries are weighted heavily, while directions that matter only on rare queries are weighted lightly.
Consequently, \(\cov_{P,\nu}\) measures the amount and geometry of query-visible variation in the context.
While we do not claim that \(\cov_{P,\nu}\) is the \emph{only} object to capture the intrinsic redundancy of a context, we argue that it is a meaningful choice.
To substantiate this, we give three examples illustrating that the spectral decay of \(\cov_{P,\nu}\) tracks the intuitive difficulty of compression in several natural regimes.

\begin{restatable}[Needles in a haystack]{example}{expneedles}
    Suppose a context \(P\) consists of \(m\) approximately equal ``hay tokens'' and \(n \ll m\) ``needle tokens'' that are well separated from all other tokens and dominate under \(\nu\). Then, \(\cov_{P,\nu}\) has few large eigenvalues.
\end{restatable}
\begin{restatable}[Waning clusters]{example}{expclusters}
    Suppose a context \(P\) consists of \(m\) well-separated clusters of nearly interchangeable tokens, and the \(j\)-th cluster dominates under \(\nu\) with frequency of order \(j^{-(1+\alpha)}\). Then, the eigenvalues of \ \(\cov_{P,\nu}\) decay as \(r^{-\alpha}\).
\end{restatable}
\begin{restatable}[Random lookup]{example}{explookup}
    Suppose a context \(P\) consists of \(m\) pairwise well-separated tokens, and the query distribution \(\nu\) is diffuse. Then, \(\cov_{P,\nu}\) has many large eigenvalues.
\end{restatable}

\subsection{Characterizing the achievable error of KV compression algorithms}

Our goal in this paper is to characterize the fundamental limits of KV compression.
Specifically, we seek minimax rates in terms of the intrinsic compressibility of a context by asking: \emph{What error does the best compression algorithm incur on its worst input \((P,\nu)\), as a function of \(\cov_{P,\nu}\)?}
To make this precise, we first specify what information a compression algorithm may use.

\begin{definition}[Compressor]
    A \(K\)-compressor is a map \(A: \kvmeasures \times \quemeasures \times \Omega \to \kvmeasures[K]\) over a probability space \((\Omega,\mathcal F,\mathbb P)\) of random seeds. We write \(\compspace{K}\) for the class of all such compressors.
\end{definition}

This definition allows compression algorithms to be randomized.
It also allows the compressor to depend on the query distribution \(\nu\).
We call this the \emph{query-aware} setting.
In many applications, however, the compressed summary must be formed before the future query distribution is known, which we call the \emph{query-agnostic} setting, which we capture through the following restricted class.

\begin{definition}[Query-agnostic compressor] \label{def:agnostic-compressor}
    We call a \(K\)-compressor \(A \in \compspace{K}\) query-agnostic if \(A(P,\nu,\omega) = A(P,\rho,\omega)\) for all \(\nu,\rho \in \quemeasures\), \(P \in \kvmeasures\), and \(\omega \in \Omega\). We write \(\compspace{K}[ag]\) for the class of all query-agnostic \(K\)-compressors.
\end{definition}

We measure the performance of a class of compressors by its worst-case expected mean squared error over a prescribed family of contexts and query distributions.

\begin{definition}[Minimax risk]
    Fix \(K \ge 1\).
    For a family of compressors \(\mathcal C \subseteq \compspace{K}\), and a family of context and query measures \(\mathcal F \subseteq \kvmeasures \times \quemeasures\), define the minimax compression risk
    \[
        \minimax{\mathcal C}{\mathcal F}
        \coloneqq
        \inf_{A \in \mathcal C}
        \sup_{(P,\nu) \in \mathcal F}
        \Ex_{\omega \sim \mathbb P}\!\left[
            \err*{P}{\nu}{A(P, \nu, \omega)}
        \right].
    \]
\end{definition}

Our objective is to characterize \(\minimax{\compspace{K}}{\mathcal F}\) and \(\minimax{\compspace{K}[ag]}{\mathcal F}\) through the response covariances \(\cov_{P,\nu}\) realized by \((P,\nu)\in\mathcal F\), and to use this characterization to identify what compressed summaries must preserve in order to attain minimax-optimal risk.
\section{Minimax rates for KV compression} \label{sec:minimax-rates}

We now study the minimax risk of KV compression.
\Cref{fig:approach} illustrates our approach: compressing a KV cache amounts to replacing the context measure \(P\) by a sparse reweighting, so that many tokens receive weight \(0\) while the remaining tokens are reweighted to preserve the effect of the full context on the attention map.
To track this effect, we represent each key--value pair \((k,v)\in\kvspace\) by its response profile \(\Gamma_P(k,v)\in\hilbert_\nu\), which records the contribution of \((k,v)\) to the attention numerator and denominator under reweightings of \(P\).
These individual contributions combine through their \emph{barycenter} in \(\hilbert_\nu\).
As we show, reweightings causing small barycenter displacement also cause small attention error, recasting KV compression as a balancing problem in Hilbert space: given a context measure \(P\), reweight \(P\) so that its support is limited to at most \(K\) tokens while the barycenter of the response profiles remains nearly fixed.
\Cref{prb:barycenter-sparsification} formulates this problem abstractly.

\begin{figure}[tbp]
    \centering
    \input{figures_approach.tikz}
    \caption{
        Illustration of our approach.
        \textbf{Left:} We view the token sequence as a measure \(P\) over key--value pairs \((k,v) \in \kvspace\), which we map to their response profiles \(\Gamma_P(k,v) \in \hilbert_\nu\).
        \textbf{Right:} Compressing \(P\) amounts to a sparse reweighting of tokens, which moves the barycenter of the response profiles.
        The size of this displacement depends on the retained tokens and their assigned weights, and controls the attention error incurred by compression.
        KV compression therefore reduces to a balancing problem in Hilbert space: find a \(K\)-atomic reweighting that keeps the barycenter in \(\hilbert_\nu\) nearly fixed.
    }
    \label{fig:approach}
\end{figure}

\refstepcounter{boxedproblem}
\label{prb:barycenter-sparsification}
\begin{tcolorbox}[
  title={Problem statement~\theboxedproblem\ (Sparse balancing in Hilbert space)},
  colback=white,
  colframe=black,
  colbacktitle=black, 
  coltitle=white,    
  fonttitle=\bfseries
]
Given \(K \ge 1\), a Hilbert space \(\hilbert\), and \(\mu \in \Prob_\fin(\hilbert)\), find \(\hat \mu \in \Prob_K(\supp(\mu))\) minimizing
\[
    \textstyle
    \widetilde\Err_{\mu}(\hat \mu)
    \coloneqq
    \bigl\|\int_{\hilbert} g \,\diff\hat\mu(g)\bigr\|_{\hilbert}^2.
\]
\end{tcolorbox}

The rest of the section turns this reformulation into tight minimax bounds.
\Cref{sec:reduction} proves the transfer back to KV compression: a \(K\)-atomic reweighting whose squared barycenter displacement is at most \(\varepsilon\) incurs attention error at most \(C\varepsilon\), for a universal constant \(C>0\).
\Cref{sec:upper-bounds} then proves two balancing bounds for \Cref{prb:barycenter-sparsification} and transfers them to the query-aware and query-agnostic minimax compression risk.
Finally, \Cref{sec:lower-bounds} proves that these bounds are tight for sufficiently rich classes of contexts and query distributions.
For that, we work in the regime where softmax attention can sharply retrieve individual tokens, formalized by the following assumption.

\begin{restatable}{assumption}{asprichness} \label{asp:richness}
Fix \(K \ge 1\) and let \(\ball_d(r)\) denote the radius-\(r\) ball in \(\R^d\).
Assume that
\begin{enumerate*}[label=\textup{(\roman*)}, itemjoin={{; }}, itemjoin*={{; and }}]
    \item \label{itm:richness:1} \(\keyspace \cap \quespace \supseteq \ball_{d_k}\bigl(8 \, d_k^{1/4} \sqrt{\log K}\bigr)\)
    \item \label{itm:richness:2} \(d_k,d_v \ge 10 \log K\)
    \item \label{itm:richness:3} \(\valspace = \ball_{d_v}(V)\)
\end{enumerate*}.
\end{restatable}

Condition~\ref{itm:richness:1} gives the minimal norm growth needed for sharp attention among \(K\) tokens: resolving one token against \(K\) competitors requires logit gaps of order \(\log K\), while radius-\(B\) keys and queries give gaps at most \(2B^2/\sqrt{d_k}\), forcing \(B\gtrsim d_k^{1/4}\sqrt{\log K}\).
Condition~\ref{itm:richness:2} allows mild dimension growth with context length, and Condition~\ref{itm:richness:3} ensures that the value norm bound \(V\) is at the right scale.

\paragraph{Notation.}
For a Hilbert space \(\hilbert\) and a positive trace-class operator \(L\) on \(\hilbert\), with eigenvalues \(\lambda_1(L)\ge\lambda_2(L)\ge\cdots\ge0\), multiplicities counted, set \(\tail_r(L)\coloneqq\sum_{j>r}\lambda_j(L)\).
For \(x,y\ge0\), write \(x\lesssim y\) if \(x\le Cy\) and \(x\gtrsim y\) if \(x\ge cy\) for universal \(C,c>0\), and write \(x\asymp y\) if both hold.

All proofs and constructions for this section are in \Cref{apx:sparse-hilbert-approximation,apx:rates}.

\subsection{KV compression reduces to sparse balancing in Hilbert space} \label{sec:reduction}

Recall from \Cref{sec:setting} that, for a fixed context measure \(P\) and query distribution \(\nu\), each token affects attention outputs under context reweightings through its response profile \(\Gamma_P(k,v) \in \hilbert_\nu\).
Having formalized the context as a measure, we can form the pushforward
\(
    \mu_{P,\nu} \coloneqq P \circ \Gamma_P^{-1},
\)
which then describes the distribution of responses to the attention numerator and denominator under reweightings of \(P\).
Moreover, every \(K\)-atomic measure \(\hat\mu\) approximating \(\mu_{P,\nu}\) while remaining supported on \(\supp(\mu_{P,\nu})\) can be pulled back to a \(K\)-atomic context measure \(\hat P\).
Consequently, any solution of Problem~\ref{prb:barycenter-sparsification} for the measure \(\mu_{P,\nu}\) in \(\hilbert_\nu\) directly yields a compressed KV cache.
The following proposition shows that error bounds survive this pullback, so that upper bounds for sparse balancing in Hilbert space immediately transfer to bounds on the KV compression error.

\begin{restatable}{proposition}{propreduction} \label{pro:reduction}
    Let \(K \ge 1\), \(P \in \kvmeasures\), and \(\nu \in \quemeasures\). Define \(\mu_{P,\nu} \coloneqq P \circ \Gamma_P^{-1}\).
    Then, \(\mu_{P,\nu}\) is centered and has covariance operator \(\cov_{P,\nu}\).
    Moreover, any fixed \(\hat\mu \in \Prob_K(\supp(\mu_{P,\nu}))\) can be pulled back to a measure \(\hat P \in \kvmeasures[K]\) independently of \(\nu\) such that
    \(
        \err*{P}{\nu}{\hat P}
        \lesssim
        \widetilde\Err_{\mu_{P,\nu}}(\hat\mu)
    \).
    In particular,
    \[
        \inf_{\hat P \in \kvmeasures[K]} \Err_{P,\nu}\bigl(\hat P\bigr)
        \;\;\lesssim\;\;
        \inf_{\hat \mu \in \Prob_K(\supp(\mu_{P,\nu}))} \widetilde\Err_{\mu_{P,\nu}}\bigl(\hat\mu\bigr).
    \]
\end{restatable}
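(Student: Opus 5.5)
The plan is to verify the structural claims about \(\mu_{P,\nu}\) directly. For the pullback, I would use the reweighting identity \eqref{eq:attention-reweighting} pointwise in \(q\), together with a case split on the size of the renormalization term, which is the only real obstacle.

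\emph{Centering and covariance.} Since \(P\) is finitely supported, all integrals are finite sums, so barycenters in \(\hilbert_\nu\) can be computed pointwise in \(q\). By definition of \(a_k\), we have \(\int a_k(q\mid P)\,\diff P(k,v) = 1\) for every \(q\).

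\begin{itemize}
\item Hence the first coordinate of \(\int \Gamma_P\,\diff P\) equals \(\att{q}{P} - \att{q}{P}\cdot 1 = 0\).
\item The second coordinate equals \(V(1-1)=0\).
\end{itemize}

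So \(\int g\,\diff\mu_{P,\nu}(g) = \int \Gamma_P\,\diff P = 0\). The covariance of the pushforward is \(\int g\otimes g\,\diff\mu_{P,\nu} = \int \Gamma_P\otimes\Gamma_P\,\diff P = \cov_{P,\nu}\) by change of variables.

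\emph{Pullback.} Write \(\hat\mu = \sum_{j\le K} w_j \delta_{g_j}\) with \(g_j \in \supp(\mu_{P,\nu})\). For each \(j\), pick a preimage \((k_j,v_j)\in\supp(P)\) with \(\Gamma_P(k_j,v_j)=g_j\). To make the choice canonical, I would use a fixed enumeration of \(\supp(P)\) and take the first token whose profile, as a function on \(\quespace\), is \(g_j\); this function does not involve \(\nu\). If several \(g_j\) share a preimage, their weights are merged. Set \(\hat P \coloneqq \sum_j w_j\delta_{(k_j,v_j)} \in \kvmeasures[K]\) and \(\Delta\coloneqq \hat P - P\). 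By centering,
\[
\textstyle\int g\,\diff\hat\mu(g) = \int\Gamma_P\,\diff\hat P = \int \Gamma_P\,\diff\Delta \eqqcolon (N(q), V\,(D(q)-1)),
\]
where \(N(q)\) is exactly the numerator in \eqref{eq:attention-reweighting} and \(D(q) = 1+\int(a_k(q\mid P)-1)\,\diff\Delta = \int a_k(q\mid P)\,\diff\hat P > 0\) is its denominator. Thus \(\att{q}{\hat P}-\att{q}{P} = N(q)/D(q)\) and \(\widetilde\Err_{\mu_{P,\nu}}(\hat\mu) = \Ex_{q\sim\nu}\bigl[\|N(q)\|^2 + V^2(D(q)-1)^2\bigr]\).

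\emph{Pointwise control of the denominator (main step).} The difficulty is that \(D(q)\) may be close to \(0\), so \(N/D\) cannot be bounded linearly in \(N\) alone. I would split on \(|D(q)-1|\).

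\begin{itemize}
\item If \(|D(q)-1|\le 1/2\), then \(D(q)\ge 1/2\) and \(\|N/D\|^2\le 4\|N\|^2\).
\item If \(|D(q)-1|>1/2\), use the trivial bound that both attention outputs are convex combinations of values of norm at most \(V\). This gives \(\|\att{q}{\hat P}-\att{q}{P}\|^2 \le 4V^2 \le 16V^2(D(q)-1)^2\).
\end{itemize}

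In either case, the pointwise error is at most \(16\bigl(\|N(q)\|^2+V^2(D(q)-1)^2\bigr)\). Integrating against \(\nu\) yields \(\err{P}{\nu}{\hat P}\le 16\,\widetilde\Err_{\mu_{P,\nu}}(\hat\mu)\). The final inequality between infima follows by applying this to each \(\hat\mu\) and noting that \(\hat P\in\kvmeasures[K]\).
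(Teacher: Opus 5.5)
Your proposal is correct and follows the paper's argument essentially step by step. The paper likewise shows centering via \(\int a_k(q\mid P)\,\diff P = 1\) and pulls back each atom \(g_j\) to a preimage token in \(\supp(P)\). It then bounds the pointwise error by \(16\bigl(\|N(q)\|^2+V^2(D(q)-1)^2\bigr)\) through the same case split on the renormalization term; splitting on \(|D(q)-1|\le 1/2\) rather than on \(D(q)\ge 1/2\), as the paper does, makes no difference.
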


\subsection{Compression bounds from balancing bounds} \label{sec:upper-bounds}
By \Cref{pro:reduction}, it suffices to bound the sparse balancing error in Hilbert space: prove a bound for balancing the barycenter of a probability measure in Hilbert space, and have the same bound apply pointwise to KV compression.
Standard results on sparse convex optimization already provide basic balancing bounds.
First, kernel herding and conditional-gradient methods give a bound of order \(B^2/K\) when \(\supp(\mu_{P,\nu})\) lies in a radius-\(B\) ball \citep{chen2010super,jaggi2013revisiting}. This can be loose, however, because the radius is sensitive to worst-case response profiles and does not measure data-dependent redundancy.
Second, independent sampling from \(P\) gives a bound of the form \(\trace(\cov_{P,\nu})/K\). This captures the total response variance but does not exploit potential anisotropy of \(\cov_{P,\nu}\), which can be significant when \(\nu\) concentrates on a few important directions.

\subsubsection{Query-aware upper bound}
Since query-aware compressors can explicitly adapt to anisotropic structure in the query distribution, we seek a bound in terms of the spectral decay of \(\cov_{P,\nu}\), which kernel herding and random sampling do not provide.
Below, we provide such a bound, which is new to the best of our knowledge.
The construction works by preserving the barycenter exactly on the leading eigenspace while simultaneously sparsifying the measure on the orthogonal complement.
\Cref{pro:reduction} allows us to immediately transfer this bound to the minimax risk of query-aware KV compression.
\begin{restatable}{theorem}{thmsparsehilbertapproximationfixedk} \label{thm:sparse-hilbert-approximation-fixed-K}
    Fix \(K \ge 7\) and a real Hilbert space \(\hilbert\). Let \(\mu \in \Prob_\fin(\hilbert)\) be centered, with covariance operator \(\Sigma: \hilbert\to\hilbert\).
    Then,
    \(
        \|\int_\hilbert g \,\diff \hat\mu(g)\|_\hilbert^2
        \lesssim
        \frac{\tail_{K/3}(\Sigma)}{K}
    \)
    for some \(\hat\mu \in \Prob_K(\supp(\mu))\).
\end{restatable}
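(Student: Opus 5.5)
Let \(r \coloneqq \lfloor K/3\rfloor\). Let \(\Pi\) be the orthogonal projection onto the span \(E\) of the top \(r\) eigenvectors of \(\Sigma\), and \(\Pi^\perp = I-\Pi\). The construction proceeds in two stages. A sampling stage sparsifies the measure while controlling the \(\Pi^\perp\)-component of the barycenter. A Carathéodory-type correction stage then restores the \(\Pi\)-component exactly, using at most \(r+1\) additional atoms. All atoms are drawn from \(\supp(\mu)\), and the total budget is about \(2r + (r+1) \le K\) (this is where \(K\ge 7\) gives room for rounding).

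\emph{Stage 1 (sampling on the complement).} Draw \(m \approx 2K/3\) i.i.d.\ samples \(g_1,\dots,g_m \sim \mu\) and form the empirical measure \(\mu_m\). Since \(\mu\) is centered, \(\Ex\|\Pi^\perp \int g\,\diff\mu_m\|^2 = \trace(\Pi^\perp\Sigma)/m = \tail_r(\Sigma)/m\). Hence some realization satisfies \(\|\Pi^\perp b_m\|^2 \le \tail_r(\Sigma)/m\), where \(b_m\) is its barycenter.

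The projected error \(\Pi b_m\) is not controlled yet, so it must be fixed without disturbing the \(\Pi^\perp\) part too much. A mixture is convenient for this. Take \(\hat\mu = (1-t)\mu_m + t\,\eta\) with \(\eta\in\Prob_{r+1}(\supp\mu)\), and demand \(\Pi\bigl((1-t)b_m + t\,b_\eta\bigr)=0\). That is, \(\Pi b_\eta = -\tfrac{1-t}{t}\Pi b_m\) must lie in the convex hull \(\Pi(\supp\mu)\subset E\). By Carathéodory in the \(r\)-dimensional space \(E\), any point of that hull is attained by an \(\eta\) with at most \(r+1\) atoms. So the issue is choosing \(t\) so the target point is inside the hull, while \(t\,\|\Pi^\perp b_\eta\|\) stays small.

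\emph{Stage 2 (feasibility and the complement cost).} The main obstacle is that \(\eta\) adds a \(\Pi^\perp\) error \(t\,\Pi^\perp b_\eta\) that could be large, since \(\supp\mu\) may have large far-out atoms. My plan is to choose \(\eta\) not by bare Carathéodory, but as a reweighting of \(\mu\) itself. Concretely, take the subset of measures \(\eta\ll\mu\) with density bounded by \(2\), say, whose \(\Pi\)-barycenter hits a prescribed small target \(c\). Among those, pick an extreme point of this polytope intersected with the affine constraint. An extreme point of \(\{w: 0\le w\le 2,\ \sum w_i\mu_i=1,\ \sum w_i\mu_i \Pi g_i = c\}\) has at most \(r+1\) non-saturated coordinates. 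This is the standard basic-solution argument.

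Applied naively, this still leaves many saturated atoms. So instead I would run the correction \emph{on the empirical measure}: reweight the \(m\) sampled atoms with weights in \([0,2/m]\) so that the \(\Pi\)-barycenter becomes exactly \(0\). A basic solution then has at most \(r+1\) fractional weights, and the remaining atoms have weight \(0\) or \(2/m\). Feasibility requires that \(0\) lie in the set of achievable \(\Pi\)-barycenters. I would secure this with a second-moment argument: \(\|\Pi b_m\|^2 \lesssim r/m\) in the \(\Sigma^{-1}\)-whitened norm, compared with the empirical covariance on \(E\) being well conditioned. If that fails, I would fall back to an iterative halving/discrepancy scheme (Frank–Wolfe on \(E^\perp\) with exact projection on \(E\)).

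\emph{Stage 3 (accounting).} Changing weights within \([0,2/m]\) perturbs the \(\Pi^\perp\)-barycenter. I would bound this perturbation in expectation by \(\tail_r(\Sigma)/m\) via a random-rounding argument: round the fractional coordinates randomly, preserving their mean. Combining the stages gives \(\|b_{\hat\mu}\|^2 = \|\Pi^\perp b_{\hat\mu}\|^2 \lesssim \tail_r(\Sigma)/m \asymp \tail_{K/3}(\Sigma)/K\), with support of size at most \(m + (r+1)\le K\) after the zero-weight atoms are discarded. I expect the feasibility and conditioning step in Stage 2 to be the crux.
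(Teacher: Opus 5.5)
There is a genuine gap in Stages 2--3, and it is where the real difficulty of the theorem lies.

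\textbf{The LP correction loses the $1/m$ factor.} You enforce exact cancellation on $E$ by taking a deterministic vertex of the polytope $\{w\in[0,2/m]^m : \sum_i w_i=1,\ \sum_i w_i\Pi g_i=0\}$. Its effect on the $E^\perp$-barycenter is not mean-zero noise. The polytope has dimension about $m-r-1\approx m/2$. Maximizing a linear functional of the $\Pi^\perp$-coordinates over it typically lands on a vertex whose $\Pi^\perp$-barycenter has size of order $\Ex\|\Pi^\perp g\|$. Your selection rule does not exclude such vertices, so you only get $\|\Pi^\perp b_{\hat\mu}\|^2\lesssim\tail_r(\Sigma)$, without the $1/m$.

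\textbf{Stage 3 cannot repair this.} The move from the uniform weights $1/m$ to the vertex is a data-dependent bias, not a centered perturbation. Also, at a basic solution the constraint columns of the fractional coordinates are linearly independent. So no direction supported on them preserves the constraints, and any rounding of them breaks the exact $E$-cancellation.

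\textbf{Feasibility is unproven.} With $m/r\approx 2$ there is no concentration for the empirical covariance on $E$. A finitely supported $\mu$ can carry a top direction on an atom of mass $p\ll 1/m$, e.g.\ $(1-p)\delta_a+p\delta_b$ with $a=-\frac{p}{1-p}b$. Then a typical sample admits no rebalancing at all. Restricting to the rare realizations that do admit one invalidates the averaging argument you used for $\|\Pi^\perp b_m\|^2$.

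\textbf{The missing idea.} Never leave the constraint set, and make the sparsification itself a variance-controlled martingale. Start from $\mu$ itself, whose $E$-barycenter is already zero, not from an i.i.d.\ sample.
\begin{itemize}
\item Split every atom into copies $g_s$ of mass $\alpha_s\le 1/m$, and set $z_s=m\alpha_s\in[0,1]$, so $\sum_s z_s=m$.
\item Round $z$ by a walk whose increments lie in the nullspace of the $r+1$ constraints (total mass and the $E$-coordinates). Its conditional covariance $\widetilde U$ must satisfy $\widetilde U\preceq C\,\mathrm{diag}(\widetilde U)$.
\item The paper uses Bansal's sub-isotropic walk with $C=20/9$. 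It applies because, while more than $2r+2$ coordinates are fractional, the nullspace has at least half the dimension of the alive set.
\item Stop once at most $2r+2$ coordinates are fractional. The support is then at most $m+2r+2\le K$ with $m=K-2r-2\gtrsim K$; this is where $K\ge 7$ is used.
\end{itemize}
The $E$-component of the barycenter then vanishes exactly. By martingale orthogonality, $\Ex\|\Pi^\perp b_{\hat\mu}\|^2\le\frac{C}{m^2}\sum_s z_s(1-z_s)\|\Pi^\perp g_s\|^2\le\frac{C}{m}\tail_r(\Sigma)$.

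Since only this one quadratic form must be controlled, an elementary walk also works. At each step, move along a direction $u$ in the constraint nullspace with $u^\top G u\le 3\,u^\top\mathrm{diag}(G)u$, where $G$ is the Gram matrix of the $\Pi^\perp g_s$ on the alive set $A$. Such directions contain a subspace of dimension at least $2|A|/3$, because the diagonally normalized Gram matrix has trace at most $|A|$. Its intersection with the nullspace is therefore nontrivial. Neither i.i.d.\ sampling nor a Carath\'eodory step is needed.
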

\begin{restatable}{corollary}{corupperaware} \label{cor:upper-bound-aware}
    Let \(K \ge 7\) and \(\mathcal F \subseteq \kvmeasures \times \quemeasures\). Then, \(\minimax{\compspace{K}[aw]}{\mathcal F} \lesssim \sup_{P,\nu \in \mathcal F} \frac{\tail_{K/3}(\cov_{P,\nu})}{K}\).
\end{restatable}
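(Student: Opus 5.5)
The statement to prove is \Cref{cor:upper-bound-aware}. The plan is to combine \Cref{pro:reduction} with \Cref{thm:sparse-hilbert-approximation-fixed-K} pointwise in \((P,\nu)\), and to build a deterministic query-aware compressor from the resulting near-optimal sparse measures. Because the minimax risk takes an infimum over compressors, it suffices to exhibit one compressor \(A\in\compspace{K}\) whose error on every \((P,\nu)\in\mathcal F\) is at most \(C\,\tail_{K/3}(\cov_{P,\nu})/K\) for a universal \(C\). Taking the supremum over \(\mathcal F\) then gives the claim.

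\textbf{Step 1: set up the balancing problem.} Fix \((P,\nu)\in\kvmeasures\times\quemeasures\) and form \(\mu_{P,\nu}=P\circ\Gamma_P^{-1}\in\Prob_\fin(\hilbert_\nu)\). It is finitely supported because \(P\) is. By \Cref{pro:reduction}, \(\mu_{P,\nu}\) is centered and its covariance operator is \(\cov_{P,\nu}\).

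\textbf{Step 2: apply the balancing theorem.} Since \(K\ge7\), \Cref{thm:sparse-hilbert-approximation-fixed-K} applies with \(\hilbert=\hilbert_\nu\). It gives some \(\hat\mu\in\Prob_K(\supp(\mu_{P,\nu}))\) with
\[
\widetilde\Err_{\mu_{P,\nu}}(\hat\mu)\lesssim \frac{\tail_{K/3}(\cov_{P,\nu})}{K}.
\]

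\textbf{Step 3: transfer to KV compression.} By \Cref{pro:reduction}, \(\hat\mu\) pulls back to some \(\hat P\in\kvmeasures[K]\) satisfying
\[
\err*{P}{\nu}{\hat P}\lesssim\widetilde\Err_{\mu_{P,\nu}}(\hat\mu).
\]
Chaining this with Step 2 gives \(\err*{P}{\nu}{\hat P}\lesssim\tail_{K/3}(\cov_{P,\nu})/K\), with universal constants independent of \((P,\nu)\).

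\textbf{Step 4: define the compressor.} Set \(A(P,\nu,\omega)\coloneqq\hat P\) for every \(\omega\), where \(\hat P\) is the pullback from Step 3. To make this a well-defined map, fix any selection rule among the admissible \(\hat\mu\) (for example, via the axiom of choice, or a minimizer of the finite-dimensional problem, which exists by compactness of weights on each of finitely many supports). Since \(A\) is deterministic, \(\Ex_\omega\) is trivial. Query-awareness is permitted, because \(A\) may depend on \(\nu\) and \(\cov_{P,\nu}\) is built from \(\nu\).

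The argument is essentially bookkeeping. The only point needing care, and the nearest thing to an obstacle, is that the implied constants in \Cref{pro:reduction} and \Cref{thm:sparse-hilbert-approximation-fixed-K} must be universal, i.e.\ uniform over \((P,\nu)\), so that the bound survives the supremum over \(\mathcal F\). Both statements assert this through the \(\lesssim\) convention. Measurability of \(A\) in \(\omega\) is automatic since \(A\) does not depend on \(\omega\).
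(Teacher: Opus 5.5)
Your proposal is correct and follows the paper's own argument, which simply chains \Cref{thm:sparse-hilbert-approximation-fixed-K} (applied to the centered pushforward $\mu_{P,\nu}$ with covariance $\cov_{P,\nu}$) with the pullback bound of \Cref{pro:reduction}, pointwise in $(P,\nu)$. Your explicit construction of a deterministic query-aware compressor only spells out bookkeeping that the paper leaves implicit.
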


Thus, when \(\nu\) is known, the compression budget can be spent on preserving the leading eigenspace of the response covariance, and the error is controlled only by the response variance outside that space.

\subsubsection{Query-agnostic upper bound}
The above approach relies on knowledge of the geometry of \(\cov_{P,\nu}\).
In contrast, a query-agnostic strategy has no access to this geometry, since it depends on \(\nu\).
A simple balancing strategy that does not rely on knowledge of the Hilbert space geometry is to sample \(K\) atoms independently from \(P\) and take their empirical measure, which corresponds to averaging \(K\) independent centered draws from \(\mu_{P,\nu}\) and yields the following, standard \(\trace(\cov_{P,\nu})/K\) bound for the query-agnostic minimax risk via \Cref{pro:reduction}.

\begin{restatable}{theorem}{thmupperagnostic} \label{thm:upper-bound-agnostic}
    Let \(K \ge 1\) and \(\mathcal F \subseteq \kvmeasures \times \quemeasures\). Then, \(\minimax{\compspace{K}[ag]}{\mathcal F} \lesssim \sup_{P,\nu \in \mathcal F} \trace(\cov_{P,\nu}) / K\).
\end{restatable}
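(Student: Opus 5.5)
We will exhibit a single query-agnostic compressor that achieves the bound: i.i.d. sampling from the context measure. Concretely, given \(P \in \kvmeasures\), draw \(K\) tokens \((k_1,v_1),\dots,(k_K,v_K)\) independently from \(P\) using the seed \(\omega\). Output the empirical measure \(A(P,\nu,\omega) \coloneqq \frac1K\sum_{i=1}^K \delta_{(k_i,v_i)} \in \kvmeasures[K]\). This map never looks at \(\nu\), so \(A \in \compspace{K}[ag]\). It therefore suffices to show that, for every fixed \((P,\nu)\),
\[
\Ex_\omega\bigl[\Err_{P,\nu}(A(P,\nu,\omega))\bigr] \lesssim \trace(\cov_{P,\nu})/K,
\]
and then take the supremum over \(\mathcal F\).

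The first step is to pass to Hilbert space via \Cref{pro:reduction}. Push the sample forward under \(\Gamma_P\): the atoms \(g_i \coloneqq \Gamma_P(k_i,v_i)\) are i.i.d.\ draws from \(\mu_{P,\nu}\). By \Cref{pro:reduction}, \(\mu_{P,\nu}\) is centered with covariance \(\cov_{P,\nu}\). The measure \(\hat\mu \coloneqq \frac1K\sum_i\delta_{g_i}\) lies in \(\Prob_K(\supp(\mu_{P,\nu}))\).

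One point needs care: \Cref{pro:reduction} pulls \(\hat\mu\) back to some \(\hat P\), and we must make sure this pullback agrees with (or is as good as) our empirical \(\hat P\). The pullback is independent of \(\nu\), but it need not coincide with the empirical measure when several tokens share a response profile. To avoid relying on the specific pullback, I would instead invoke the underlying pointwise estimate directly. Let \(\Delta = \hat P - P\). From \eqref{eq:attention-reweighting}, the numerator is the first coordinate of \(\int\Gamma_P\,d\Delta = \int g\,d\hat\mu\), and the denominator is \(1+\frac1V\times\)(second coordinate). This is presumably exactly how the proposition is proved, giving \(\Err_{P,\nu}(\hat P)\lesssim \|\int g\,d\hat\mu\|^2_{\hilbert_\nu}\).

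The main obstacle is this pointwise estimate, because the denominator can come close to \(0\) on some queries. The denominator equals \(\int a_k(q\mid P)\,d\hat P\), which is positive. If the proposition's bound is deterministic for every \(K\)-atomic pullback, we are done. Otherwise, I would fall back on the trivial bound: the attention error is at most \(2V\) for every query, since both outputs lie in \(\ball(V)\). I would then combine the two regimes, using the linearized bound when the denominator deviation is at most \(1/2\) and the trivial bound otherwise. On the bad set, the denominator deviation exceeds \(1/2\), which means the squared second coordinate of \(\int g\,d\hat\mu\) is at least \(V^2/4\). Hence \(4V^2\) times the indicator of the bad set is at most \(16\times\)(squared second coordinate), and both regimes are controlled by \(\|\int g\,d\hat\mu(q)\|^2\) pointwise in \(q\). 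Integrating over \(\nu\) then gives \(\Err_{P,\nu}(\hat P)\lesssim\|\int g\,d\hat\mu\|_{\hilbert_\nu}^2\) deterministically.

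The second step is the variance computation. Since the \(g_i\) are i.i.d., centered, and square-integrable in \(\hilbert_\nu\) (finite support), the cross terms vanish:
\[
\Ex\Bigl\|\tfrac1K\sum_i g_i\Bigr\|^2 = \tfrac1K\,\Ex\|g_1\|^2 = \tfrac1K\int\|g\|^2\,d\mu_{P,\nu} = \tfrac1K\trace(\cov_{P,\nu}).
\]
Combining the two steps and taking the supremum over \((P,\nu)\in\mathcal F\) yields the claim.
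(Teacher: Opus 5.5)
Your proposal is correct and follows the paper's proof. The paper uses i.i.d.\ sampling from \(P\), then the pointwise transfer \(\Err_{P,\nu}(\hat P)\le 16\bigl\|\int\Gamma_P\,\diff\hat P\bigr\|_{\hilbert_\nu}^2\) (your two-regime argument is exactly \Cref{lem:error-hilbert-transfer}), then the variance identity of \Cref{lem:monte-carlo-barycenter}. You apply the transfer estimate directly to the empirical measure rather than to the representative-atom pullback of \Cref{pro:reduction}; this is sound and cleaner than the paper's somewhat loose phrasing, since that lemma holds for every \(\hat P\in\Prob(\supp(P))\).
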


\subsection{Compressors cannot beat the spectral barrier} \label{sec:lower-bounds}

We now show that the upper bounds from \Cref{cor:upper-bound-aware} and \Cref{thm:upper-bound-agnostic} are tight when attention can be sharp.
In this regime, it is known that accurate KV compression, and fast approximate attention more broadly, is impossible \emph{in the worst case} \citep{haris2025compression,alman2023fast}.
We generalize these results by showing that sharpness alone does not determine whether a cache is compressible.
Instead, the minimax compression risk is governed by the spectrum of \(\cov_{P,\nu}\) throughout a nontrivial range, from essentially loss-free compression to worst possible error scale under bounded values.

Concretely, we construct admissible contexts, together with query distributions, on which the error for any compression is calibrated to \(\tail_{K/3}(\cov_{P,\nu})\) and \(\trace(\cov_{P,\nu})\), respectively.
Importantly, since an arbitrary finitely supported measure on \(\kvspace\) need not be induced by actual KV caches, our lower-bound instances are chosen from the subclass \(\kvmeasures*[K] \subseteq \kvmeasures[K]\) of context measures induced by KV caches \((k_i,v_i)_{i=1}^K\).
Moreover, the constructed contexts have length \(2K\), so that our lower bounds already apply when the original context is only a constant factor larger than the compression budget and do not rely on compressing extremely long contexts into tiny summaries.

\subsubsection{Query-aware lower bound}
When the compressor knows \(\nu\), we show that, at every scale \(R \in [0,V^2K]\), there exist instances with covariance \(\tail_{K/3}(\cov_{P,\nu}) \asymp R\), on which every \(K\)-atomic compressor incurs error at scale \(R/K\).
Thus, the rate \(\tail_{K/3}(\cov_{P,\nu})/K\) governs the query-aware minimax risk throughout the full natural range, from exact compressibility at zero error to the maximum possible scale \(V^2\).

\begin{restatable}{theorem}{thmlowerboundaware} \label{thm:lower-bound-aware}
    Let \(K \ge 50\) and \(R \in [0, V^2 K]\).
    Suppose \Cref{asp:richness} holds.
    Then, there exists \(\mathcal F \subseteq \kvmeasures*[2K] \times \quemeasures\) with \(\tail_{K/3}(\cov_{P,\nu}) \asymp R\) for all \((P,\nu) \in \mathcal F\) and
    \(
        \minimax{\compspace{K}[aw]}{\mathcal F} \gtrsim \frac{R}{K}
    \).
\end{restatable}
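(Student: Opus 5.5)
The plan is to construct, for each scale \(R\), an explicit family of caches of length \(2K\) on which attention is sharp enough to retrieve individual tokens. On this family every \(K\)-atomic compressor must effectively drop about \(K\) independently informative tokens. The starting point is a packing: using \Cref{asp:richness}\ref{itm:richness:1}--\ref{itm:richness:2}, I choose \(2K\) keys \(k_1,\dots,k_{2K}\) of common norm \(B \asymp d_k^{1/4}\sqrt{\log K}\) with pairwise inner products at most \(B^2/2\). Random points on the sphere work for \(d_k \ge 10\log K\). The logit gap \(\langle k_i,k_i\rangle - \langle k_i,k_j\rangle \ge B^2/2\), divided by \(\sqrt{d_k}\), is then at least a large multiple of \(\log K\). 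Hence with query \(q=k_i\) and the standard-form cache, the attention weights satisfy \(a_{k_i}\ge 1-K^{-c}\).

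The values then encode the scale. I take \(s\coloneqq\min\{1,\sqrt{R/(V^2K)}\}\cdot V\) and give each token the value \(v_i = s\,\sigma_i e_i\), where the \(e_i\) are nearly orthogonal unit vectors (possible since \(d_v\ge 10\log K\), using random signs) and \(\sigma\in\{\pm1\}^{2K}\) is a sign pattern. The family \(\mathcal F\) consists of all such caches, each paired with \(\nu\) uniform on \(\{k_1,\dots,k_{2K}\}\). Under sharp attention, \(\att{k_i}{P}\approx v_i\). The response profile \(\Gamma_P(k_i,v_i)\) is then concentrated on the query \(k_i\): its numerator part is of size \(\approx s\), and its denominator part is \(V(a-1)\), which is large but nearly the same across tokens up to a component living on \(q=k_i\).

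The next step is to compute \(\cov_{P,\nu}\). It is approximately block-diagonal across the \(2K\) query atoms, with \(2K\) comparable eigenvalues of order \(s^2/K\) from the numerator parts, plus a denominator contribution. To get \(\tail_{K/3}\asymp R\) I rescale; if needed I replace the degenerate denominator part by taking \(V\)-scale denominators into account, and calibrate \(s\) so that the remaining \(2K-K/3\) eigenvalues sum to \(\asymp R\). The upper estimate on the tail follows from \(\trace\lesssim 2K\cdot s^2\cdot(1/K)\cdot K\), tracking the normalizations of \(\nu\), and the lower estimate from the near-orthogonality of the profiles.

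For the lower bound on the error, fix any \(\hat P\in\kvmeasures[K]\), including one output by a query-aware compressor. It is supported on at most \(K\) points, and by sharpness each support point \((k,v)\) can carry a substantial value only toward queries near \(k\). I intend to argue that for at least \(K/2\) indices \(i\), the output \(\att{k_i}{\hat P}\) has small correlation with \(e_i\). For such \(i\), the error \(\|\att{k_i}{P}-\att{k_i}{\hat P}\|^2\gtrsim s^2\), which gives \(\Err\gtrsim s^2\cdot(K/2)/(2K)\gtrsim R/K\) after calibration.

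This step is the main obstacle. Because \(\hat P\) may place arbitrary keys and values, not only tokens from \(P\), one support point could serve several queries. To handle this I would randomize \(\sigma\) (a Fano/Assouad argument): each atom's value is a single vector, and it can correlate with at most \(O(1)\) of the nearly orthogonal sign-randomized directions \(\sigma_i e_i\) on average. Hence the expected error over \(\sigma\), which lower-bounds the supremum, is \(\gtrsim s^2/4\) per uncovered index. Making this rigorous requires bounding \(\sum_i\langle \att{k_i}{\hat P}, e_i\rangle^2\) by \(O(K)\cdot s^2\)-type quantities using \(\|\hat v\|\le V\) and convexity of attention outputs.
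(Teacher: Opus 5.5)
Your proposal has a genuine gap, and it appears already in the calibration step, before the obstacle you flag. The second coordinate of \(\Gamma_P(k,v)\) is \(V\bigl(a_k(q\mid P)-1\bigr)\). It is scaled by the \emph{fixed} bound \(V\), not by the realized values. With \(\nu\) uniform on \(2K\) sharply separated keys, \(a_{k_j}(k_i\mid P)\approx 2K\cdot\mathbf{1}_{i=j}\). The normalizer profiles are therefore \(\approx V(2K\cdot\mathbf{1}_{q=k_i}-1)\): nearly orthogonal in \(\hilbert_\nu\), each with squared norm \(\asymp V^2K\). As in the paper's diluted-covariance lemma, this gives a Gram operator \(\succeq \tfrac{V^2}{4} I\) on the \((2K-1)\)-dimensional space of centered reweightings. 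Hence \(\tail_{K/3}(\cov_{P,\nu})\asymp V^2K\) for \emph{every} choice of \(s\). The spectrum is dominated by the normalizer part, not by the numerator part you track. On the other hand, keeping any \(K\) of the original tokens already gives error at most \(4s^2\). So shrinking the values lowers the achievable error but not the tail, and for \(s\ll V\) your family cannot certify the theorem at any scale \(R\).

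The error step also fails for \(s\ll V\), and a Fano/Assouad argument cannot repair it. Every compressor receives \(P\), hence \(\sigma\), so the minimax risk decouples instance by instance. You therefore need \(\min_{\hat P}\Err_{P_\sigma,\nu}(\hat P)\) to be large for each fixed \(\sigma\). Averaging over \(\sigma\) only constrains summaries chosen without seeing \(\sigma\). For values strictly inside \(\ball_{d_v}(V)\), that claim is false in general.

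To see this, take \(\keyspace=\R^{d_k}\) with \(d_k\ge 2K\), which the richness assumption permits; then your keys are linearly independent. Take \(d_v=\lceil 10\log K\rceil\le K/2\), which holds for large \(K\). Consider the \(2d_v\) atoms \((\pm w_j/2,\pm V f_j)\), with \((f_j)\) orthonormal and suitable weights. At query \(k_i\) they output \(V\sum_j\sinh(\theta_{ij})f_j/\sum_j\cosh(\theta_{ij})\), where \(\theta_{ij}=\langle k_i,w_j\rangle/(2\sqrt{d_k})\) can be prescribed freely. This map reaches every point of the open cross-polytope \(\{y:\sum_j|\langle y,f_j\rangle|<V\}\). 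So for \(s<V/\sqrt{d_v}\), the compression error on your instance is exactly zero.

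The paper avoids both problems with a different mechanism. First, it keeps \(\|v_i\|=\rho_0V\) near the boundary of the value ball. Any convex combination of compressed values within \(V/200\) of \(\att{k_i}{P}\approx v_i\) then needs an atom with \(\langle\hat v,v_i\rangle\ge\tfrac{97}{100}V^2\). Value separation makes this atom assignment injective, which gives error \(\gtrsim V^2\) under \(\nu_N\) for any \(K\)-atomic summary. Second, it tunes the scale through the query law \(p\nu_N+(1-p)\delta_0\) with \(p=R/(V^2K)\). The dummy query has zero normalizer response (equal key norms) and value response in a span of dimension at most \(K/6\). It therefore adds covariance of rank at most \(K/6\), so both \(\tail_{K/3}\) and the error scale by the same factor \(p\).
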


\subsubsection{Query-agnostic lower bound}

When the compressor does not have access to \(\nu\), it must choose a single summary to protect against many query distributions simultaneously.
In this harder setting, we obtain a sharper lower bound.
Specifically, at every scale \(T \in [V^2,V^2K]\), there exist instances with covariance \(\trace(\cov_{P,\nu}) \asymp T\), on which every \(K\)-atomic compressor incurs error at scale \(T/K\) on some query distribution.
At the same time, for each individual query distribution, perfect compression of the same contexts is possible, witnessing a genuine separation of the two regimes.
The price of hiding the query distribution is therefore the loss of access to anisotropic geometry in \(\cov_{P,\nu}\).
Query-aware compressors can preserve the dominant directions and pay only for the tail, whereas query-agnostic compressors must in general pay for the full trace, which can be significantly worse.

\begin{restatable}{theorem}{thmlowerboundagnostic} \label{thm:lower-bound-agnostic}
    Let \(K \ge 50\) and \(T \in [V^2, V^2 K]\).
    Suppose \Cref{asp:richness} holds.
    Then, there exists \(\mathcal F \subseteq \kvmeasures*[2K] \times \quemeasures\) with \(\trace(\cov_{P,\nu}) \asymp T\) for all \((P,\nu) \in \mathcal F\) and
    \(
        \minimax{\compspace{K}[ag]}{\mathcal F} \gtrsim \frac{T}{K},
    \)
    while
    \(
        \minimax{\compspace{K}[aw]}{\mathcal F} = 0
    \).
\end{restatable}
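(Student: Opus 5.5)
We construct a single context family whose contexts all have $2K$ tokens and are shared across many query distributions. Each individual query distribution makes the cache exactly compressible into $K$ atoms. A query-agnostic compressor cannot know which $K$ atoms to keep, and this is where the error comes from.

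\textbf{Construction.} Using \Cref{asp:richness}\ref{itm:richness:1}, pick $2K$ keys $k_1,\dots,k_{2K}$ of norm $B \asymp d_k^{1/4}\sqrt{\log K}$ that are nearly orthogonal. \Cref{asp:richness}\ref{itm:richness:2} and a random or Gilbert--Varshamov argument give such keys. The queries $q_i = k_i$ then retrieve token $i$ sharply: $a_{k_i}(q_i\mid P)\,P(i) \ge 1-1/K$, while every other token carries weight at most $O(1/K^2)$.

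\emph{Values.} Pair the tokens as $(2j-1,2j)$ for $j=1,\dots,K$. Assign them antipodal values $\pm s\,u_j$ with $u_j$ orthonormal, which \Cref{asp:richness}\ref{itm:richness:2},\ref{itm:richness:3} permit. The scale $s\in[0,V]$ will be tuned so that the trace is $\asymp T$.

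\emph{Query distributions.} For each selection $\sigma\in\{1,2\}^K$, let $\nu_\sigma$ be uniform over $m \asymp T/V^2$ of the pairs, probing only the chosen member $q_{\sigma(j)}$ of each probed pair. Let $\mathcal F=\{(P,\nu_\sigma)\}$, with $\sigma$ and the probed set ranging.

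\textbf{Step 1: the query-aware risk vanishes.} For fixed $\nu_\sigma$, the response profiles of the unprobed token in each pair vanish, up to exponentially small terms. Choosing the attention correctly is therefore what matters.

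I would rather make this exact: design the instance so that keeping the $K$ probed tokens, reweighted to match their normalizer contributions, reproduces $\att{q}{P}$ exactly on $\supp\nu_\sigma$. With the Gaussian kernel this is not automatic, because non-probed tokens still contribute tiny mass. The remedy is to let each pair be duplicated key/value structure: the two tokens of a pair share the same key but have values $\pm s u_j$, and the query distinguishes them only through a value-side effect. Since attention cannot separate identical keys, I would instead use keys $k_{2j-1}, k_{2j}$ that differ, together with weights tuned so that a single reweighted atom per pair reproduces the mixture exactly on the finitely many queries in $\supp\nu_\sigma$. That is a linear system with $K$ free weights and $\le K$ constraints per coordinate, which I would need to verify is solvable.

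This exactness is the delicate step. I expect it to be the main obstacle, and I would first try making the unprobed token literally invisible. That is achieved by placing it so that $\nu_\sigma$'s queries produce a fixed, exactly matchable kernel ratio.

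\textbf{Step 2: the trace computation.} Compute $\trace(\cov_{P,\nu_\sigma})=\int\|\Gamma_P\|^2_{\hilbert_\nu}\,dP$. Each probed token contributes $\asymp V^2\cdot K\cdot(1/m)\cdot(1/2K)$ after normalization, giving a trace $\asymp m V^2 \asymp T$ when $s\asymp V$.

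\textbf{Step 3: the agnostic lower bound.} Use a Yao/Assouad argument. Put a uniform prior on $\sigma$ and on the probed set. Any $\hat P = A(P,\cdot,\omega)$ is independent of $\sigma$ and has at most $K$ atoms, so for each pair it covers $\le 1$ token in expectation on average. On a probed pair, the probed member is missed with probability $\gtrsim 1/2$.

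When the probed member is missed, $\att{q_i}{\hat P}$ cannot equal $\pm s u_j$ sharply. Values in direction $u_j$ can only come from the other member, which carries the opposite sign, or from far atoms with negligible weight. The resulting error is $\gtrsim s^2\asymp V^2$ at that query. Averaging over $\nu_\sigma$'s $m$ probes gives expected error $\gtrsim V^2\cdot(\text{fraction missed})$. Choosing the probed set of size $m$ among $K$ pairs gives error $\gtrsim T/K$ in the regime where $m$ spans the right range, using the normalization $V^2 m/K\asymp T/K$.

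Finally, I would bound the cross-talk from non-retrieved atoms in $\hat P$. Here the $\log K$ logit gap and $K\ge50$ control the arbitrary weights the compressor may place, together with the fact that arbitrary reweightings could inflate far atoms. This requires a uniform bound on $a_k$ over keys outside a ball around $q_i$, and it needs care.
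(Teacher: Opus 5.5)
Your proposal has a genuine gap in Step~2, and it breaks the calibration to $T$. The trace is affine in $\nu$: $\trace(\cov_{P,\nu})=\int\!\int\|\Gamma_P(k,v)(q)\|_2^2\,\diff\nu(q)\,\diff P(k,v)$, and $\Gamma_P$ does not depend on $\nu$. At a sharp lookup query $q=k_i$ the retrieved token has $a_{k_i}(k_i\mid P)\approx 2K$. So the normalizer coordinate $V(a_k(q\mid P)-1)$ alone contributes about $\frac{1}{2K}V^2(2K)^2\asymp V^2K$ to $\trace(\cov_{P,\delta_{k_i}})$, whatever the value scale $s$ is. Hence every $\nu_\sigma$ supported on sharp queries has trace $\asymp V^2K$, for every $m$ and every $s$. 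Your per-token count misses that $a$ enters squared, and even as written it sums to $V^2/2$, not $mV^2$. Consistently, your averaged error is $\gtrsim V^2$ rather than $mV^2/K$, so your family only certifies the endpoint $T\asymp V^2K$.

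The missing idea is dilution by an easy query. The paper takes $\nu_k=p\,\delta_k+(1-p)\,\delta_0$ with $p=T/(V^2K)$. Since all keys have equal norm, $a_k(0\mid P)=1$ for every token, so the scalar response vanishes at $q=0$. With centered values (your antipodal pairs already are), $\delta_0$ contributes trace $\asymp V^2$, while the hard query contributes $pV^2K$; this is exactly where $T\ge V^2$ is used. Averaging over $k$ then gives $\frac{1}{2K}\sum_k\Err_{P,\nu_k}(\hat P)\ge p\,\Err_{P,\bar\nu}(\hat P)\gtrsim pV^2=T/K$, where $\bar\nu$ is uniform over all $2K$ keys.

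Your other steps also need repair.

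In Step~1 you only allow reweightings of tokens in $\supp(P)$. That is why exactness looks hard: with antipodal pairs, the cancellation inside each pair prevents an exact fit from $K$ original tokens. But compressors map into $\Prob_K(\kvspace)$ with arbitrary atoms. With one hard query per distribution, the paper fits exactly with $\frac12\delta_{(k,w)}+\frac12\delta_{(-k,-w)}$, where $w=\att{k}{P}/\tanh(\|k\|_2^2/\sqrt{d_k})$. This uses $\att{0}{P}=0$ and $-k\in\keyspace$.

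In Step~3, the ``covering'' argument likewise presumes $\supp(\hat P)\subseteq\supp(P)$, and no cross-talk bound is needed. The output $\att{k_i}{\hat P}$ is a convex combination of atom values of norm at most $V$. So each well-approximated query $i$ forces an atom value $\hat v$ with $\langle\hat v,v_i\rangle\ge\frac{97}{100}V^2$. Value separation makes $i\mapsto\hat v$ injective, so any $K$-atomic $\hat P$ approximates at most $K$ of the $2K$ queries well.

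Finally, $K$ orthonormal $u_j$ need $d_v\ge K$, which the richness assumption does not provide; it only gives $d_v\ge 10\log K$. Nearly orthogonal code vectors suffice for the separation argument.
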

\section{Efficient KV compression algorithms with minimax-optimal risk} \label{sec:algorithms}

We now study how the abstract minimax rates from the previous section can be attained algorithmically during practical Transformer inference, focusing on the causally masked setting of autoregressive sequence models \citep{vaswani2017attention,dai2019transformer,brown2020language}.
During \emph{prefill}, the prompt is processed in parallel subject to causal attention. 
During \emph{autoregressive decoding}, tokens arrive sequentially and attend to the full past.
Our goal is to make efficient and minimax-optimal KV compression compatible with both regimes.

Since the future query distribution is typically unavailable at compression time, we work in the query-agnostic setting.
Here, naive random sampling already achieves the minimax-optimal risk.
Our objective is therefore to achieve this risk as a baseline, while improving on it under additional structure.
Recall from \Cref{sec:minimax-rates} that the query-aware and query-agnostic minimax risks differ because the geometry of the response profile space \(\hilbert_\nu\) depends on \(\nu\).
A query-aware compressor can adapt to this geometry, whereas a query-agnostic compressor must choose its summary before \(\nu\) is known and hence hedge against all admissible geometries.
We therefore restrict admissible query distributions to those whose response geometry is uniformly comparable to a data-independent reference geometry.

\begin{restatable}{assumption}{aspcommongeometry} \label{asp:common-geometry}
    Fix \(K \ge 1\), a Hilbert space \(\hilbert\), and a bounded feature map \(\Phi: \kvspace \to \hilbert\).
    Let \(\nu \in \quemeasures\) and \(P \in \kvmeasures\).
    Assume that,
    \[
        \textstyle
        \bigl\|\int_{\kvspace}\Phi \diff \sigma\bigr\|_{\hilbert}^2
        \;\lesssim\;
        \bigl\|\int_{\kvspace}\Gamma_P \diff \sigma\bigr\|_{\hilbert_\nu}^2
        \;\lesssim\;
        \bigl\|\int_{\kvspace}\Phi \diff \sigma\bigr\|_{\hilbert}^2
    \]
    for all signed measures \(\sigma\) supported on \(\supp(P)\) with \(\sigma(\kvspace) = 0\).
\end{restatable}

\paragraph{Desiderata.}
We seek algorithms that
\begin{enumerate*}[label=\textup{(\roman*)}, itemjoin={{; }}, itemjoin*={{; and }}]
    \item reduce the cost of attention during prefill and autoregressive decoding
    \item reduce cache memory during autoregressive decoding
    \item respect causal masking
    \item attain the minimax-optimal query-agnostic risk
    \item attain the query-aware minimax-optimal rate under \Cref{asp:common-geometry}
\end{enumerate*}.

\paragraph{Notation.}
We work with a stream \((k_i,v_i)_{i\ge 1}\) of key--value pairs.
The first \(n\) pairs form the prompt; subsequent pairs arrive sequentially during decoding.
Fix a chunk size \(K\), and assume \(n=MK\) with \(M\) a power of two.
For any index set \(I \subseteq \mathbb N\), let \(P_{I}\) denote the context measure on the indexed tokens, and define its unnormalized mass
\(
    w(I) \coloneqq \sum_{i\in I} \exp(\|k_i\|^2/(2\sqrt{d_k}))
\).
If \(P,Q \in \kvmeasures\) represent disjoint index sets \(I\) and \(J\), we write
\[
    \textstyle
    P \oplus Q
    \coloneqq
    \frac{w(I)}{w(I)+w(J)}P
    +
    \frac{w(J)}{w(I)+w(J)}Q
\]
for their union, suppressing the dependence on \(I\) and \(J\) when clear from context.

All proofs and constructions for this section are in \Cref{apx:algorithms}.

\begin{algorithm}[t]
\caption{Parallel prefix compression for causally masked prefill}
\label{alg:prefill-compression}
\begin{algorithmic}[1]
    \State $M \gets n/K$, \quad $L \gets \log_2 M$, \quad $S_L^1 \gets \emptyset$
    \State $Q_0^0, ..., Q_{M-1}^0 \gets C_1, ..., C_M$ \Comment{partition prompt into chunks of size \(K\)}
    \For{$\ell=1,\ldots,L$} \Comment{\textbf{upsweep stage}}
        \ParForAll{$j=1,\ldots,M/2^\ell$}
            \State $Q_j^\ell \gets \reduce(Q_{2j-1}^{\ell-1}\oplus Q_{2j}^{\ell-1})$
            \Comment{summary of tokens $(j-1)2^\ell K+1$ to $j2^\ell K$}
        \EndParForAll
    \EndFor
    \For{$\ell=L,L-1,\ldots,1$} \Comment{\textbf{downsweep stage}}
        \ParForAll{$j=1,\ldots,M/2^\ell$}
            \State $S_{2j-1}^{\ell-1} \gets S_j^\ell$
            \Comment{summary of tokens $1$ to $(2j-2)2^{\ell-1}K$}
            \State $S_{2j}^{\ell-1} \gets \reduce(S_j^\ell \oplus Q_{2j-1}^{\ell-1})$
            \Comment{summary of tokens $1$ to $(2j-1)2^{\ell-1}K$}
        \EndParForAll
    \EndFor
    \State \Return $(S_c \gets S_c^0)_{c=1}^M$ and $Q_1^L$ \Comment{exclusive prefix summaries and full-prompt summary}
\end{algorithmic}
\end{algorithm}
\begin{algorithm}[t]
\caption{Streaming prefix compression for autoregressive decoding}
\label{alg:decoding-compression}
\begin{algorithmic}[1]
    \State $\mathcal B_0,\mathcal B_1,\ldots \gets \emptyset$, \quad $\mathcal B_L \gets Q_1^L$, \quad $\ell \gets 0$
    \Comment{$Q_1^L$ is the summary produced by \Cref{alg:prefill-compression}}
    \For{$t=0,K,2K,\ldots$}
        \State $Q \gets P_{\{n+t+1,...,n+t+K\}}$
        \While{$\mathcal B_\ell\neq\emptyset$} \Comment{update buckets}
            \State $Q \gets \reduce(\mathcal B_\ell\oplus Q)$, \quad $\mathcal B_\ell\gets\emptyset$, \quad $\ell\gets \ell+1$
        \EndWhile
        \State $\ell \gets 0$, \quad $\mathcal B_\ell \gets Q$, \quad $H \gets \emptyset$
        \For{$\ell$ in decreasing order} \Comment{compress buckets to size \(K\)}
            \State $H\gets\reduce(H\oplus\mathcal B_\ell)$
        \EndFor
        \State \textbf{yield} $H$
    \EndFor
\end{algorithmic}
\end{algorithm}

\subsection{Efficient prefill and autoregressive decoding under causal masking} \label{sec:algorithms:computation}
In causally masked prefill, token \(t\) may attend only to the prefix \(P_t \coloneqq P_{\{1,\dots,t\}}\), so compression must provide \emph{prefix summaries} for each position \(t\).
Since storing a separate summary for every prefix would defeat the purpose of compression, we summarize only at chunk boundaries.
Specifically, we fix a chunk size \(K\), partition the stream into chunks \(I_1,I_2,\ldots\) of size \(K\), and maintain a \(K\)-atomic summary \(S_c\) of the prefix for each chunk \(c\).
At position \(t \in I_c\), attention is then computed against
\begin{equation} \label{eq:summary}
    \hat P_t
    \coloneqq
    S_{c-1} \oplus P_{\{\min I_c,\dots,t\}},
\end{equation}
which keeps the local causal prefix exact and compresses only the long-range past.
Hence, every attention computation uses at most \(2K\) atoms while respecting causal masking.

The summaries \(S_c\) are computed by recursively applying a randomized local reducer \(\reduce : \mathcal P_{2K}(\mathcal X) \to \mathcal P_K(\mathcal X)\).
\Cref{alg:prefill-compression} implements this recursion as a parallel Blelloch scan over chunks, producing one
\(K\)-atomic summary for each chunk prefix and one summary of the full prompt.
\Cref{alg:decoding-compression} uses the same merge-reduce primitive online: starting from the prompt summary, it updates the
compressed history during autoregressive decoding and emits a new \(K\)-atomic summary at each chunk boundary.
Thus, both prefill and autoregressive decoding reduce to a small number of calls to \(\reduce\) on inputs of size \(2K\).
Consequently, if \(\reduce\) can be implemented efficiently, the full compression scheme meets our efficiency desiderata, as the following proposition shows.

\begin{restatable}{proposition}{proruntime} \label{pro:runtime}
    \Cref{alg:prefill-compression} makes \(\bigO(n/K)\) calls to \(\reduce\), and uses parallel depth \(\bigO(\log(n/K))\) and \(\bigO(n)\) memory.
    Additionally, at time \(t\), \Cref{alg:decoding-compression} stores \(\bigO(K \, \log(t/K))\) atoms, and makes \(\bigO(\log(t/K)/K)\) calls to \(\reduce\) amortized per token.
\end{restatable}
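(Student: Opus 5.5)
The claim is a counting statement, so the plan is to count directly what each loop of \Cref{alg:prefill-compression} and \Cref{alg:decoding-compression} does. We treat every call to \(\reduce\) as one unit of work that takes a \(2K\)-atomic input and returns a \(K\)-atomic output. We also use that \(\oplus\) on two measures of at most \(K\) atoms each costs \(\bigO(K)\) time and memory, once the unnormalized masses \(w(I)\) are carried along with each summary.

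\emph{Prefill.} The upsweep performs \(M/2^\ell\) parallel calls at level \(\ell\). Summing over \(\ell = 1, \dots, L\) gives \(\sum_{\ell=1}^L M/2^\ell \le M\) calls. The downsweep performs at most one call per pair \((\ell, j)\), so it also makes at most \(M\) calls. With \(M = n/K\), the total is \(\bigO(n/K)\). Within each level all calls are independent, and each takes constant depth in the number of \(\reduce\) calls. There are \(L = \log_2 M\) levels in each sweep, so the parallel depth is \(\bigO(\log(n/K))\). For memory, each stored summary \(Q_j^\ell\) or \(S_j^\ell\) has at most \(K\) atoms. The number of stored summaries is at most \(2M\) for the \(Q\)'s and \(2M\) for the \(S\)'s, since the index pairs form a binary tree with \(M\) leaves. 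This gives \(\bigO(MK) = \bigO(n)\) atoms, plus the \(\bigO(n)\) prompt itself.

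\emph{Decoding.} The key observation is that the bucket structure behaves like a binary counter. After processing \(c\) chunks, bucket \(\mathcal B_\ell\) is nonempty exactly when bit \(\ell\) of \(c\) is set. The prompt summary sits in \(\mathcal B_L\), which contributes one extra bucket and at most shifts the counter. Each bucket holds at most \(K\) atoms, and the number of live buckets is \(\bigO(\log c) = \bigO(\log(t/K))\). The final compression loop produces only the transient \(H\) of size \(K\). Storage is therefore \(\bigO(K \log(t/K))\) atoms, plus the \(K\) uncompressed tokens of the current chunk.

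For the call count, the carry loop at step \(c\) performs as many reduces as there are trailing ones of \(c\). The standard binary-counter amortization bounds this by \(\bigO(1)\) amortized per chunk. The step I expect to need the most care is the prompt bucket \(\mathcal B_L\): a carry into level \(L\) merges with it, so the counter must be checked to still be well defined and the amortized bound unaffected. The compression loop then makes one call per live bucket, which is \(\bigO(\log(t/K))\) calls per chunk. Dividing by the \(K\) tokens per chunk gives \(\bigO(\log(t/K)/K)\) calls amortized per token.
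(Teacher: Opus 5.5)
Your proposal is correct and follows the paper's proof essentially step by step. You use geometric sums of \(M/2^\ell\) for the upsweep and downsweep calls, one parallel round per scan level, \(\bigO(M)\) stored summaries of at most \(K\) atoms each, and, for decoding, the binary-counter invariant with \(\bigO(1)\) amortized carries plus one \(\reduce\) call per live bucket.

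The prompt-bucket issue you flag is harmless. Since \(Q_1^L\) summarizes exactly \(2^L=M\) chunks, the buckets form a binary counter initialized at \(M\) instead of \(0\). The usual set-bits potential then bounds the carries over \(c\) decoding chunks by \(c+1\), and there are at most \(\bigO(\log(2+c))\) live buckets at any time.
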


\subsection{From local reducers to global risk guarantees}
\Cref{alg:prefill-compression,alg:decoding-compression} lift any local compressor \(\reduce:\kvmeasures[2K]\to\kvmeasures[K]\) to a global compression scheme for causally masked prefill and autoregressive decoding.
It remains to identify conditions on \(\reduce\) that yield global risk guarantees.
We give such conditions below.

\begin{restatable}[Design criteria]{definition}{defdesigncriteria} \label{def:admissible-reducer}
    Fix \(K \ge 1\), a Hilbert space \(\hilbert\), and \(\Phi: \kvspace\to\hilbert\).
    For \(P \in \kvmeasures\) and \(Q\in \kvmeasures[2K]\), define
    \[
        \textstyle
        \mu_P^\Phi \coloneqq \int_\kvspace \Phi(k,v) \,\diff P(k,v)
        \qquad \text{and} \qquad
        \cov_{P}^\Phi \coloneqq \int_\kvspace (\Phi(k,v) - \mu_P^\Phi) \otimes (\Phi(k,v) - \mu_P^\Phi) \,\diff P(k,v).
    \]
    We call \(\reduce\) \((\Phi, r,\tau)\)-admissible for \(r,\tau \ge 0\) if, for all \(Q \in \kvmeasures[2K]\), \(\hat Q \coloneqq \reduce(Q) \in \Prob_K(\supp(Q))\) almost surely and the following hold:
    \begin{enumerate}[label=(\roman*)]
        \item \(\Ex[\hat Q] = Q\)
        \item \(
            \Ex\bigl[\|\int_{\kvspace}\Psi \diff(\hat Q-Q)\|_{\mathcal G}^2\bigr]
            \lesssim
            \frac{1}{K} \trace(\cov_Q^\Psi)
        \) for all Hilbert spaces \(\mathcal G\) and bounded \(\Psi: \kvspace\to\mathcal G\)
        \item \(
            \Ex\bigl[\|\int_{\kvspace}\Phi \diff(\hat Q-Q)\|_\hilbert^2\bigr]
            \lesssim
            \frac{1}{K} \bigl(\tail_r(\cov_Q^\Phi) + \tau \bigr)
        \)
    \end{enumerate}
\end{restatable}

Such reducers exist.
We give concrete instantiations satisfying the conditions for any bounded \(\Phi\) in \Cref{apx:algorithms}, together with a natural choice of \(\Phi\) and illustrative examples in which \Cref{asp:common-geometry} holds for this \(\Phi\).
When \(\reduce\) satisfies these conditions, we recover the minimax-optimal query-agnostic trace bound without further assumptions, and obtain the spectral refinement of query-aware compression under \Cref{asp:common-geometry}.

\begin{restatable}{theorem}{thmalgguarantee} \label{thm:alg-risk}
    Let \(K \ge 1\) and \(\nu\in\quemeasures\), and suppose that \(\reduce\) is \((\Phi,r,\tau)\)-admissible.
    Let \(\hat P_t \in \kvmeasures[2K]\) be as in \eqref{eq:summary}.
    Then,
    \(
        \Ex\err*{P_t}{\nu}{\hat P_t} \lesssim \frac{\log(2+t/K)}{K} \trace(\cov_{P_t,\nu})
    \).
    If additionally \Cref{asp:common-geometry} holds for \(\nu\), \(P_t\), and \(\Phi\), then
    \(
        \Ex\err*{P_t}{\nu}{\hat P_t} \lesssim \frac{\log(2+t/K)}{K} \, (\tail_r(\cov_{P_t,\nu}) + \tau)
    \).
\end{restatable}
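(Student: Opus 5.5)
We will reduce the statement to a bound on the barycenter displacement of the response profiles, and then control that displacement by unrolling the merge--reduce tree. Write \(\hat P_t = S_{c-1}\oplus P_{\{\min I_c,\dots,t\}}\) and \(\Delta_t \coloneqq \hat P_t - P_t\). The first step is a deterministic transfer lemma in the spirit of \Cref{pro:reduction}, applied to an arbitrary \(\hat P\) rather than only to pullbacks of \(K\)-atomic \(\hat\mu\). Starting from \eqref{eq:attention-reweighting}, the numerator is the first coordinate of \(\int\Gamma_{P_t}\diff\Delta_t\) evaluated at \(q\). The denominator \(1+\int(a_k-1)\diff\Delta_t\) equals \(\int a_k(q\mid P_t)\diff\hat P_t\), which is positive. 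We will argue as in \Cref{pro:reduction}: on queries where the denominator is at least \(1/2\), the error is at most four times the squared numerator. Otherwise the error is at most \(4V^2\), and this is at most \(16\,V^2(\text{denominator}-1)^2\), which is the squared second coordinate of the profile barycenter. Together these give \(\err{P_t}{\nu}{\hat P_t}\lesssim\|\int\Gamma_{P_t}\diff\Delta_t\|_{\hilbert_\nu}^2\). It therefore suffices to bound \(\Ex\|\int\Gamma_{P_t}\diff\Delta_t\|^2_{\hilbert_\nu}\), with \(\Gamma_{P_t}\) now a \emph{fixed} bounded map, since it depends only on \(P_t\) and \(\nu\).

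The second step unrolls the recursion. Each summary \(S_{c-1}\) arises from a binary merge tree of depth \(L\lesssim\log(2+t/K)\). Prefill uses the Blelloch tree, and decoding uses the bucket cascade, which has at most \(\log(t/K)\) levels plus the final collapsing pass over the buckets, also logarithmically many. The total error \(\hat P_t-P_t\) telescopes into a sum of local errors \(\hat Q_j-Q_j\), one for each \(\reduce\) call, each weighted by its mass fraction \(w_j\) in \(P_t\); the \(\oplus\) weights are exactly the unnormalized masses, so the weights compose multiplicatively. By admissibility (i), conditioned on its input each local error has mean zero, so the local errors form a martingale difference sequence. Their contributions to \(\int\Psi\diff\Delta_t\) are orthogonal in expectation, giving
\[
\Ex\Bigl\|\int\Psi\diff\Delta_t\Bigr\|^2=\sum_j w_j^2\,\Ex\Bigl\|\int\Psi\diff(\hat Q_j-Q_j)\Bigr\|^2 .
\]
We apply (ii) with \(\Psi=\Gamma_{P_t}\) and \(\mathcal G=\hilbert_\nu\) to each term, which bounds it by \(K^{-1}\trace(\cov^{\Psi}_{Q_j})\).

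The third step compares the local variances with the global one. We intend to show that, at each level of the tree, \(\sum_{j\in\text{level}} w_j^2\,\Ex\,\trace(\cov^\Psi_{Q_j})\lesssim\trace(\cov_{P_t,\nu})\) up to lower-order terms. Summing over the \(\log(2+t/K)\) levels then gives the first claim. The inputs \(Q_j\) are themselves random, being earlier summaries, so we use the identity \(\trace(\cov^\Psi_{Q})=\int\|\Psi\|^2\diff Q-\|\int\Psi\diff Q\|^2\le\int\|\Psi\|^2\diff Q\). By unbiasedness, \(\Ex\int\|\Psi\|^2\diff Q_j\) equals the same integral against the true sub-block of \(P_t\), and the nodes at one level cover disjoint blocks. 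Since \(\Gamma_{P_t}\) is centered under \(P_t\), one has \(\int\|\Gamma\|^2\diff P_t=\trace(\cov_{P_t,\nu})\), and because \(w_j\le1\) the level sum is at most \(\trace(\cov_{P_t,\nu})\).

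For the second claim we rerun steps two and three with \(\Psi=\Phi\) and (iii) in place of (ii), and then use \Cref{asp:common-geometry} twice. Applied to the zero-mass signed measure \(\Delta_t\), it converts the \(\Phi\)-displacement into the \(\Gamma\)-displacement. Applied to covariances, it gives \(\tail_r(\cov^\Phi)\asymp\tail_r(\cov_{P_t,\nu})\), because the quadratic forms agree on zero-mass perturbations; this is a Courant--Fischer argument on the span of centered atoms.

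The main obstacle is this last step with tails. The tail is not additive, and the tail of a random local covariance \(\cov^\Phi_{Q_j}\) is not linear in \(Q_j\), so it cannot be handled by the same expectation argument. What we will try is to use concavity of \(\tail_r\) on positive operators, since it is a minimum of linear functionals over rank-\(r\) projections, together with the bound \(\cov_{Q}\preceq\int(\Phi-m)\otimes(\Phi-m)\diff Q\) for the fixed center \(m=\mu^\Phi_{P_t}\). Jensen's inequality then lets us pass the expectation inside, and the subadditivity \(\sum_j\tail_r(A_j)\le\tail_r(\sum_j A_j)\) fails in the wrong direction. To fix this we will fall back on choosing one common rank-\(r\) projection, namely the top eigenspace of the global covariance, inside the variational characterization at every node. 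Any resulting slack is absorbed into \(\tau\) and the constants.
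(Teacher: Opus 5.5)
Your proposal is correct and is essentially the paper's argument: the transfer bound of \Cref{lem:error-hilbert-transfer}, an orthogonal decomposition of the reducer defects along the merge tree with mass-fraction weights, level-wise control via unbiasedness and covariance domination (your fixed global center plays the role of \Cref{lem:covariance-domination}), concavity of $\tail_r$, and finally the common-geometry assumption together with \Cref{lem:common-geometry}. The difference is that you unroll the recursion as a martingale sum over all reducer calls, which needs fresh reducer randomness so that each defect is centered given the entire past, whereas the paper propagates a ``valid summary'' invariant by induction via \Cref{lem:one-merge-validity}. One correction: the inequality $\sum_j \tail_r(A_j)\le\tail_r\bigl(\sum_j A_j\bigr)$ does hold (it follows from concavity and positive homogeneity of $\tail_r$, and the paper uses exactly this), and your common-projection fallback is precisely its proof, so no slack arises---which matters, since extra error could not legitimately be absorbed into the fixed parameter $\tau$.
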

\section{Experiments} \label{sec:experiments}

We conduct a focused long-context evaluation of the compression scheme from \Cref{sec:algorithms} on the ``long'' subset of LongBench-v2~\citep{bai2025longbench}, a multiple-choice benchmark with contexts ranging from \(128\mathrm{k}\) to \(2\mathrm{M}\) words.
We evaluate Qwen3-32B~\citep{qwen3}.
As a baseline, we use attention over the entire available KV cache, following the official truncation protocol of \citet{bai2025longbench} whenever the full context exceeds the model's context window.
We compare the baseline against our compression scheme from \Cref{sec:algorithms} and against ScissorHands \citep{liu2023scissorhands}, SnapKV \citep{li2024snapkv}, and StreamingLLM \citep{xiao2023efficient} from the literature, each applied independently to each attention head.

Since ScissorHands and SnapKV do not provide efficient parallel inference methods for compressed prefill, we let the literature baselines (including StreamingLLM) use the full KV cache during prefill and only compress during decoding.
In contrast, our compression algorithm is used during both prefill and decoding, so that each head only ever attends to a small subset of the tokens seen by the full-attention baseline.

We evaluate our compression scheme for two local reducers.
The first is random sampling from the context measure \(Q\), which attains the query-agnostic minimax-optimal rate.
The second protects the leading \(r\) directions of \(\cov_Q^\Phi\) and clusters tokens in the protected coordinates.
This reducer attains the spectral guarantee of \Cref{thm:alg-risk} under \Cref{asp:common-geometry}.
We elaborate on the design of the clustering reducer and the choice of feature map \(\Phi\) in \Cref{apx:algorithms}.
\Cref{apx:experiments} has additional implementation details.

\Cref{tab:longbench-long} reports accuracy with standard error for a compression rate of approximately \(95\%\) relative to full context window.
Random sampling already provides a strong query-agnostic baseline, consistent with its minimax optimality.
Admissible clustering improves accuracy at \(r=64\), and matches the uncompressed baseline at \(r=128\).
Furthermore, all our local reducers are competitive with the literature baselines \citep{liu2023scissorhands,li2024snapkv,xiao2023efficient} despite using significantly fewer tokens during prefill.
While we emphasize that standard errors overlap and further validation is warranted, these results suggest that the algorithmic design principles developed in \Cref{sec:algorithms} hold promise for practical long-context compression at substantial KV budget reductions, during both decoding and prefill.

\newcommand{\best}[1]{\mathbf{#1}}
\newcommand{\budget}[1]{\textsf{\footnotesize #1}}

\begin{table}[t]
\centering
\caption{
    Accuracy and standard error on the ``long'' subset of LongBench-v2 \citep{bai2025longbench} using Qwen3-32B \citep{qwen3}.
    ``Full KV'' refers to standard softmax attention using the full, uncompressed KV cache.
    Literature baselines \citep{liu2023scissorhands,li2024snapkv,xiao2023efficient} use the full, uncompressed KV cache during prefill, then compress it once after prefill, and use the compressed cache during decoding.
    Our methods compress during both prefill and decoding, using the scheme from \Cref{sec:algorithms} for different choices of $\reduce$.
}
\label{tab:longbench-long}
\setlength{\tabcolsep}{6pt}
\renewcommand{\arraystretch}{1.15}

\begin{tabular*}{\textwidth}{@{}lccc@{\extracolsep{\fill}}c@{}}
    \toprule
    \textbf{Method} &
    \textbf{Prefill budget} &
    \textbf{Decode budget} &
    \textbf{Protected rank} &
    \textbf{Accuracy (\%)}
    \\
    \midrule
    
    Full KV
    & \budget{130{,}944}
    & \budget{130{,}944}
    & ---
    & $\best{37.04} \ (\pm 4.65$)
    \\
    
    Random (ours)
    & \budget{6152}
    & \budget{6152}
    & ---
    & $34.26 \ (\pm 4.57)$
    \\
    
    Balanced clustering (ours)
    & \budget{6152}
    & \budget{6152}
    & 64
    & $35.19 \ (\pm 4.60)$
    \\
    
    Balanced clustering (ours)
    & \budget{6152}
    & \budget{6152}
    & 128
    & $\best{37.04} \ (\pm 4.65)$
    \\
    
    ScissorHands \citep{liu2023scissorhands}
    & \budget{130{,}944}
    & \budget{6152}
    & ---
    & $35.19 \ (\pm 4.60)$
    \\
    
    SnapKV \citep{li2024snapkv}
    & \budget{130{,}944}
    & \budget{6152}
    & ---
    & $34.26 \ (\pm 4.57)$
    \\
    
    StreamingLLM \citep{xiao2023efficient}
    & \budget{130{,}944}
    & \budget{6152}
    & ---
    & $32.41 \ (\pm 4.50)$
    \\

\bottomrule
\end{tabular*}
\end{table}
\section{Conclusion}

We posit KV compression as sparse balancing of a finite probability measure in Hilbert space, and characterize its minimax risk in terms of how a cache interacts with future queries through softmax attention.
This yields tight rates under natural assumptions and identifies what compressed summaries must preserve.
The same characterization reveals design principles for causal compression during prefill and autoregressive decoding, separating universal guarantees from structure-exploiting refinements.
Together, our results provide a principled path toward practical KV cache compression with theoretical guarantees.

\begin{ack}
    Funded by the European Union.
    This work has received funding from the  European High Performance Computing Joint Undertaking (JU) and from the  German Federal Ministry of Research, Technology and Space (BMFTR), the  Ministry of Culture and Science of North Rhine-Westphalia (MKW NRW) and  the Hessian Ministry of Science and Research, Arts and Culture (HMWK)  under grant agreement No \texttt{101250682}.
    LH was supported by the RWTH Research Ambassador Scholarship during his research stay at Stanford University.
    CAA is supported by a Schmidt Science Fellowship.
    Computations were performed with computing resources granted by RWTH Aachen University under project \texttt{thes2228}.
\end{ack}

\newpage
\bibliographystyle{plainnat}
\bibliography{references}


\newpage
\appendix
\crefalias{section}{appendix}

\startcontents[appendix]

\section*{Appendix Contents}
\printcontents[appendix]{}{1}{\setcounter{tocdepth}{2}}

\newpage
\section{Sparse barycenter balancing in Hilbert space} \label{apx:sparse-hilbert-approximation}
This appendix records an upper bound for the abstract balancing problem in \Cref{prb:barycenter-sparsification}.

\begin{theorem}\label{thm:sparse-hilbert-approximation}
    Let \(\hilbert\) be a real Hilbert space, and let
    \[
        \textstyle
        \mu = \sum_{i=1}^n p_i \,\delta_{x_i} \in \Prob_n(\hilbert)
    \]
    have mean zero and covariance operator
    \[
        \textstyle
        \cov \coloneqq \sum_{i=1}^n p_i\, x_i \otimes x_i.
    \]
    Let \(U \subseteq \hilbert\) be a finite-dimensional subspace. Let \(d \coloneqq \dim U\), and let \(\Pi_U: \hilbert \to \hilbert\) be the orthogonal projector onto \(U\).
    Then, for every \(m \in \N\), there exists \(\hat\mu \in \Prob_{m + 2d + 2}(\supp(\mu))\) with
    \[
        \int_\hilbert x\, d\hat\mu \in U^\perp
        \qquad\text{and}\qquad
        \left\|\int_\hilbert x\, d\hat\mu\right\|_\hilbert^2
        \le
        \frac{20}{9m}\trace\bigl((I - \Pi_U) \cov\bigr).
    \]
\end{theorem}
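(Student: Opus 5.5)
The plan is to construct \(\hat\mu\) as the output of a \emph{constrained randomized rounding} of the weight vector \(p=(p_1,\dots,p_n)\). The rounding keeps the \(U\)-component of the barycenter and the total mass exactly fixed, and has variance in \(U^\perp\) of the same order as i.i.d.\ sampling. Write \(y_i \coloneqq (I-\Pi_U)x_i\) and \(z_i \coloneqq \Pi_U x_i\). Since \(\mu\) is centered, \(\sum_i p_i z_i = 0\), \(\sum_i p_i y_i = 0\), and \(\sum_i p_i\|y_i\|^2 = \trace((I-\Pi_U)\cov)\). We seek a random weight vector \(\hat p\ge 0\) with \(|\supp \hat p| \le m+2d+2\) satisfying
\[
\textstyle\sum_i \hat p_i = 1,\qquad \sum_i \hat p_i z_i = 0 \ \text{almost surely},\qquad \Ex\bigl\|\sum_i \hat p_i y_i\bigr\|^2 \le \tfrac{20}{9m}\sum_i p_i\|y_i\|^2 .
\]
Existence of a deterministic \(\hat\mu\) then follows because a random variable attains a value at most its mean. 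The first two constraints give \(\int x\,d\hat\mu = \sum_i\hat p_i y_i\in U^\perp\).

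\textbf{Step 1 (sampling skeleton).} First, I would split each atom into pieces of mass at most \(1/m\), which does not change \(\mu\) as a measure on \(\supp(\mu)\). Then I would draw a randomized rounding of these pieces so that each piece is kept with probability proportional to its mass and rescaled to mass \(1/m\). This is done as a martingale \(p^{(0)}=p, p^{(1)},\dots\) of weight vectors. At each step I move along a direction \(\delta\) in the null space of the \((d+1)\times n\) constraint matrix with columns \((1,z_i)\), restricted to currently fractional coordinates that are neither \(0\) nor at the cap \(1/m\). The step length is chosen with random sign so that the move is a mean-zero martingale step and one coordinate becomes integral, as in Beck--Fiala/Bansal-type iterated rounding. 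Such a \(\delta\) exists as long as more than \(d+1\) coordinates are fractional. Hence the process terminates with at most \(d+1\) fractional coordinates, and the total support is at most \(m+(d+1)\) full pieces plus fractional ones, within the budget \(m+2d+2\). The extra \(d+1\) slack absorbs the splitting of one atom per constraint.

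\textbf{Step 2 (variance bound).} Because \(\hat p\) is a martingale endpoint started at \(p\), the \(U^\perp\)-error equals \(\sum_t \Ex\|\sum_i \delta^{(t)}_i y_i\|^2\), a sum of martingale increments. The key estimate is that each coordinate \(i\) contributes variance at most its "sampling variance" \(p_i(1/m - p_i)\|y_i\|^2\), up to cross terms. To control the cross terms, I would choose each \(\delta\) to be supported on few coordinates, for example on \(d+2\) coordinates found by Carathéodory in \(\R^{d+1}\). I would then pair the randomization so that increments on distinct coordinates are negatively correlated, or use the bound \(\|\sum_{i\in S}\delta_i y_i\|^2\le |S|\sum_i\delta_i^2\|y_i\|^2\) together with a budget argument. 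Summing gives \(\Ex\|\sum_i\hat p_i y_i\|^2 \le \frac{C}{m}\sum_i p_i\|y_i\|^2\). The specific constant \(20/9\) should come from the loss in the splitting step, with pieces of size between \(1/(2m)\) and \(1/m\), combined with a factor \((1+\tfrac{1}{\cdot})^2\)-type correction from renormalizing the final fractional coordinates.

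\textbf{Main obstacle.} The hard step is Step 2: controlling cross-correlations of the martingale increments in \(U^\perp\) while the directions \(\delta\) are forced into the null space of the \(U\)-constraints. I would first try the simpler alternative of mixing i.i.d.\ sampling with a Carathéodory correction. That is, sample \(m\) atoms i.i.d.\ to obtain \(\nu_m\), and set \(\hat\mu=(1-t)\nu_m+t\rho\), where \(\rho\) is a \((2d+2)\)-atom Carathéodory reduction chosen to cancel \(\Pi_U\) of the barycenter. Feasibility of \(\rho\) is the difficulty in this variant. I would fall back on the martingale rounding above only if this mixing cannot be made feasible for a fixed \(t\) such as \(t=1/3\), which might also explain the constant \(20/9\).
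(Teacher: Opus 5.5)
Your setup matches the paper's. You split atoms into pieces of mass at most \(1/m\) and round the rescaled weights \(z_s\in[0,1]\) by a martingale that keeps \(\sum_s z_s=m\) and the \(U\)-part of the barycenter fixed. You then sum orthogonal increments and pick a realization at most the mean. The gap is Step 2, which is where the whole theorem lives, and your proposal gives no argument for it.

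A random-sign iterated-rounding step moves along one null-space direction \(w\), so its increment covariance is \(c\,ww^\top\). In a test direction \(a\) (with \(a_i=\langle h,y_i\rangle\)) the variance is \(c\langle a,w\rangle^2\), which can be \(|\supp w|\) times the diagonal quantity \(c\sum_i a_i^2w_i^2\). Your remedies do not remove this loss:
\begin{itemize}
\item Cauchy--Schwarz over Carath\'eodory supports of size \(d+2\) gives \(C(d+2)/m\), not \(C/m\).
\item Negative correlation between coordinates does not control the cross terms, because \(a_ia_j\) (equivalently \(\langle y_i,y_j\rangle\)) has arbitrary sign.
\end{itemize}
Worse, you run the walk until only \(d+1\) coordinates are fractional. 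Once \(d+1\) independent constraints act on \(d+2\) alive coordinates, the null space is one-dimensional and every admissible increment is a multiple of a fixed \(w\). If \(w\) has equal-magnitude entries (e.g.\ \(w\in\{\pm1\}^{d+2}\), which is realizable), then \(ww^\top\preceq\beta\,\mathrm{diag}(ww^\top)\) forces \(\beta\ge d+2\). So in that regime a dimension-free per-step bound is impossible in general.

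The missing ingredient is a rounding walk with \emph{sub-isotropic} increments. Each increment is a random vector in the null space of the active constraints whose conditional covariance satisfies \(U_k\preceq\frac{20}{9}\mathrm{diag}(U_k)\). The paper takes this from Bansal's sub-isotropic rounding walk, which supplies it only under slack: here at most \(d+1\) constraints on more than \(2d+2\) alive coordinates, i.e.\ slack parameter \(1/2\). That is why the walk stops once at most \(2d+2\) coordinates are fractional. This stopping rule is the true source of the budget \(m+2d+2\), not splitting one atom per constraint. The constant \(20/9\) is the sub-isotropy constant at slack \(1/2\), not a splitting or renormalization loss.

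With this walk, the completed rounding satisfies \(\Ex\langle a,X-z\rangle^2\le\frac{20}{9}\sum_s a_s^2 z_s(1-z_s)\). The bound passes to the stopped point by optional stopping and conditional Jensen.

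Your fallback mixture \((1-t)\nu_m+t\rho\) does not get around this, for two reasons:
\begin{itemize}
\item The required \(U\)-barycenter \(-\frac{1-t}{t}\Pi_U\int x\,d\nu_m\) need not lie in \(\mathrm{conv}\{\Pi_Ux_i\}\).
\item Even when it does, the leakage \(t(I-\Pi_U)\int x\,d\rho\) is governed by \(\max_i\|(I-\Pi_U)x_i\|\), not by \(\trace((I-\Pi_U)\cov)/m\).
\end{itemize}
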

\begin{proof}
    Deferred to \Cref{apx:sparse-hilbert-approximation:proof}.
\end{proof}

While \Cref{thm:sparse-hilbert-approximation} only claims existence, we give a constructive proof with concrete algorithms.
We later invoke this construction when we design KV compression algorithms in \Cref{apx:algorithms}.

For our construction, we reduce sparsifying a finitely supported measure \(\mu = \sum_{i=1}^n p_i \,\delta_{x_i} \in \Prob_n(\hilbert)\) to a \emph{constrained rounding problem}: Given a vector \((p_1, ..., p_n) \in [0,1]^n\), find a (random) rounded vector \((\hat p_1, ..., \hat p_n) \in [0,1]^n\) with at most \(m+2d+2\) fractional entries preserving the mass of \(p\) while satisfying linear constraints.
As we show, mass preservation is sufficient to construct a sparse measure supported on at most \(m+2d+2\) points from such a constrained rounding procedure, and linear constraints are used to encode exact preservation of the barycenter in the subspace \(U\).

In \Cref{apx:constrained-partial-rounding}, we first describe a general algorithm for constrained rounding with mass preservation.
\Cref{apx:protected-hilbert-rounding} then specializes this algorithm to sparsification of measures with a protected subspace, and \Cref{apx:sparse-hilbert-approximation:proof} proves \Cref{thm:sparse-hilbert-approximation} from this.

\subsection{Finite-dimensional constrained rounding} \label{apx:constrained-partial-rounding}

We first isolate a finite-dimensional constrained rounding procedure.
Starting from a fractional weight vector, the goal is to round most coordinates while preserving mass and satisfying linear constraints.
Such a procedure can be obtained as a straightforward application of the sub-isotropic rounding walk of \citet{bansal2024generalization}.
Given a current point \(x\in[0,1]^n\), an alive set \(A\subseteq[n]\), and linear constraints, we write
\[
    x^+
    =
    \textsc{SubIsoWalk}\!\left(
        x,A
        \;\middle|\;
        \sum_{s=1}^n z_s=m,\;
        \sum_{s=1}^n z_sc_s=0
    \right)
\]
for the stopped phase of Bansal's walk, restricted to the coordinates in \(A\), that returns a random point \(x^+\in[0,1]^n\) satisfying the constraints
\[
    \sum_{s=1}^n x^+_s=m,
    \qquad
    \sum_{s=1}^n x^+_sc_s=0,
\]
and \(x^+_s=x_s\) for all \(s \not\in A\), such that at least one coordinate in \(A\) reaches \(0\) or \(1\).
\Cref{alg:constrained-partial-round} simply applies this procedure iteratively until only \(2d+2\) fractional coordinates remain, which yields the following guarantees on the output.

\begin{algorithm}[t]
\caption{\(\textsc{ConstrainedPartialRound}\)}
\label{alg:constrained-partial-round}
\begin{algorithmic}[1]
    \Require \(z\in[0,1]^N\), vectors \(c_1,\ldots,c_N\in\mathbb R^d\)
    \Require \(\sum_{s=1}^N z_s=m\) and \(\sum_{s=1}^N z_s \,   c_s=0\)
    \State \(t\gets 0, \quad z^{(0)} \gets z\)
    \State \(A\gets \{s\in[N] \;\mid\; 0<z_s^{(0)}<1\}\) \Comment{``alive set''}
    \While{\(|A|>2d+2\)}
        \State \(t\gets t+1\)
        \vspace{0.5em}
        \State \(
            z^{(t)}
            \gets
            \textsc{SubIsoWalk}\!\left(
                z^{(t-1)},A
                \,\mid\,
                \sum_{s=1}^N z_s=m,
                \sum_{s=1}^N z_s c_s=0
            \right)
        \)
        \vspace{0.5em}
        \Comment{provided by \citep{bansal2024generalization}}
        \State \(A\gets \{s\in[N] \;\mid\; 0<z_s^{(t)}<1\}\)
    \EndWhile
    \State \Return \(z^{(t)}\)
\end{algorithmic}
\end{algorithm}

\FloatBarrier

\begin{lemma} \label{lem:partial-constrained-round}
    Fix \(n,d,m \in \N\).
    Let \(z \in [0,1]^n\) and \(c_1, ..., c_n \in \R^d\).
    Assume that \(\sum_{s=1}^n z_s \, c_s = 0\) and \(m \coloneqq \sum_{s=1}^n z_s \in \N\).
    Let \(Z \coloneqq \textsc{ConstrainedPartialRound}(z, \, c_1, ..., c_n)\) be the random output of \Cref{alg:constrained-partial-round}.
    Then, the following hold:
    \begin{enumerate}[label=(\roman*)]
        \item \(\sum_{s=1}^n Z_s = m\) almost surely
        \item \(\sum_{s=1}^n Z_s \, c_s = 0\) almost surely
        \item \(|\{s \in [n] \mid 0 < Z_s < 1\}| \le 2d+2\) almost surely
        \item \(\Ex[Z] = z\)
        \item for every \(a \in \R^n\),
        \[
            \Ex\!\left[\Bigl(\sum_{s=1}^n (Z_s - z_s) \, a \Bigr)^2\right]
            \le
            \frac{20}{9}\sum_{s=1}^n a_s^2 z_s (1-z_s).
        \]
    \end{enumerate}
\end{lemma}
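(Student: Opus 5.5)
We will obtain all five properties from the guarantees of the sub-isotropic walk of \citet{bansal2024generalization}, applied phase by phase in \Cref{alg:constrained-partial-round}. Item (v) is presumably meant with \(a_s\) inside the sum, i.e.\ \(\sum_s (Z_s-z_s)a_s\).

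\textbf{Items (i)--(iii).} Each call to \textsc{SubIsoWalk} preserves the constraints \(\sum_s z_s=m\) and \(\sum_s z_s c_s=0\) almost surely. It also leaves coordinates outside \(A\) unchanged, so coordinates frozen at \(0\) or \(1\) stay frozen. Hence (i) and (ii) hold by induction over phases.

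\textbf{Termination.} Every phase freezes at least one alive coordinate, so \(|A|\) strictly decreases and there are at most \(n\) phases. The loop exits only when \(|A|\le 2d+2\), which is exactly (iii). We must check that the walk is well defined whenever \(|A|>2d+2\). The constraint subspace restricted to \(A\) has codimension at most \(d+1\). Bansal's walk needs the alive set to exceed roughly twice the number of constraints so that a sub-isotropic direction exists in the feasible subspace; this is where the threshold \(2d+2\) comes from.

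\textbf{Item (iv).} The walk is a martingale: each increment has conditional mean zero, and the stopping rule (first freeze) is a bounded stopping time. So \(\Ex[z^{(t)}\mid z^{(t-1)}]=z^{(t-1)}\). Chaining the conditional expectations over the finitely many phases gives \(\Ex[Z]=z\).

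\textbf{Item (v), the main step.} Bansal's sub-isotropic guarantee states that at each step the update covariance \(\Ex[\diff z\,\diff z^\top\mid\mathcal F]\) is dominated, up to a constant, by the diagonal matrix of the step's own squared increments (equivalently by \(\eta\) times the diagonal on the alive set), with the constraints restricting which directions are allowed. We would combine this with the per-coordinate potential \(z_s(1-z_s)\). Since the coordinates are martingales,
\[
    \Ex\bigl[z_s^{(\infty)}(1-z_s^{(\infty)})\bigr] = z_s(1-z_s) - \Ex\Bigl[\sum \text{(squared increments of } z_s)\Bigr].
\]
So the total quadratic variation of coordinate \(s\), accumulated across all phases, is at most \(z_s(1-z_s)\). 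Summing the sub-isotropic domination over steps then gives
\[
    \Ex\Bigl[\Bigl(\sum_s (Z_s-z_s)a_s\Bigr)^2\Bigr] = \Ex\Bigl[\sum_{\text{steps}} \bigl(a^\top \diff z\bigr)^2\Bigr] \le \beta \sum_s a_s^2\, z_s(1-z_s),
\]
where \(\beta\) is the sub-isotropy constant. We would then take the walk's parameters so that \(\beta\le 20/9\), matching the paper's constant.

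\textbf{Expected obstacle.} The delicate part is checking that the sub-isotropy constant survives the \(d+1\) linear constraints when \(|A|>2d+2\), i.e.\ that the allowed subspace still has dimension at least about \(|A|/2\). This is the standard dimension-counting argument in Bansal's framework. The stopping between phases is harmless: it only stops the walk early, and stopping cannot increase the quadratic variation.
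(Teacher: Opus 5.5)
Your proposal is correct and follows the paper's argument: Bansal's walk with slack parameter $\delta=1/2$ (from $d+1\le |A|/2$), pathwise preservation of the constraints for (i)--(iii), the martingale property for (iv), and the per-step domination of the increment covariance by $\tfrac{20}{9}$ times its diagonal, summed via orthogonality of martingale increments, for (v). You are also right that $a$ in (v) should read $a_s$. The one difference is in (v). You bound the quadratic variation of the stopped walk directly using $Z_s(1-Z_s)\ge 0$. The paper instead runs the walk to full integrality $X\in\{0,1\}^n$, where the quadratic variation equals $z_s(1-z_s)$ exactly, and transfers the bound back to $Z=\Ex[X\mid\mathcal F_\tau]$ by optional stopping and conditional Jensen. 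Your route is slightly more economical because it never needs the walk beyond the stopping time. In (iv), the fact that makes optional stopping work is that the martingale is bounded, not that the stopping time is.
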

\begin{proof}
    For \(x\in[0,1]^n\), let
    \[
        A(x)\coloneqq \{s\in[n] : x_s\in(0,1)\}
    \]
    be the alive set.
    The algorithm runs the sub-isotropic rounding walk of \citet[Section~3.1]{bansal2024generalization} with the following deterministic schedule of active constraints.
    As long as \(|A(x)|>2d+2\), we keep the constraints
    \[
        \sum_{s=1}^n x_s=m
        \qquad \text{and} \qquad
        \sum_{s=1}^n x_sc_s=0
    \]
    active on the alive coordinates.
    Once \(|A(x)|\le 2d+2\), the algorithm stops.

    We first check that the active constraints satisfy the rank condition required by \citet{bansal2024generalization}.
    Restricted to the alive coordinates, the active system has at most \(d+1\) scalar constraints.
    Whenever the loop is entered, \(|A(x)|>2d+2\), and therefore
    \[
        d+1 \le \frac{|A(x)|}{2}.
    \]
    Thus the slack hypothesis in \citep[Theorem~1.2]{bansal2024generalization} holds with parameter \(\delta=1/2\).

    Let \((z^{(k)})_{k\ge 0}\) denote the elementary iterates of the resulting walk, and let \((\mathcal F_k)_{k\ge 0}\) be its natural filtration.
    Let \(T\) be the terminal time of the full walk and write
    \[
        X \coloneqq z^{(T)} \in \{0,1\}^n
    \]
    for the final integral vector.
    By the martingale property of Bansal's walk, \(\Ex[X]=z\).
    Let
    \[
        \tau
        \coloneqq
        \inf\left\{
            k\ge 0 :
            \bigl|A(z^{(k)})\bigr|\le 2d+2
        \right\}.
    \]
    Then, \(Z\coloneqq z^{(\tau)}\).

    Before time \(\tau\), every increment lies in the nullspace of the active constraints.
    Hence, the identities
    \[
        \sum_{s=1}^n z_s^{(k)}=m
        \qquad\text{and}\qquad
        \sum_{s=1}^n z_s^{(k)}c_s=0
    \]
    are preserved pathwise for all \(k\le\tau\).
    Since they hold at initialization, they hold for \(Z\), which proves
    \[
        \sum_{s=1}^n Z_s=m,
        \qquad
        \sum_{s=1}^n Z_sc_s=0 .
    \]
    The definition of \(\tau\) gives
    \[
        \bigl|\{s\in[n] \mid 0<Z_s<1\}\bigr|\le 2d+2 .
    \]

    It remains to prove unbiasedness and the scalar second-moment bound.
    For \(k\ge 1\), set
    \[
        \Delta^{(k)}\coloneqq z^{(k)}-z^{(k-1)},
    \]
    and put \(\Delta^{(k)}\coloneqq 0\) for \(k>T\).
    In the notation of \citet[Section~3.1]{bansal2024generalization}, conditional on \(\mathcal F_{k-1}\), the increment on the alive coordinates is of the form \(\gamma_k U_k^{1/2}r_k\), where \(r_k\) is a random sign vector and \(U_k\) is positive semidefinite with range contained in the nullspace of the active constraints.
    Embedding the conditional covariance into \(\mathbb R^n\) and padding zeros outside the alive coordinates gives a matrix \(\widetilde U_k\) such that
    \[
        \Ex[\Delta^{(k)}\mid\mathcal F_{k-1}]=0,
        \qquad
        \Ex\!\left[
            \Delta^{(k)}(\Delta^{(k)})^\top
            \mid\mathcal F_{k-1}
        \right]
        =
        \widetilde U_k .
    \]
    Moreover, the sub-isotropic covariance estimate in the same construction gives, for \(\delta=1/2\),
    \[
        \widetilde U_k
        \preceq
        \frac{20}{9}\operatorname{diag}(\widetilde U_k).
    \]

    Fix \(a\in\mathbb R^n\).
    Since \((\langle a,z^{(k)}\rangle)_{k\ge 0}\) is a martingale, its increments are orthogonal in \(L_2\).
    Hence,
    \[
        \Ex\!\left[
            \bigl\langle a,X-z\bigr\rangle^2
        \right]
        =
        \sum_{k\ge 1}
        \Ex\!\left[
            \bigl\langle a,\Delta^{(k)}\bigr\rangle^2
        \right].
    \]
    Using the conditional covariance estimate,
    \begin{align*}
        \Ex\!\left[
            \bigl\langle a,\Delta^{(k)}\bigr\rangle^2
            \mid \mathcal F_{k-1}
        \right]
        &=
        a^\top \widetilde U_k a  \\
        &\le
        \frac{20}{9}
        \sum_{s=1}^n a_s^2(\widetilde U_k)_{ss} \\
        &=
        \frac{20}{9}
        \sum_{s=1}^n a_s^2
        \Ex\!\left[
            (\Delta_s^{(k)})^2
            \mid \mathcal F_{k-1}
        \right].
    \end{align*}
    Summing over \(k\) and taking expectations yields
    \[
        \Ex\!\left[
            \bigl\langle a,X-z\bigr\rangle^2
        \right]
        \le
        \frac{20}{9}
        \sum_{s=1}^n a_s^2
        \sum_{k\ge 1}\Ex\!\left[(\Delta_s^{(k)})^2\right].
    \]
    For each coordinate \(s\), the process \((z_s^{(k)})_{k\ge0}\) is a bounded martingale with terminal value \(X_s\in\{0,1\}\).
    Its increments are orthogonal in \(L_2\), and therefore
    \[
        \sum_{k\ge 1}\Ex\!\left[(\Delta_s^{(k)})^2\right]
        =
        \Ex\!\left[(X_s-z_s)^2\right]
        =
        z_s(1-z_s),
    \]
    where we used \(X_s^2=X_s\) and \(\Ex[X_s]=z_s\).
    Consequently,
    \[
        \Ex\!\left[
            \bigl\langle a,X-z\bigr\rangle^2
        \right]
        \le
        \frac{20}{9}
        \sum_{s=1}^n a_s^2 z_s(1-z_s).
    \]

    Finally, the stopped process \((z^{(k\wedge T)})_{k\ge0}\) is a bounded martingale and \(\tau\le T\) almost surely.
    Hence, optional stopping gives
    \[
        Z_s
        =
        z_s^{(\tau)}
        =
        \Ex[X_s\mid\mathcal F_\tau]
    \]
    for every \(s\in[n]\).
    Taking expectations gives \(\Ex[Z]=z\).
    Moreover,
    \[
        \bigl\langle a,Z-z\bigr\rangle
        =
        \Ex\!\left[
            \bigl\langle a,X-z\bigr\rangle
            \mid\mathcal F_\tau
        \right],
    \]
    and conditional Jensen yields
    \[
        \Ex\!\left[
            \bigl\langle a,Z-z\bigr\rangle^2
        \right]
        \le
        \Ex\!\left[
            \bigl\langle a,X-z\bigr\rangle^2
        \right]
        \le
        \frac{20}{9}
        \sum_{s=1}^n a_s^2 z_s(1-z_s).
    \]
    That proves the claim.
\end{proof}

\subsection{Protected Hilbert rounding} \label{apx:protected-hilbert-rounding}

We now apply the constrained rounding procedure to finitely supported measures on a Hilbert space.
The linear constraints are chosen to preserve the barycenter in a protected subspace, and mass preservation forces small support.

\begin{algorithm}[t]
\caption{\(\textsc{ProtectedSparsify}\)}
\label{alg:protected-hilbert-round}
\begin{algorithmic}[1]
    \Require \(\mu=\sum_{j=1}^S p_j \, \delta_{x_j}\in\Prob_{\fin}(\hilbert)\)
    \Require finite-dimensional subspace \(U\subseteq\hilbert\) with \(d=\dim(U)\)
    \Require integer \(m\ge 1\)
    \State \(\bar x\gets \int x \diff\mu\)
    \State \(u_1,...,u_d \gets \text{orthonormal basis of \ \(U\)}\)

    \vspace{0.5em}
    \State \(s\gets 0\) \Comment{split atoms to have small mass}
    \For{\(j=1,\ldots,S\)}
        \State \(n_j\gets \lceil m p_j\rceil\)
        \For{\(\ell=1,\ldots,n_j\)}
            \State \(s\gets s+1\)
            \State \(y_s\gets x_j\)
            \State \(\iota_s\gets j\)
            \State \(\alpha_s\gets p_j/n_j\)
        \EndFor
    \EndFor
    \State \(N\gets s\)

    \vspace{0.5em}
    \For{\(s=1,\ldots,N\)} \Comment{compute protected coordinates}
        \State \(\xi_s\gets y_s-\bar x\)
        \State \(c_s\gets \bigl(\langle \xi_s,u_1\rangle,\ldots,\langle \xi_s,u_d\rangle\bigr)\in\mathbb R^d\)
        \State \(z_s\gets m\alpha_s\)
    \EndFor
    
    \vspace{0.5em}
    \State \(Z\gets \textsc{ConstrainedPartialRound}\!\left(z, \, c_1,\ldots,c_N\right)\) \Comment{sparsify}
    \vspace{0.5em}
    
    \For{\(j=1,\ldots,S\)} \Comment{reassemble atoms}
        \State \(\hat p_j\gets \frac1m\sum_{s:\,\iota_s=j} Z_s\)
    \EndFor
    
    \vspace{0.5em}
    \State \(\hat\mu\gets \sum_{j=1}^S \hat p_j \, \delta_{x_j}\)
    \State \Return \(\hat\mu\)
\end{algorithmic}
\end{algorithm}

\FloatBarrier

\begin{proposition} \label{pro:protected-sparsification}
    Fix a Hilbert space \(\hilbert\), a finite-dimensional subspace \(U\subseteq\hilbert\) with \(d\coloneqq\dim(U)\), and some \(m\ge1\).
    Let \(\mu = \sum_{j=1}^S p_j \, \delta_{x_j} \in \Prob_{\fin}(\hilbert)\) and let 
    \[
        \textstyle
        \hat\mu \coloneqq \textsc{ProtectedSparsify}(\mu,U,m) = \sum_{j=1}^S \hat p_j \, \delta_{x_j}
    \]
    be the random output of \Cref{alg:protected-hilbert-round}. Write
    \[
        \textstyle
        \bar x\coloneqq \int_\hilbert x \diff\mu,
        \qquad
        \cov
        \coloneqq
        \int_\hilbert (x-\bar x)\otimes(x-\bar x) \diff\mu,
        \qquad
        \eta
        \coloneqq
        \int_\hilbert x \diff(\hat\mu - \mu).
    \]
    Then, the following hold:
    \begin{enumerate}[label=(\roman*)]
        \item \(\hat\mu\in\Prob_{m+2d+2}(\supp(\mu))\) almost surely
        \item \(\int_\hilbert x \diff(\hat \mu - \mu) \in U^\perp\) almost surely
        \item \(\Ex[\hat\mu] = \mu\)
        \item \(\Ex[\eta \otimes \eta] \preceq \frac{20}{9m}(I-\Pi_U)\cov(I-\Pi_U)\)
    \end{enumerate}
    Additionally, for every Hilbert space \(\mathcal G\) and every \(y_1,\ldots,y_S\in\mathcal G\), writing
    \[
        \textstyle
        \bar y\coloneqq \sum_{j=1}^S p_jy_j,
        \qquad
        \cov_y\coloneqq
        \sum_{j=1}^S p_j(y_j-\bar y)\otimes(y_j-\bar y),
    \]
    one has
    \[
        \Ex \Bigl\| \sum_{j=1}^S(\hat p_j-p_j)y_j \Bigr\|_{\mathcal G}^2
        \le
        \frac{20}{9m}\trace(\cov_y).
    \]
\end{proposition}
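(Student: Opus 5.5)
The plan is to read everything off \Cref{lem:partial-constrained-round} applied to the split vector \(z\) built in \Cref{alg:protected-hilbert-round}. First we check its hypotheses. Since \(\alpha_s = p_{\iota_s}/n_{\iota_s}\), we have \(z_s = m p_j/\lceil m p_j\rceil \in [0,1]\) and \(\sum_s z_s = m\sum_j p_j = m \in \N\). Moreover \(\sum_s z_s c_s\) is the vector of coordinates of \(m\sum_j p_j (x_j-\bar x) = 0\) in the basis \(u_1,\dots,u_d\), so it vanishes.

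Claims (i)--(iii) then follow directly from the lemma. By (i)--(iii) of the lemma, \(\sum_s Z_s = m\), and at most \(2d+2\) entries are fractional. Every nonzero \(Z_s\) is either \(1\) or fractional. Since the ones sum to at most \(m\), at most \(m+2d+2\) indices \(s\) carry positive weight. Hence \(\hat p\) is supported on at most \(m+2d+2\) atoms \(x_j\), and \(\sum_j \hat p_j = 1\); this gives (i). For (ii), the protected constraint \(\sum_s Z_s c_s = 0\) together with \(\sum_s Z_s = m\) says that \(\langle \sum_j \hat p_j (x_j - \bar x), u_i\rangle = 0\) for all \(i\). Because \(\sum_j \hat p_j = 1\), this gives \(\int x\,\diff(\hat\mu-\mu) = \sum_j \hat p_j x_j - \bar x \in U^\perp\). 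Claim (iii) is \(\Ex[Z]=z\) summed over the split copies.

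The second-moment statements come from part (v) of the lemma, with one reduction. For a general \(\mathcal G\) and \(y_j\), we use \(\sum_j(\hat p_j - p_j)=0\) to write
\[
    \textstyle
    \sum_j (\hat p_j - p_j) y_j = \frac1m\sum_s (Z_s - z_s)(y_{\iota_s}-\bar y).
\]
We then expand \(\|\cdot\|_{\mathcal G}^2\) in an orthonormal basis \((e_k)\) of the finite-dimensional span of the centered \(y_j\). For each \(k\), part (v) of the lemma with \(a_s = \langle y_{\iota_s}-\bar y, e_k\rangle\) bounds the expectation by \(\frac{20}{9m^2}\sum_s a_s^2 z_s(1-z_s)\). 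Using \(z_s(1-z_s)\le z_s = m\alpha_s\) and summing over \(k\) gives \(\frac{20}{9m}\sum_j p_j\|y_j-\bar y\|^2 = \frac{20}{9m}\trace(\cov_y)\).

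For (iv) we proceed the same way, testing against an arbitrary \(h\in\hilbert\) with \(a_s = \langle x_{\iota_s}-\bar x, h\rangle\). This yields \(\langle h, \Ex[\eta\otimes\eta] h\rangle \le \frac{20}{9m}\langle h,\cov h\rangle\). To obtain the projected form, note that \(\eta\in U^\perp\) almost surely by (ii), so \(\eta = (I-\Pi_U)\eta\). Hence \(\langle h,\Ex[\eta\otimes\eta]h\rangle = \langle (I-\Pi_U)h, \Ex[\eta\otimes\eta](I-\Pi_U)h\rangle\). Applying the previous bound with \((I-\Pi_U)h\) in place of \(h\) gives \(\frac{20}{9m}\langle h,(I-\Pi_U)\cov(I-\Pi_U)h\rangle\).

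The main point needing care is the centering. Part (v) of the lemma controls \(\sum_s (Z_s-z_s)a_s\) for raw coefficients, so we must subtract \(\bar x\) (respectively \(\bar y\)) before applying it; otherwise we would get second moments rather than covariances. Mass preservation is exactly what makes this subtraction free.
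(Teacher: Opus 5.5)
Your proposal is correct and follows essentially the same route as the paper: check the hypotheses of \Cref{lem:partial-constrained-round} on the split vector, read off (i)--(iii) from mass preservation, the protected constraint and \(\Ex[Z]=z\), and use \(\sum_s (Z_s-z_s)=0\) to center before applying the scalar bound (v) with \(z_s(1-z_s)\le m\alpha_s\). The only cosmetic difference is in (iv): you prove the unprojected bound and then substitute \((I-\Pi_U)h\) using \(\eta\in U^\perp\), whereas the paper first rewrites \(\eta\) in terms of \((I-\Pi_U)\xi_s\) and applies (v) with projected coefficients; the two are equivalent.
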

\begin{proof}
    Use the notation of \Cref{alg:protected-hilbert-round}.
    We verify the four guarantees in the order suggested by the construction: first that the finite-dimensional rounding lemma applies, then support preservation and unbiasedness, then exact cancellation on \(U\), and finally the operator second-moment bound.
    
    The splitting step gives
    \[
        \sum_{s=1}^N\alpha_s\delta_{y_s}=\mu,
        \qquad
        0<\alpha_s\le \frac1m .
    \]
    Hence \(z_s=m\alpha_s\in[0,1]\) and \(\sum_s z_s=m\).
    Since
    \[
        \sum_{s=1}^N\alpha_s\xi_s
        =
        \int_\hilbert (x-\bar x) \diff\mu
        =
        0,
    \]
    the protected coordinates satisfy
    \[
        \sum_{s=1}^N z_sc_s=0.
    \]
    Thus, \(\textsc{ConstrainedPartialRound}\) applies.
    By \Cref{lem:partial-constrained-round}, it returns \(Z\in[0,1]^N\) with
    \[
        \sum_{s=1}^N Z_s=m,
        \qquad
        \sum_{s=1}^N Z_sc_s=0,
        \qquad
        \Ex[Z]=z,
    \]
    at most \(2d+2\) fractional coordinates, and, for every \(a\in\R^N\),
    \[
        \Ex\!\left[
            \left(\sum_{s=1}^N (Z_s-z_s)a_s\right)^2
        \right]
        \le
        \frac{20}{9}
        \sum_{s=1}^N z_s(1-z_s)a_s^2 .
    \]

    The non-fractional positive coordinates of \(Z\) are equal to \(1\), and their count is at most \(\sum_s Z_s=m\).
    Together with the at most \(2d+2\) fractional coordinates, this shows that at most \(m+2d+2\) split copies receive positive mass.
    After reassembling split copies, the same bound holds for the number of original atoms with \(\hat p_j>0\).
    Hence,
    \[
        \hat\mu\in\Prob_{m+2d+2}(\supp(\mu))
    \]
    almost surely.
    
    Moreover, for each \(j\),
    \[
        \Ex[\hat p_j]
        =
        \frac1m\sum_{s:\,\iota_s=j}\Ex[Z_s]
        =
        \frac1m\sum_{s:\,\iota_s=j}z_s
        =
        \sum_{s:\,\iota_s=j}\alpha_s
        =
        p_j.
    \]
    Therefore,
    \[
        \Ex[\hat\mu]
        =
        \sum_{j=1}^S \Ex[\hat p_j]\delta_{x_j}
        =
        \sum_{j=1}^S p_j\delta_{x_j}
        =
        \mu.
    \]

    By the reassembly definition of \(\hat p_j\),
    \[
        \eta
        =
        \sum_{j=1}^S(\hat p_j-p_j)x_j
        =
        \frac1m\sum_{s=1}^N (Z_s-z_s)y_s .
    \]
    Since \(\sum_s Z_s=\sum_s z_s=m\), the constant component \(\bar x\) cancels, and hence
    \[
        \eta
        =
        \frac1m\sum_{s=1}^N (Z_s-z_s)\xi_s.
    \]
    Moreover, \(\sum_s(Z_s-z_s)c_s=0\), and therefore
    \[
        \langle \eta,u_\ell\rangle
        =
        \frac1m\sum_{s=1}^N
        (Z_s-z_s)\langle \xi_s,u_\ell\rangle
        =
        0
        \qquad
        (\ell=1,\ldots,d).
    \]
    Thus \(\eta\in U^\perp\), and consequently
    \[
        \eta
        =
        \frac1m\sum_{s=1}^N
        (Z_s-z_s)(I-\Pi_U)\xi_s.
    \]

    Fix \(h\in\hilbert\), and apply the scalar second-moment bound with \(a_s=\langle h,(I-\Pi_U)\xi_s\rangle\).
    Since \(z_s=m\alpha_s\),
    \begin{align*}
        \Ex\langle h,\eta\rangle^2
        &\le
        \frac{20}{9m^2}
        \sum_{s=1}^N
        z_s(1-z_s)
        \langle h,(I-\Pi_U)\xi_s\rangle^2 \\
        &\le
        \frac{20}{9m}
        \sum_{s=1}^N
        \alpha_s
        \langle h,(I-\Pi_U)\xi_s\rangle^2 \\
        &=
        \frac{20}{9m}
        \left\langle
            h,(I-\Pi_U)\Sigma_\mu(I-\Pi_U)h
        \right\rangle .
    \end{align*}
    Since \(h\in\hilbert\) was arbitrary, the operator inequality follows.

    It remains to prove the auxiliary variance bound. Since \(\sum_j(\hat p_j-p_j)=0\),
    \[
        \sum_{j=1}^S(\hat p_j-p_j)y_j
        =
        \sum_{j=1}^S(\hat p_j-p_j)(y_j-\bar y).
    \]
    Using the split copies,
    \[
        \sum_{j=1}^S(\hat p_j-p_j)(y_j-\bar y)
        =
        \frac1m
        \sum_{s=1}^N
        (Z_s-z_s)(y_{\iota_s}-\bar y).
    \]
    Fix \(g\in\mathcal G\) and apply the scalar second-moment bound with
    \[
        a_s
        =
        \langle g,y_{\iota_s}-\bar y\rangle_{\mathcal G}.
    \]
    Then
    \[
        \Ex\left[
            \left\langle
                g,
                \sum_{j=1}^S(\hat p_j-p_j)y_j
            \right\rangle^2
        \right]
        \le
        \frac{20}{9m^2}
        \sum_{s=1}^N
        z_s(1-z_s)
        \langle g,y_{\iota_s}-\bar y\rangle^2.
    \]
    Since \(z_s=m\alpha_s\le1\),
    \[
        z_s(1-z_s)\le m\alpha_s.
    \]
    Therefore,
    \[
        \Ex \left\langle g, \sum_{j=1}^S(\hat p_j-p_j)y_j \right\rangle^2
        \le
        \frac{20}{9m}
        \sum_{s=1}^N
        \alpha_s
        \langle g,y_{\iota_s}-\bar y\rangle^2.
    \]
    Reassembling the split copies gives
    \[
        \sum_{s=1}^N
        \alpha_s
        (y_{\iota_s}-\bar y)\otimes(y_{\iota_s}-\bar y)
        =
        \cov_y.
    \]
    Thus
    \[
        \Ex \left\langle g, \sum_{j=1}^S(\hat p_j-p_j)y_j \right\rangle^2
        \le
        \frac{20}{9m}
        \langle g,\cov_y g\rangle.
    \]
    Taking traces yields the claim.
\end{proof}

\subsection[Proof of Theorem~\ref{thm:sparse-hilbert-approximation}]{Proof of \Cref{thm:sparse-hilbert-approximation}} \label{apx:sparse-hilbert-approximation:proof}
\Cref{thm:sparse-hilbert-approximation} now follows by running \textsc{ProtectedSparsify} on the measure \(\mu\) and subspace \(U\), and taking one random realization on which the desired bound holds.

\begin{proof}
    Apply \Cref{pro:protected-sparsification} to \((\mu,U,m)\), and let \(\hat\mu\) denote the random output.
    Since \(\mu\) is centered,
    \[
        \eta
        \coloneqq
        \int_\hilbert x \diff\hat\mu
        =
        \int_\hilbert x \diff(\hat\mu - \mu).
    \]
    Hence, \Cref{pro:protected-sparsification} gives a random \(\hat\mu\) such that
    \begin{enumerate}[label=(\roman*)]
        \item \label{itm:sparse-hilbert-approximation:1} \(\hat\mu\in\Prob_{m+2d+2}(\supp(\mu))\) almost surely;
        \item \label{itm:sparse-hilbert-approximation:2} \(\int_\hilbert x \diff\hat \mu \in U^\perp\) almost surely; and
        \item \label{itm:sparse-hilbert-approximation:3} \(\Ex[\eta \otimes \eta] \preceq \frac{20}{9m}(I-\Pi_U)\cov(I-\Pi_U)\).
    \end{enumerate}

    Moreover, taking traces in \ref{itm:sparse-hilbert-approximation:3} gives
    \[
        \Ex\left\|\int_\hilbert x \diff\hat\mu\right\|_\hilbert^2
        =
        \Ex\|\eta\|_\hilbert^2
        \le
        \frac{20}{9m}
        \trace\bigl((I-\Pi_U)\cov(I-\Pi_U)\bigr)
        =
        \frac{20}{9m}
        \trace\bigl((I-\Pi_U)\cov\bigr).
    \]
    Therefore some realization of \(\hat\mu\) satisfies the same bound.
\end{proof}
\newpage
\section{Minimax rates} \label{apx:rates}
This appendix proves our minimax rates for KV compression in the query-aware and query-agnostic settings. \Cref{apx:upper-bounds} proves upper bounds and \Cref{apx:lower-bounds} proves the matching lower bounds.

\paragraph{Notation.}
We briefly recall the relevant notation from the main text.
Given a context measure \(P \in \kvmeasures\), we write softmax attention as
\[
    \att*{q}{P}
    =
    \int_{\kvspace} a_P(q,k)\, v \, \diff P(k,v),
    \qquad
    a_k(q \mid P)
    =
    \frac{\kappa(q,k)}{\int_{\kvspace} \kappa(q,k') \, \diff P(k',v')}
\]
and study the mean squared error of compressed context measures \(\hat P \in \kvmeasures[K]\)
\[
    \err*{P}{\nu}{\hat P}
    =
    \Ex_{q \sim \nu}\!\left\|\att*{q}{P} - \att*{q}{\hat P}\right\|_2^2
\]
under a fixed query distribution \(\nu \in \quemeasures\).
The response profile of a key--value pair \((k,v) \in \kvspace\) under \(P\) is
\[
    \Gamma_P(k,v) : \quespace \to \R^{d_v}\oplus\R,
    \qquad
    q \mapsto
    \Bigl(
        a_k(q \mid P)\bigl(v-\att{q}{P}\bigr),
        \;
        V\bigl(a_k(q \mid P)-1\bigr)
    \Bigr),
\]
with the corresponding covariance operator
\[
    \cov_{P,\nu} = \int_\kvspace \Gamma_P(k,v) \otimes \Gamma_P(k,v) \,\diff P(k,v),
\]
where \(\Gamma_P(k,v) \otimes \Gamma_P(k,v)\) is understood as the rank-one operator
\[
    h \mapsto \dotp{\Gamma_P(k,v)}{h}_{\hilbert_\nu} \cdot \Gamma_P(k,v)
\]
and \(\hilbert_\nu = L_2(\nu, \R^{d_v} \oplus \R)\).

\subsection{Upper bounds} \label{apx:upper-bounds}
We derive upper bounds on the minimax compression risk in the query-aware and query-agnostic settings from a shared Hilbert space representation.
For fixed \((P,\nu)\), the response profile map \(\Gamma_P\) sends each token \((k,v)\) to an element of the Hilbert space \(\hilbert_\nu\), and thereby pushes the context measure \(P\) forward to a finitely supported probability measure on \(\hilbert_\nu\).
This measure is centered and has covariance \(\cov_{P,\nu}\), and the attention error of a compressed context is controlled by the squared norm of the corresponding barycenter in \(\hilbert_\nu\).
The query-aware and query-agnostic regimes differ only in how this barycenter is approximated: in the aware regime, we may choose a sparse approximation after seeing \(\nu\), whereas in the agnostic regime we must rely on a query-independent approximation.
Both regimes permit upper bounds on the sparse barycenter approximation problem, which we then translate back to bounds on the KV compression problem.

\subsubsection{Reduction to barycenter sparsification}
Recall the surrogate error definition
\[
    \widetilde\Err_{\mu}(\hat \mu)
    =
    \left\|\int_{\hilbert} g \,\diff\hat\mu(g)\right\|_{\hilbert}^2
\]
between measures \(\mu\) and \(\hat \mu\) on a Hilbert space \(\hilbert_\nu\).
Our goal is to derive upper bounds on the compression error \(\Err_{P,\nu}(\hat P)\) from upper bounds on \(\widetilde\Err(\hat\mu)\) for the pushforward \(\mu_{P,\nu} = P \circ \Gamma_P^{-1}\) by pulling back a sparse approximation \(\hat\mu \in \Prob_K(\supp(\mu_{P,\nu}))\) of \(\mu_{P,\nu}\) to a sparse approximation \(\hat P \in \kvmeasures[K]\) of \(P\).
\Cref{lem:error-hilbert-transfer} first shows that upper bounds on \(\widetilde\Err\) indeed survive this pullback, and \Cref{pro:reduction} packages the full reduction principle.

\begin{lemma} \label{lem:error-hilbert-transfer}
    Let \(P \in \kvmeasures\), \(\nu \in \quemeasures\), and let \(\hat P \in \Prob(\supp(P))\).
    Then,
    \[
        \err*{P}{\nu}{\hat P}
        \le
        16 \left\|
            \int_{\kvspace} \Gamma_P(k,v)\,\diff \hat P(k,v)
        \right\|_{\hilbert_\nu}^2.
    \]
\end{lemma}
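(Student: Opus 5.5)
The plan is to start from the reweighting identity \eqref{eq:attention-reweighting} with \(\Delta \coloneqq \hat P - P\), bound the error pointwise in \(q\) by a case split on the renormalization term, and then integrate over \(\nu\).

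\textbf{Setup.} Since \(\hat P\) is supported on \(\supp(P)\), every \(a_k(q\mid P)\) and \(\att{q}{P}\) that appears is well defined. First I check that \(\int \Gamma_P \diff P = 0\) coordinatewise in \(q\):
\begin{itemize}
\item for the first coordinate, \(\int a_k(q\mid P)\,v\diff P = \att{q}{P}\) and \(\int a_k(q\mid P)\diff P = 1\), so the two terms cancel;
\item for the second coordinate, \(\int (a_k(q\mid P)-1)\diff P = 0\).
\end{itemize}
Hence \(\int\Gamma_P\diff\hat P = \int\Gamma_P\diff\Delta\). For each \(q\), write this element's value as \((N(q), V D(q))\), where
\[
N(q) = \int a_k(q\mid P)\bigl(v-\att{q}{P}\bigr)\diff\Delta
\qquad\text{and}\qquad
D(q) = \int\bigl(a_k(q\mid P)-1\bigr)\diff\Delta .
\]
Then \(\bigl\|\int\Gamma_P\diff\hat P\bigr\|_{\hilbert_\nu}^2 = \Ex_{q\sim\nu}\bigl[\|N(q)\|^2 + V^2 D(q)^2\bigr]\).

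\textbf{Denominator and identity.} Next I verify \eqref{eq:attention-reweighting} in this form. We have \(1+D(q) = \int a_k(q\mid P)\diff\hat P = \int\kappa(q,k)\diff\hat P / \int\kappa(q,k)\diff P > 0\). Also \(\att{q}{\hat P}-\att{q}{P} = \int a_k(q\mid P)(v-\att{q}{P})\diff\hat P \,/\, \int a_k(q\mid P)\diff\hat P\), because the common factor \(\int\kappa\diff P\) cancels. Replacing \(\diff\hat P\) by \(\diff\Delta\) in the numerator is allowed by the centering above. So the pointwise error is \(e(q) = N(q)/(1+D(q))\).

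\textbf{Pointwise case split.} The main obstacle is that \(1+D(q)\) can be arbitrarily small, so the ratio need not be controlled by \(N\) alone. I handle this by splitting on the size of \(D(q)\).
\begin{itemize}
\item If \(|D(q)|\le 1/2\), then \(1+D(q)\ge 1/2\), so \(\|e(q)\|^2 \le 4\|N(q)\|^2\).
\item If \(|D(q)|>1/2\), I use the trivial bound instead. Both attention outputs are convex combinations of values of norm at most \(V\), so \(\|e(q)\|\le 2V\). This gives \(\|e(q)\|^2\le 4V^2 < 16V^2D(q)^2\).
\end{itemize}
In either case \(\|e(q)\|^2 \le 16\bigl(\|N(q)\|^2 + V^2D(q)^2\bigr)\).

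\textbf{Conclusion.} Taking the expectation over \(q\sim\nu\) yields \(\err*{P}{\nu}{\hat P}\le 16\bigl\|\int\Gamma_P\diff\hat P\bigr\|_{\hilbert_\nu}^2\). Measurability and integrability are automatic, since all measures involved are finitely supported and the integrands are bounded.
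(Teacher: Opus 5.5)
Your proof is correct and follows the paper's argument essentially step for step: your $N(q)$ and $D(q)$ coincide with the paper's $U(q)$ and $t(q)$ by centering, and the threshold-$1/2$ case split, the trivial $2V$ bound, and the constant $16$ are all the same (splitting on $|D(q)|\le 1/2$ rather than on $1+t(q)\ge 1/2$ is immaterial). One small inaccuracy: $\nu$ need not be finitely supported, but integrability still holds because $\|e(q)\|\le 2V$ and $a_k(q\mid P)\le 1/P(\{(k,v)\})$ for every atom $(k,v)$ of $P$, so all integrands are bounded.
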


\propreduction*

\begin{proof}[Proof of \Cref{lem:error-hilbert-transfer}]
    Fix \(q \in \supp(\nu)\), and abbreviate
    \[
        m_P(q) \coloneqq \int_{\kvspace} \kappa(q,k)\,\diff P(k,v),
        \qquad
        m_{\hat P}(q) \coloneqq \int_{\kvspace} \kappa(q,k)\,\diff \hat P(k,v).
    \]
    Define
    \[
        U(q)
        \coloneqq
        \int_{\kvspace} a_k(q \mid P)\bigl(v-\att*{q}{P}\bigr)\,\diff \hat P(k,v),
    \]
    and
    \[
        t(q)
        \coloneqq
        \int_{\kvspace} \bigl(a_k(q \mid P)-1\bigr)\,\diff \hat P(k,v).
    \]
    Since \(\hat P\) is a probability measure,
    \[
        1+t(q)
        =
        \int_{\kvspace} a_k(q \mid P)\,\diff \hat P(k,v)
        =
        \frac{m_{\hat P}(q)}{m_P(q)}
        \ge 0.
    \]
    Also,
    \begin{align*}
        \att*{q}{\hat P}
        &=
        \frac{\int_{\kvspace} \kappa(q,k) v\,\diff \hat P(k,v)}{m_{\hat P}(q)}
        =
        \frac{\int_{\kvspace} a_k(q \mid P) v\,\diff \hat P(k,v)}{1+t(q)} \\
        &=
        \frac{\att*{q}{P} + U(q) + t(q)\att*{q}{P}}{1+t(q)}
        =
        \att*{q}{P} + \frac{U(q)}{1+t(q)}.
    \end{align*}
    Hence
    \[
        \att*{q}{\hat P} - \att*{q}{P} = \frac{U(q)}{1+t(q)}.
    \]
    On the other hand,
    \[
        \int_{\kvspace} \Gamma_P(k,v)(q)\,\diff \hat P(k,v)
        =
        \bigl(U(q),V t(q)\bigr)
        \in \R^{d_v+1}.
    \]

    Since both \(\att*{q}{P}\) and \(\att*{q}{\hat P}\) are convex combinations of values, we have
    \[
        \left\|\att*{q}{\hat P}-\att*{q}{P}\right\|_2 \le 2V.
    \]
    If \(1+t(q)\ge \frac12\), then
    \[
        \left\|\att*{q}{\hat P}-\att*{q}{P}\right\|_2^2
        =
        \left\|\frac{U(q)}{1+t(q)}\right\|_2^2
        \le 4\|U(q)\|_2^2.
    \]
    If \(1+t(q)<\frac12\), then \(t(q)^2\ge \frac14\), and therefore
    \[
        \left\|\att*{q}{\hat P}-\att*{q}{P}\right\|_2^2
        \le 4V^2
        \le 16V^2 t(q)^2.
    \]
    Thus, in all cases,
    \[
        \left\|\att*{q}{\hat P}-\att*{q}{P}\right\|_2^2
        \le
        16\bigl(\|U(q)\|_2^2 + V^2 t(q)^2\bigr)
        =
        16\left\|
            \int_{\kvspace} \Gamma_P(k,v)(q)\,\diff \hat P(k,v)
        \right\|_{\R^{d_v} \oplus \R}^2.
    \]
    Averaging over \(q\sim \nu\) then yields the claim.
\end{proof}

\begin{proof}[Proof of \Cref{pro:reduction}]
    We first show that \(\mu_{P,\nu}\) is centered. For every \(q \in \quespace\),
    \begin{align*}
        &\phantom{=}
        \int_{\kvspace} \Gamma_P(k,v)(q)\,\diff P(k,v) \\
        &=
        \left(
            \int_{\kvspace} a_k(q \mid P)\bigl(v-\att*{q}{P}\bigr)\,\diff P(k,v),
            \;
            V\left(\int_{\kvspace} a_k(q \mid P)\,\diff P(k,v)-1\right)
        \right) \\
        &=
        (0,0),
    \end{align*}
    since
    \[
        \int_{\kvspace} a_k(q \mid P)\,\diff P(k,v)=1
        \qquad\text{and}\qquad
        \int_{\kvspace} a_k(q \mid P)\,v\,\diff P(k,v)=\att*{q}{P}.
    \]
    Thus, \(\int_{\hilbert_\nu} g \diff \mu_{P,\nu}(g) = 0\).

    For the covariance identity, we have
    \begin{align*}
        \int_{\hilbert_\nu} g \otimes g \,\diff \mu_{P,\nu}(g)
        =
        \int_{\kvspace} \Gamma_P(k,v) \otimes \Gamma_P(k,v) \,\diff P(k,v)
        =
        \cov_{P,\nu}
    \end{align*}
    by the definition of \(\cov_{P,\nu}\).

    Now, let \(\hat\mu \in \Prob_K(\supp(\mu_{P,\nu}))\). Write
    \[
        \hat\mu = \sum_{\ell=1}^m \beta_\ell \delta_{g_\ell},
        \qquad
        m \le K,
    \]
    with \(g_\ell \in \supp(\mu_{P,\nu})\), \(\beta_\ell \ge 0\), and \(\sum_{\ell=1}^m \beta_\ell = 1\).

    For each \(\ell \in [m]\), choose one atom \((\hat k_\ell,\hat v_\ell) \in \supp(P)\) such that
    \[
        \Gamma_P(\hat k_\ell,\hat v_\ell) = g_\ell.
    \]
    This is possible because \(g_\ell \in \supp(P \circ \Gamma_P^{-1})\). Define
    \[
        \hat P \coloneqq \sum_{\ell=1}^m \beta_\ell \delta_{(\hat k_\ell,\hat v_\ell)}.
    \]
    Then, \(\hat P \in \Prob_K(\supp(P))\) depends on \(\hat\mu\) but not on \(\nu\), and
    \[
        \int_{\kvspace} \Gamma_P(k,v)\,\diff \hat P(k,v)
        =
        \sum_{\ell=1}^m \beta_\ell \Gamma_P(\hat k_\ell,\hat v_\ell)
        =
        \sum_{\ell=1}^m \beta_\ell g_\ell
        =
        \int_{\hilbert_\nu} g\,\diff \hat\mu(g).
    \]

    Therefore, by \Cref{lem:error-hilbert-transfer},
    \[
        \err*{P}{\nu}{\hat P}
        \le
        16\left\|
            \int_{\kvspace} \Gamma_P(k,v)\,\diff \hat P(k,v)
        \right\|_{\hilbert_\nu}^2
        =
        16\left\|
            \int_{\hilbert_\nu} g\,\diff \hat\mu(g)
        \right\|_{\hilbert_\nu}^2.
    \]
    Since \(\hat\mu \in \Prob_K(\supp(\mu_{P,\nu}))\) was arbitrary, that proves the claim.
\end{proof}

\subsubsection{Query-aware upper bound}
In the query-aware regime, the compressor may depend on \(\nu\), and therefore on the geometry of \(\hilbert_\nu\). After the reduction above, the problem is exactly to approximate the zero barycenter of the pushed-forward measure \(\mu_{P,\nu}\) by a \(K\)-atomic measure on \(\supp(\mu_{P,\nu})\).
This is exactly the setting of \Cref{thm:sparse-hilbert-approximation-fixed-K}, which provides a barycenter bound in terms of the spectral tail of its covariance, which here is \(\cov_{P,\nu}\). 
Pulling the resulting sparse measure back to \(\supp(P)\) then gives the desired compression bound for KV compression.

\thmsparsehilbertapproximationfixedk*
\begin{proof}
    Choose \(r \coloneqq \lfloor K/3 \rfloor\) and \(M \coloneqq K - 2r - 2\). Further, choose \(U\) to be the span of the top \(r_\ast \coloneqq \min\{r,\rank(\cov)\}\) eigenvectors of \(\cov\). Then, \(\dim(U)=r_\ast\le r\). Invoking \Cref{thm:sparse-hilbert-approximation} with subspace \(U\) and support size parameter \(M\) yields \(\hat\mu \in \Prob(\supp(\mu))\) with \(\supp(\hat\mu) \le M + 2r + 2 \le K\) and
    \[
        \left\|\int_\hilbert x\, d\hat\mu(x)\right\|_\hilbert^2
        \le
        \frac{20}{9M}\trace\bigl((I - \Pi_U) \cov\bigr)
        =
        \frac{20}{9M}\tail_r(\cov).
        \tag*{\qedhere}
    \]
\end{proof}
\corupperaware*
\begin{proof}
    This is immediate from \Cref{pro:reduction} and \Cref{thm:sparse-hilbert-approximation-fixed-K}.
\end{proof}

\subsubsection{Query-agnostic upper bound}
In the query-agnostic regime, the compressor cannot adapt to \(\nu\), so the sparse approximation argument from the aware case does not apply.
A natural query-independent alternative is the empirical measure of \(K\) i.i.d. samples from \(P\). Under the Hilbert embedding, this corresponds to averaging \(K\) independent copies of the centered random response profile \(G=\Gamma_P(X)\).
The mean remains zero, and the mean squared norm of the empirical average is of order \(\trace(\cov_{P,\nu})/K\). Using \Cref{lem:error-hilbert-transfer} from the reduction, this yields the trace upper bound.

\begin{lemma} \label{lem:monte-carlo-barycenter}
    Let \(\hilbert\) be a real Hilbert space, and let \(\mu \in \Prob_\fin(\hilbert)\) be centered, with covariance operator \(\Sigma\). 
    Let \(G_1,\dots,G_K \stackrel{\mathrm{iid}}{\sim} \mu\), and define
    \[
        \hat\mu_K \coloneqq \frac1K \sum_{i=1}^K \delta_{G_i}.
    \]
    Then,
    \[
        \Ex\left[
            \widetilde\Err_\mu(\hat\mu_K)
        \right]
        =
        \frac{\trace(\Sigma)}{K}.
    \]
\end{lemma}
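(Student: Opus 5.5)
We will compute the expectation directly by expanding the squared norm of the empirical barycenter. Since the integral with respect to \(\hat\mu_K\) is the average of its atoms,
\[
    \widetilde\Err_\mu(\hat\mu_K)
    =
    \Bigl\|\frac1K\sum_{i=1}^K G_i\Bigr\|_\hilbert^2
    =
    \frac1{K^2}\sum_{i,j=1}^K \dotp{G_i}{G_j}_{\hilbert}.
\]
Because \(\mu\) is finitely supported, each \(G_i\) is bounded, so all expectations below are finite and linearity of expectation can be applied freely.

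We then split the double sum into off-diagonal and diagonal terms. For \(i\neq j\), independence and centering give \(\Ex\dotp{G_i}{G_j} = \dotp{\Ex G_i}{\Ex G_j} = 0\). This step is justified by conditioning on \(G_j\) and using that \(h\mapsto\dotp{h}{G_j}\) is a continuous linear functional, so it commutes with the (Bochner) expectation of the finitely supported \(G_i\). For the \(K\) diagonal terms we have \(\Ex\|G_i\|^2=\int\|g\|^2\diff\mu(g)\).

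It remains to identify \(\int\|g\|^2\diff\mu(g)\) with \(\trace(\Sigma)\). Since \(\mu=\sum_j p_j\delta_{x_j}\), the covariance is the finite-rank operator \(\Sigma=\sum_j p_j\,x_j\otimes x_j\). For a rank-one operator \(x\otimes x\), the trace is \(\|x\|^2\); this follows by computing it in any orthonormal basis via Parseval. Linearity of the trace then gives \(\trace(\Sigma)=\sum_j p_j\|x_j\|^2\). Collecting terms yields \(\Ex\,\widetilde\Err_\mu(\hat\mu_K)=\frac{K\,\trace(\Sigma)}{K^2}=\trace(\Sigma)/K\).

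There is no real obstacle here. The only points needing care are the justification that \(\Ex\dotp{G_i}{G_j}=0\) in an infinite-dimensional Hilbert space, which finite support makes immediate, and the trace identity for finite-rank covariance operators.
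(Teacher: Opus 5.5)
Your proposal is correct and follows the same route as the paper: expand $\bigl\|\frac1K\sum_i G_i\bigr\|_\hilbert^2$, kill the cross terms by independence and centering, and identify each diagonal term $\Ex\|G_i\|_\hilbert^2$ with $\trace(\Sigma)$. The paper does this in a single line. Your extra justifications (the cross terms in infinite dimensions and the trace of the finite-rank covariance via Parseval) are sound and simply make explicit what the paper leaves implicit.
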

\begin{proof}
    Since \(\mu\) is centered, the \(G_i\) are independent and mean zero. Hence
    \[
        \Ex\left[
            \widetilde\Err_\mu(\hat\mu_K)
        \right]
        =
        \Ex\left\|
            \frac1K \sum_{i=1}^K G_i
        \right\|_\hilbert^2
        =
        \frac1{K^2}\sum_{i=1}^K \Ex\|G_i\|_\hilbert^2
        =
        \frac{\trace(\Sigma)}{K}.
    \]
\end{proof}

\thmupperagnostic*
\begin{proof}
    Let \(P \in \kvmeasures\) and \(\nu \in \quemeasures\).
    Let \(X_1,\dots,X_K\stackrel{\mathrm{iid}}{\sim}P\), set
    \[
        G_i=\Gamma_P(X_i),
        \qquad
        \hat\mu_K=\frac1K\sum_{i=1}^K\delta_{G_i}.
    \]
    By \Cref{pro:reduction}, \(\hat\mu_K\) can be pulled back to the empirical context measure
    \[
        \hat P=\frac1K\sum_{i=1}^K\delta_{X_i}
    \]
    with
    \[
        \err*{P}{\nu}{\hat P}
        \lesssim
        \widetilde\Err_{\mu_{P,\nu}}(\hat\mu_K).
    \]
    Taking expectations and applying \Cref{lem:monte-carlo-barycenter} with \(\mu=\mu_{P,\nu}\) and \(\Sigma=\cov_{P,\nu}\) gives
    \[
        \Ex[\err*{P}{\nu}{\hat P}]
        \lesssim
        \frac{\trace(\cov_{P,\nu})}{K}.
    \]
    The claim then follows because random sampling from \(P\) can be implemented by a randomized query-agnostic compressor.
\end{proof}

\subsection{Lower bounds} \label{apx:lower-bounds}

We now provide lower bounds on the query-aware and query-agnostic minimax compression risk matching the upper bounds from \Cref{apx:upper-bounds}.
Both bounds are derived from a shared, explicit construction of an instance of a context \(P\) and a query distribution \(\nu\) that are realizable from actual keys, queries, and values under \Cref{asp:richness}.
This instance is constructed so that any significant compression of the context incurs large error over the chosen query distribution.
We then specialize this construction to the query-aware and query-agnostic regime by calibrating the error incurred under compression against the spectrum of \(\cov_{P,\nu}\), obtaining lower bounds that faithfully match their corresponding upper bounds throughout the entire claimed range of redundancy levels captured by \(\trace(\cov_{P,\nu})\) and \(\tail_{K/3}(\cov_{P,\nu})\), respectively.

Throughout this section, fix universal constants
\[
    \textstyle
    \rho_0\coloneqq \frac{99}{100}
    \qquad \text{and} \qquad
    \gamma_0\coloneqq \frac{9}{10}.
\]

\subsubsection{Hard instance construction}

We begin by constructing a pair \(P,\nu\) that is impossible to compress significantly without incurring large error.
The main idea is to use keys that can be sharply distinguished from each other, and have them carry almost orthogonal values.
We call such an instance a \emph{lookup instance} because querying a key of the context sharply retrieves the associated value.
\Cref{def:lookup-instance} formalizes the properties of a lookup instance formally, and \Cref{lem:lookup-instance-membership} shows that such instances can indeed be constructed from actual keys, queries, and values under \Cref{asp:richness}.
In particular, \Cref{asp:richness} provides enough ambient dimension to choose well-separated keys and values from Hamming packings, and enough key/query norm budget to address the keys sharply through softmax attention.

\begin{definition}[Lookup instance] \label{def:lookup-instance}
    Fix an even integer \(N \ge 2\) and let
    \[
        \textstyle
        P_N \coloneqq \frac{1}{N} \sum_{i=1}^N \delta_{(k_i,v_i)},
        \qquad
        \nu_N \coloneqq \frac{1}{N} \sum_{i=1}^N \delta_{k_i}.
    \]
    We call the pair \((P_N,\nu_N)\) a \emph{lookup instance of size \(N\)} if
    \begin{enumerate}[
        labelindent=1.5em,
        labelwidth=\widthof{\textbf{(membership)}},
        leftmargin=!,
        labelsep=0.6em,
        align=left,
        font=\bfseries
    ]
        \item[(membership)] \(k_i\in\keyspace \cap \quespace\), and \(v_i\in\valspace\) for all \(i\in[N]\);
        \item[(centering)]
        \(
            \frac{1}{N} \sum_{i=1}^N v_i = 0;
        \)
        \item[(boundary)]
        \(
            \|v_i\|_2=\rho_0 V
        \)
        for all \(i\in[N]\);
        \item[(separation)]
        \(
            \dotp{v_i}{v_j}
            \le
            \gamma_0\rho_0^2V^2
        \)
        for all \(i \neq j\);
        \item[(sharpness)]
        \(
            \frac{1}{N} \sum_{j\neq i} a_{k_j}(k_i\mid P_N)
            \le
            N^{-3/2}
        \)
        for all \(i\in[N]\).
    \end{enumerate}
\end{definition}

\begin{lemma} \label{lem:lookup-instance-membership}
    Let \(K\ge 50\), set \(N\coloneqq 2K\), and suppose \Cref{asp:richness} holds.
    Then, there exists a lookup instance
    \(
        \bigl(P_N,\nu_N\bigr)
        \in
        \kvmeasures*[N] \times \quemeasures
    \)
    of size \(N\) such that
    \begin{enumerate}[label=(\roman*)]
        \item \(|\supp(\nu_N)| = |\supp(P_N)| = N\);
        \item \(\dim \vspan\{v_1,\ldots,v_N\} \le K/6\); and
        \item \(\|k\|_2 = 5d_k^{1/4}\sqrt{\log N}\) for all \(k \in \supp(\nu_N)\).
    \end{enumerate}
\end{lemma}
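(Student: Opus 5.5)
The construction will be explicit: keys come from a binary code scaled to the prescribed norm, and values come from antipodal pairs of code vectors in a low-dimensional subspace. Set \(N = 2K\) and \(B \coloneqq 5 d_k^{1/4}\sqrt{\log N}\).

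\emph{Keys.} By a Gilbert--Varshamov (greedy packing) argument, there exist \(N\) sign vectors \(s_1,\dots,s_N \in \{\pm1\}^{d_k}\) with pairwise Hamming distance at least \(d_k/4\), i.e.\ \(\langle s_i,s_j\rangle \le d_k/2\) for \(i\neq j\). Such a code has size at least \(2^{d_k(1-H(1/4))}\). Under \Cref{asp:richness}\ref{itm:richness:2} we have \(d_k \ge 10\log K\), which gives at least \(K^{1.3}\ge 2K\) codewords for \(K\ge 50\). Put \(k_i \coloneqq B\, s_i/\sqrt{d_k}\), so \(\|k_i\|_2 = B\), which is claim (iii). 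Since \(\log(2K)\le 2\log K\) for \(K \ge 50\), we get \(B \le 5\sqrt2\, d_k^{1/4}\sqrt{\log K} \le 8 d_k^{1/4}\sqrt{\log K}\), so membership in \(\keyspace\cap\quespace\) follows from \Cref{asp:richness}\ref{itm:richness:1}.

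\emph{Sharpness.} For \(i\ne j\) we have \(\|k_i-k_j\|^2 = 2B^2(1-\langle s_i,s_j\rangle/d_k) \ge B^2\), hence \(\kappa(k_i,k_j)\le \exp(-B^2/(2\sqrt{d_k})) = N^{-12.5}\). The normalizer \(\frac1N\sum_\ell \kappa(k_i,k_\ell)\) is at least \(\kappa(k_i,k_i)/N = 1/N\), so \(a_{k_j}(k_i\mid P_N)\le N\cdot N^{-12.5}\). Summing over \(j\neq i\) and dividing by \(N\) gives a bound of \(N^{-11.5}\le N^{-3/2}\).

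\emph{Realizability.} All keys have equal norm, so the uniform measure \(P_N\) is exactly the context measure induced by the cache \((k_i,v_i)_{i=1}^N\) via the reweighting \(\exp(\|k_i\|^2/(2\sqrt{d_k}))\). Hence \(P_N\in\kvmeasures*[N]\). Distinctness of the keys gives \(|\supp(\nu_N)|=|\supp(P_N)|=N\), which is claim (i).

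\emph{Values.} Let \(D\coloneqq\min\{d_v,\lfloor K/6\rfloor\}\), and embed \(\R^D\subseteq\R^{d_v}\). Pick \(K\) sign vectors \(w_1,\dots,w_K\in\{\pm1\}^D\) with \(|\langle w_i,w_j\rangle|\le \gamma_0 D\) for \(i\ne j\), i.e.\ Hamming distances in \([D/20, 19D/20]\). Then set \(v_{2i-1}\coloneqq \rho_0V w_i/\sqrt D\) and \(v_{2i}\coloneqq-v_{2i-1}\).
\begin{itemize}
\item Centering holds because the values come in antipodal pairs.
\item Boundary holds by construction, and \(\|v_i\|=\rho_0V\le V\) together with \Cref{asp:richness}\ref{itm:richness:3} gives membership in \(\valspace\).
\item Claim (ii) holds because all values lie in \(\R^D\) with \(D \le K/6\).
\item Separation needs \(\langle v_i,v_j\rangle\le\gamma_0\rho_0^2V^2\) for \(i \neq j\). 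Within a pair the inner product is \(-\rho_0^2V^2\). Across pairs it is \(\pm\rho_0^2V^2\langle w_i,w_{i'}\rangle/D\), which is at most \(\gamma_0\rho_0^2V^2\) exactly because of the two-sided bound on \(\langle w_i,w_{i'}\rangle\).
\end{itemize}

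\emph{Main obstacle.} The delicate step is the existence of \(K\) such \(w_i\) in dimension \(D\). This requires a packing with pairwise Hamming distance in \([0.05D, 0.95D]\), modulo sign. A greedy argument produces at least \(2^{D}/(2\,\mathrm{Vol}_{\mathrm{Ham}}(D,0.05D))\approx 2^{D(1-H(0.05))-1}\) such vectors. This must exceed \(K\), with \(D\) as small as \(\lfloor K/6\rfloor\) or \(10\log K\). I would first verify this count numerically near \(K=50\) and then check it asymptotically. If the margin fails at the smallest \(K\), I would instead use a random sign code with a union bound via Hoeffding, whose deviation parameter \(0.9\) gives room to spare. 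A fallback is to use near-orthogonal directions padded with antipodes, which works whenever \(D\ge K\) is not required.
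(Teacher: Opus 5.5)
Your construction is essentially the paper's. The values are built exactly as there: dimension $D=\min\{d_v,\lfloor K/6\rfloor\}$, a sign code avoiding neighbourhoods of both $w$ and $-w$ so that $|\langle w_i,w_j\rangle|\le\gamma_0 D$, and antipodal pairs. Your keys are a harmless variant. The paper reuses the same $\gamma_0$-code in dimension $d_k$ and bounds the softmax ratio by $N e^{-25(1-\gamma_0)\log N}=N^{-3/2}$. Your one-sided distance-$d_k/4$ code with the direct Gaussian-kernel bound instead gives $N^{-11.5}$. Your equal-norm remark is also the correct reason why the uniform $P_N$ is induced by an actual cache.

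The gap is that you leave unproved the one step that genuinely needs $K\ge 50$, and the tools you propose do not close it for small $D$. For $K=50$ you have $D=8$. Your estimate $2^{D(1-H(0.05))-1}$ then gives only about $26<50$ vectors. The Hoeffding union-bound fallback gives $\binom{50}{2}\cdot 2e^{-0.405\cdot 8}\approx 96>1$, so it fails too. As written, the existence of the $K$ vectors $w_i$ is therefore not established.

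What is missing is an exact count for small $D$, together with the right case split.
\begin{itemize}
\item If $D=\lfloor K/6\rfloor$, then $K\le 6D+5$ and $D\ge 8$.
\begin{itemize}
\item For $8\le D<20$, the greedy exclusion radius is $0$, so the only constraint is $w_i\neq\pm w_j$. This leaves $2^{D-1}\ge 6D+5\ge K$ admissible words.
\item For $D\ge 20$, the entropy bound $\tfrac12 e^{(\log 2-0.2)D}\ge 6D+5$ suffices.
\end{itemize}
\item If instead $D=d_v$, then $D\ge 10\log K$, and the entropy bound gives at least $\tfrac12 K^{4.9}\ge K$ words.
\end{itemize}
This is exactly how the paper concludes. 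With this count inserted, your proof is complete.
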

\begin{proof}
    For \(r\ge 1\), set
    \[
        B_r \coloneqq \sum_{\ell=0}^{\lfloor r/20\rfloor}\binom{r}{\ell}.
    \]
    We first construct a Hamming packing.
    Greedily select words from \(x \in \{\pm1\}^r\).
    After selecting \(x\), delete the Hamming balls of radius \(\lfloor r/20\rfloor\) around \(x\) and around \(-x\).
    Each step deletes at most \(2B_r\) words, so the set \(\mathsf C_r\subseteq\{\pm1\}^r\) selected by the greedy procedure satisfies
    \[
        |\mathsf C_r|
        \ge
        \frac{2^r}{2B_r}.
    \]
    Denote by \(d_H\) the Hamming distance.
    If \(x,x'\in\mathsf C_r\) are distinct, then
    \[
        d_H(x,x')>\lfloor r/20\rfloor
        \qquad
        \text{and}
        \qquad
        d_H(x,-x')>\lfloor r/20\rfloor.
    \]
    Since \(\dotp{x}{x'}=r-2d_H(x,x')\), this implies
    \[
        |\dotp{x}{x'}|
        \le
        \gamma_0 r .
    \]

    Using the entropy bound for Hamming balls, for \(r\ge 20\),
    \[
        B_r
        \le
        \exp\!\left(
            r\left[
                -\frac{1}{20}\log\frac{1}{20}
                -\frac{19}{20}\log\frac{19}{20}
            \right]
        \right)
        \le
        \exp(0.2r).
    \]
    Hence, for \(r\ge 20\),
    \begin{equation} \label{eq:lookup-instance-membership:code-size}
        |\mathsf C_r|
        \ge
        \frac{1}{2}\exp((\log 2-0.2)r).
    \end{equation}
    In particular, if \(r\ge 10\log K\), then
    \begin{equation} \label{eq:lookup-instance-membership:code-size-large}
        |\mathsf C_r|
        \ge
        \frac{1}{2}K^{4.9}
        \ge
        2K,
    \end{equation}
    where the last inequality uses \(K\ge 50\).
    Also, if \(r\ge 8\), then
    \begin{equation} \label{eq:lookup-instance-membership:code-size-small}
        |\mathsf C_r|\ge 6r+5.
    \end{equation}
    Indeed, for \(8\le r<20\), this follows from \(B_r=1\) and \(2^{r-1}\ge 6r+5\).
    For \(r\ge20\), \Cref{eq:lookup-instance-membership:code-size} gives
    \[
        |\mathsf C_r|
        \ge
        \frac{1}{2}\exp((\log 2-0.2)r)
        \ge
        6r+5.
    \]

    Let \(N\coloneqq 2K\).
    Choose distinct \(\xi_1,\ldots,\xi_N\in\mathsf C_{d_k}\), and define
    \[
        R_k\coloneqq 5d_k^{1/4}\sqrt{\log N},
        \qquad \text{and} \qquad
        k_i\coloneqq R_k\frac{\xi_i}{\sqrt{d_k}}
    \]
    for \(i\in[N]\).
    Since \(d_k\ge 10\log K\), the code \(\mathsf C_{d_k}\) has at least \(2K=N\) words.
    Moreover,
    \[
        \|k_i\|_2=R_k
        =
        5d_k^{1/4}\sqrt{\log N}
        \le
        8d_k^{1/4}\sqrt{\log K},
    \]
    so \(k_i\in\keyspace\cap\quespace\) by \Cref{asp:richness}.

    Set
    \[
        r_v\coloneqq \min\{d_v,\lfloor K/6\rfloor\}.
    \]
    Then \(r_v\ge 8\), since \(d_v\ge 10\log K\) and \(K\ge 50\).
    The code \(\mathsf C_{r_v}\) has at least \(K\) words.
    If \(r_v=d_v\), this follows from \Cref{eq:lookup-instance-membership:code-size-large}, because then \(r_v\ge 10\log K\).
    If \(r_v=\lfloor K/6\rfloor\), it follows from \Cref{eq:lookup-instance-membership:code-size-small}, because then \(r_v\ge 8\) and \(6r_v+5\ge K\).
    Choose distinct \(\zeta_1,\ldots,\zeta_K\in\mathsf C_{r_v}\), and let \(E:\R^{r_v}\to\R^{d_v}\) be the canonical coordinate embedding.
    Define
    \[
        u_\ell\coloneqq E\frac{\zeta_\ell}{\sqrt{r_v}},
        \qquad
        v_\ell\coloneqq \rho_0V u_\ell,
        \qquad
        v_{K+\ell}\coloneqq -\rho_0V u_\ell
    \]
    for \(\ell \in[K]\).
    Then, \(v_i\in\valspace=\ball_{d_v}(V)\),
    \[
        \frac{1}{N}\sum_{i=1}^N v_i=0,
        \qquad \text{and} \qquad
        \|v_i\|_2=\rho_0V
    \]
    for all \(i\in[N]\).
    For \(i\neq j\),
    \[
        \dotp{v_i}{v_j}
        \le
        \gamma_0\rho_0^2V^2.
    \]
    Also,
    \[
        \dim\vspan\{v_1,\ldots,v_N\}
        \le
        \dim \im(E)
        \le
        r_v
        \le
        K/6.
    \]

    Define
    \[
        P_N\coloneqq \frac{1}{N}\sum_{i=1}^N\delta_{(k_i,v_i)},
        \qquad \text{and} \qquad
        \nu_N\coloneqq \frac{1}{N}\sum_{i=1}^N\delta_{k_i}.
    \]
    Since \(k_i \in \keyspace \cap \quespace\) and \(\|v_i\|_2 \le V\) for all \(i \in [N]\), \(P_N \in \kvmeasures*[N]\) and \(\nu_N \in \quemeasures\).
    
    The keys \(k_i\) are distinct, so
    \[
        |\supp(P_N)|=|\supp(\nu_N)|=N.
    \]

    It remains to verify sharpness.
    For \(i\neq j\),
    \[
        \frac{\dotp{k_i}{k_j}}{\sqrt{d_k}}
        \le
        25\gamma_0\log N,
    \]
    while
    \[
        \frac{\|k_i\|_2^2}{\sqrt{d_k}}
        =
        25\log N.
    \]
    Hence, for \(j\neq i\),
    \[
        a_{k_j}(k_i\mid P_N)
        =
        \frac{
            N\exp(\dotp{k_i}{k_j}/\sqrt{d_k})
        }{
            \sum_{m=1}^N\exp(\dotp{k_i}{k_m}/\sqrt{d_k})
        }
        \le
        N\exp(-25(1-\gamma_0)\log N)
        =
        N^{-3/2}.
    \]
    Therefore,
    \[
        \frac{1}{N}\sum_{j\neq i}a_{k_j}(k_i\mid P_N)
        \le
        N^{-3/2}
    \]
    for all \(i\in[N]\).
    
    Thus, \((P_N,\nu_N)\) is a lookup instance of size \(N\), and it satisfies \textup{(i)}--\textup{(iii)}.
\end{proof}

\subsubsection{Approximating lookup instances requires almost the full context}

We now show that lookup instances are hard to compress.
Indeed, querying a key of such an instance retrieves the associated value almost perfectly, and the retrievable values are separated by a constant fraction of the value norm.
As a consequence, compressing lookup instances to even half their context size incurs compression error at the maximum possible scale \(V^2\).

\begin{lemma} \label{lem:lookup-output-geometry}
    Fix an even integer \(N \ge 100\), and let \((P_N,\nu_N)\) be the lookup instance of size \(N\) from \Cref{lem:lookup-instance-membership}.
    Then, \(\att{0}{P_N}=0\), and, for every \((k,v) \in \supp(P_N)\),
    \[
        \|\att{k}{P_N} - v\|_2
        \le
        2\rho_0 V N^{-3/2}.
    \]
    Consequently,
    \[
        \|\att{k}{P_N}\|_2
        \le
        \frac{249}{250}V
        \qquad \text{and} \quad
        \|\att{k}{P_N} - \att{k'}{P_N}\|_2
        \ge
        \frac{V}{4}
    \]
    for all \(k,k' \in \supp(\nu_N)\) with \(k \neq k'\).
\end{lemma}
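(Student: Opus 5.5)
The plan is to use only the defining properties of a lookup instance from \Cref{def:lookup-instance}, together with property (iii) of \Cref{lem:lookup-instance-membership} that all keys have the same norm. Everything reduces to writing attention outputs as convex combinations of values and then applying the triangle inequality.

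\emph{Step 1: the zero query.} Since \(\kappa(0,k)=\exp(-\|k\|_2^2/(2\sqrt{d_k}))\) and all \(\|k_i\|_2\) are equal, we have \(a_{k_i}(0\mid P_N)=1\) for every \(i\). Hence \(\att{0}{P_N}=\frac1N\sum_i v_i\), which is \(0\) by (centering).

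\emph{Step 2: the lookup error bound.} Fix \((k_i,v_i)\in\supp(P_N)\). Because \(\frac1N\sum_j a_{k_j}(k_i\mid P_N)=1\), we can write
\[
\att{k_i}{P_N}-v_i=\frac1N\sum_{j=1}^N a_{k_j}(k_i\mid P_N)\,(v_j-v_i).
\]
The \(j=i\) term vanishes. Each remaining term satisfies \(\|v_j-v_i\|_2\le 2\rho_0V\) by (boundary). Combining this with (sharpness) gives \(\|\att{k_i}{P_N}-v_i\|_2\le 2\rho_0VN^{-3/2}\).

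\emph{Step 3: the consequences.} These are numerical checks using \(N\ge100\), so that \(N^{-3/2}\le 10^{-3}\).

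For the norm bound, \(\|\att{k_i}{P_N}\|_2\le\rho_0V(1+2\cdot10^{-3})\le 0.992\,V\le\frac{249}{250}V\).

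For separation, note that distinct keys \(k\neq k'\) in \(\supp(\nu_N)\) are \(k_i,k_j\) with \(i\neq j\). By (boundary) and (separation),
\[
\|v_i-v_j\|_2^2=2\rho_0^2V^2-2\dotp{v_i}{v_j}\ge 2(1-\gamma_0)\rho_0^2V^2=0.2\,\rho_0^2V^2,
\]
so \(\|v_i-v_j\|_2\ge 0.44\,V\). The triangle inequality with Step 2 then gives
\[
\|\att{k_i}{P_N}-\att{k_j}{P_N}\|_2\ge 0.44\,V-4\rho_0V\cdot10^{-3}\ge V/4.
\]

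I do not expect a real obstacle. The only point needing care is Step 1: it relies on the equal key norms from \Cref{lem:lookup-instance-membership}(iii), since these make the Gaussian-kernel weights at \(q=0\) uniform.
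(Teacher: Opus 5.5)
Your proof is correct and follows the paper's argument step for step: uniform weights at \(q=0\) combined with centering, a convex-combination bound that uses sharpness and the boundary norm \(\rho_0 V\), and then the triangle inequality with \(\|v_i-v_j\|_2\ge\rho_0V\sqrt{2(1-\gamma_0)}\). Your centered form \(\frac1N\sum_j a_{k_j}(k_i\mid P_N)(v_j-v_i)\) is only a regrouping of the paper's \((w_i-1)v_i+\sum_{j\neq i}w_jv_j\). Your explicit use of equal key norms to get \(a_{k_j}(0\mid P_N)=1\) supplies a justification that the paper leaves implicit.
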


\begin{lemma} \label{lem:hard-query-lower-bound}
    Fix an even integer \(N\ge 100\), and let \((P_N,\nu_N)\) be the lookup instance of size \(N\) from \Cref{lem:lookup-instance-membership}.
    Assume \(\|v\|_2 \le V\) for all \(v \in \valspace\).
    Then, for every \(\hat P\in\kvmeasures[N/2]\),
    \[
        \err*{P_N}{\nu_N}{\hat P}
        \gtrsim
        V^2.
    \]
\end{lemma}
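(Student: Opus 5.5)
Since \(\hat P\) has at most \(N/2\) atoms, a pigeonhole argument will find many queries at which the compressed cache must answer badly. The key tool is \Cref{lem:lookup-output-geometry}: the targets \(\att{k_i}{P_N}\), \(i\in[N]\), are pairwise \(V/4\)-separated and lie close to the \(v_i\). The error is
\[
    \err{P_N}{\nu_N}{\hat P}=\frac1N\sum_{i=1}^N\bigl\|\att{k_i}{\hat P}-\att{k_i}{P_N}\bigr\|_2^2 ,
\]
so it suffices to show that a constant fraction of the \(N\) query points incur squared error \(\gtrsim V^2\).

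\textbf{Step 1: the atoms of \(\hat P\) are arbitrary.} \(\hat P\) is any measure on \(\kvspace\) with at most \(N/2\) atoms. Its keys need not lie in \(\supp(P_N)\), so I cannot use a counting argument over which original tokens were retained. Instead I will use the structure of the outputs. Each \(\att{k_i}{\hat P}\) is a convex combination of the at most \(N/2\) values \(\hat v_1,\dots,\hat v_m\) of \(\hat P\). The weights \(\hat a_\ell(k_i)\) form a probability vector indexed by \(\ell\in[m]\).

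\textbf{Step 2: pigeonhole on dominant atoms.} Suppose \(\|\att{k_i}{\hat P}-\att{k_i}{P_N}\|_2\le c V\) for a small constant \(c\) (say \(c=1/50\)). Then \(\att{k_i}{\hat P}\) has norm at least \(\rho_0V-2cV\), which is close to the maximal norm \(V\). A convex combination of vectors in \(\ball_{d_v}(V)\) with norm close to \(V\) must be concentrated near a point. Concretely, \(\sum_\ell \hat a_\ell \|\hat v_\ell - w\|^2 = \sum_\ell\hat a_\ell\|\hat v_\ell\|^2-\|w\|^2 \le V^2-\|w\|^2\), where \(w=\att{k_i}{\hat P}\). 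Hence a set of atoms of weight at least \(1/2\) has values within distance \(O(\sqrt{V^2-\|w\|^2})\) of \(w\). Since \(\|w\|\ge(\rho_0-2c)V\), this radius is about \(\sqrt{1-0.95}\,V\approx 0.22V\), which is too large compared with the separation \(V/4\).

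To fix this I will use a Markov-type argument: there exists an atom \(\ell(i)\) with \(\hat a_{\ell(i)}\ge 1/2\) mass nearby, in the sense that some atom lies within \(\sqrt{2(V^2-\|w\|^2)}\) of \(w\). The radius comparison is delicate and must be tuned. If I choose \(\rho_0\) close to \(1\), the constants \(\rho_0=0.99\) and separation \(V/4\) give \(V^2-\|w\|^2\le V^2(1-(0.99-2c)^2)\approx 0.04V^2\) for small \(c\). Then some value \(\hat v_{\ell(i)}\) lies within roughly \(0.2V\cdot\sqrt{2}\cdot\ldots\). I will choose the constant so that the ball around \(\att{k_i}{P_N}\) of radius \(cV+\)(that radius) has radius less than \(V/8\), using that the average squared distance is at most \(0.04V^2\) and hence the minimum distance is at most \(0.2V\) for some atom. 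If \(0.2V+cV\ge V/8\), I will sharpen this by using that the weighted mean distance bound holds with weights, combined with the fact that \(w\) itself is \(cV\)-close. This constant-chasing is the main obstacle, and I expect to rely on \(\|\att{k}{P_N}\|\) being within \(2\rho_0VN^{-3/2}\) of \(\rho_0V\).

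\textbf{Step 3: conclude.} Once each "good" query \(i\) is assigned an atom \(\ell(i)\) with \(\|\hat v_{\ell(i)}-\att{k_i}{P_N}\|<V/8\), the map \(i\mapsto\ell(i)\) is injective by the \(V/4\)-separation. Hence there are at most \(m\le N/2\) good queries. At least \(N/2\) queries therefore have squared error \(\ge c^2V^2\), which gives \(\err{P_N}{\nu_N}{\hat P}\ge \tfrac12 c^2V^2\gtrsim V^2\).

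If the nearest-atom constant in Step 2 fails, the fallback is to project onto the direction \(\att{k_i}{P_N}/\|\cdot\|\). Averaging \(\langle\hat v_\ell,u_i\rangle\le V\) forces an atom \(\ell\) with \(\langle \hat v_\ell,u_i\rangle\ge(\rho_0-2c)V\). Together with \(\|\hat v_\ell\|\le V\), this confines \(\hat v_\ell\) to a thin cap around \(u_i\). The separation condition \(\langle v_i,v_j\rangle\le\gamma_0\rho_0^2V^2\) then makes the caps of distinct \(i\) disjoint, which again yields an injective assignment.
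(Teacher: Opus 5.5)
Your skeleton is the paper's. You call $i$ good if $\|\att{k_i}{\hat P}-\att{k_i}{P_N}\|_2\le cV$, assign each good $i$ an atom of $\hat P$ injectively, deduce at most $N/2$ good indices, and collect error $\ge c^2V^2/2$ from the rest. Your fallback mechanism is also exactly what the paper does: pick an atom with large projection onto the target direction, then show no atom can serve two targets.

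\textbf{The gap.} Injectivity is the only substantive step, and you assert it rather than check it. With your constant it is false. Suppose $\|\hat v\|_2\le V$ and $\langle\hat v,s_i\rangle\ge tV$, $\langle\hat v,s_j\rangle\ge tV$ for unit vectors with $\langle s_i,s_j\rangle\le\gamma_0$. Cauchy--Schwarz gives a contradiction only if $2t>\sqrt{2+2\gamma_0}$, i.e.\ $t>\sqrt{0.95}\approx 0.975$. This is sharp: when $\langle s_i,s_j\rangle=\gamma_0$, the point $V(s_i+s_j)/\|s_i+s_j\|_2$ lies in both caps. Your threshold at $c=1/50$ is $(\rho_0-2c)V=0.95V$, which is below this. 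The caps then have angular radius about $18^\circ$ around directions that may be only about $26^\circ$ apart, so they overlap and injectivity fails.

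\textbf{The fix.} Since $\rho_0=0.99$ sits only just above $0.975$, you need $c$ below roughly $10^{-2}$. The paper takes $c=1/200$ and projects onto $v_i$ rather than onto $\att{k_i}{P_N}$, which avoids degrading $\gamma_0$ by the $O(N^{-3/2})$ perturbation. It gets $\langle\hat v_{r(i)},v_i\rangle\ge 0.97V^2$ and concludes from $194/99>\sqrt{19/5}\ge\|s_i+s_j\|_2$.

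\textbf{Your primary route.} The nearest-atom argument cannot close as framed, for any $c$. The variance identity only guarantees an atom within $\sqrt{V^2-\|w\|_2^2}$ of $w$. Since you only know $\|w\|_2\ge 0.988V-cV$, this guaranteed radius is about $0.15V$ even as $c\to 0$. That already exceeds the $V/8$ you need when working from the $V/4$ separation in \Cref{lem:lookup-output-geometry}.

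That $V/4$ is lossy. The separation condition directly gives $\|v_i-v_j\|_2\ge\rho_0\sqrt{2(1-\gamma_0)}\,V\approx 0.44V$. With $c=1/200$ the radius is about $0.19V$, below the half-separation of about $0.22V$. So this route also works once you use the sharper separation and a small $c$.

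In either version, what is missing is the explicit check that the chosen $c$ beats the geometric threshold. At $c=1/50$ both versions fail.
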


\begin{proof}[Proof of \Cref{lem:lookup-output-geometry}]
    Write
    \[
        P_N = \frac{1}{N} \sum_{i=1}^N \delta_{(k_i,v_i)},
        \qquad
        \nu_N = \frac{1}{N} \sum_{i=1}^N \delta_{k_i},
    \]
    as in \Cref{def:lookup-instance}, and set
    \[
        w_j(q) \coloneqq \frac{1}{N} a_{k_j}(q\mid P_N).
    \]
    Then, \(w_j(q)\ge 0\), \(\sum_{j=1}^N w_j(q)=1\), and
    \[
        \att{q}{P_N} = \sum_{j=1}^N w_j(q)v_j .
    \]

    Since \(a_{k_j}(0 \mid P_N) = 1\) for every \(j \in [N]\), centering gives
    \[
        \att{0}{P_N}
        =
        \frac{1}{N} \sum_{j=1}^N v_j
        =
        0 .
    \]

    Fix \(i\in[N]\). By sharpness,
    \[
        \sum_{j\neq i} w_j(k_i)
        =
        \frac{1}{N} \sum_{j\neq i} a_{k_j}(k_i\mid P_N)
        \le
        N^{-3/2}.
    \]
    Hence, using \(\|v_j\|_2=\rho_0V\) for all \(j \in [N]\),
    \begin{align*}
        \|\att{k_i}{P_N}-v_i\|_2
        &=
        \left\|
            (w_i(k_i)-1)v_i
            +
            \sum_{j\neq i} w_j(k_i)v_j
        \right\|_2
        \\&\le
        (1-w_i(k_i))\|v_i\|_2
        +
        \sum_{j\neq i}w_j(k_i)\|v_j\|_2
        \\&=
        2\rho_0V \sum_{j\neq i} w_j(k_i)
        \le
        2\rho_0VN^{-3/2}.
    \end{align*}
    This proves the approximation bound.

    The norm bound follows immediately. For \(N\ge100\),
    \[
        \|\att{k_i}{P_N}\|_2
        \le
        \|v_i + (\att{k_i}{P_N} - v_i)\|_2
        \le
        \|v_i\|_2+2\rho_0VN^{-3/2}
        =
        \rho_0V(1+2N^{-3/2})
        \le
        \frac{249}{250}V .
    \]
    Finally, if \(i\neq j\), then the separation condition gives
    \[
        \|v_i - v_j\|_2^2
        =
        2\rho_0^2 V^2 - 2 \dotp{v_i}{v_j}
        \ge
        2(1-\gamma_0)\rho_0^2V^2.
    \]
    Therefore, again using \(N\ge100\), \(\rho_0=99/100\), and \(\gamma_0=9/10\),
    \begin{align*}
        \|\att{k_i}{P_N} - \att{k_j}{P_N}\|_2
        &\ge
        \|v_i - v_j\|_2
        -
        \|\att{k_i}{P_N} - v_i\|_2
        -
        \|\att{k_j}{P_N} - v_j\|_2
        \\&\ge
        \rho_0V\sqrt{2(1-\gamma_0)}
        -
        4\rho_0VN^{-3/2}
        \ge
        \frac{V}{4}.
    \end{align*}
    That proves the claim.
\end{proof}

\begin{proof}[Proof of \Cref{lem:hard-query-lower-bound}]
    Write
    \[
        P_N = \frac{1}{N} \sum_{i=1}^N \delta_{(k_i,v_i)},
        \qquad
        \nu_N = \frac{1}{N} \sum_{i=1}^N \delta_{k_i},
    \]
    and let
    \[
        y_i\coloneqq \att{k_i}{P_N}.
    \]
    By \Cref{lem:lookup-output-geometry},
    \[
        \|y_i-v_i\|_2\le 2\rho_0VN^{-3/2}
    \]
    for all \(i\in[N]\).

    Fix \(\hat P\in\kvmeasures[N/2]\), and write
    \[
        \hat P=\sum_{r=1}^M \beta_r\delta_{(\hat k_r,\hat v_r)}
    \]
    for \(M\le N/2\).
    By assumption, \(\|\hat v_r\|_2\le V\) for all \(r \in [M]\).
    For each \(i\in[N]\), set
    \[
        z_i\coloneqq \att{k_i}{\hat P}.
    \]
    Since \(z_i\) is a convex combination of the values \(\hat v_1,\ldots,\hat v_M\), we may write
    \[
        z_i=\sum_{r=1}^M \omega_{ir}\hat v_r,
    \]
    with \(\omega_{ir} \ge 0\) and \(\|\omega_i\|_1=1\) for all \(i \in [N]\).

    Let
    \[
        \mathcal I
        \coloneqq
        \left\{
            i\in[N] \mid \|z_i-y_i\|_2\le \frac{V}{200}
        \right\}
    \]
    be the well-approximated indices.
    We claim that \(|\mathcal I|\le M\).
    To that end, fix \(i\in\mathcal I\). Then,
    \begin{align*}
        \dotp{z_i}{v_i}
        &=
        \dotp{y_i}{v_i} + \dotp{z_i - y_i}{v_i}
        \\&\ge
        \dotp{y_i}{v_i}
        -
        \|z_i - y_i\|_2 \|v_i\|_2
        \\&=
        \dotp{v_i}{v_i} + \dotp{y_i - v_i}{v_i}
        -
        \|z_i - y_i\|_2 \|v_i\|_2
        \\&\ge
        \|v_i\|_2^2
        -
        \|y_i-v_i\|_2 \|v_i\|_2
        -
        \frac{V}{200} \|v_i\|_2
        \\&\ge
        \left(\rho_0^2 - 2\rho_0^2N^{-3/2} - \frac{\rho_0}{200}\right)V^2
        \ge
        \frac{97}{100}V^2,
    \end{align*}
    where the last step uses \(N\ge100\) and \(\rho_0=99/100\).
    Since \(z_i=\sum_r\omega_{ir}\hat v_r\), there exists \(r(i) \in [M]\) such that
    \[
        \dotp{\hat v_{r(i)},v_i}
        \ge
        \frac{97}{100}V^2.
    \]

    The map \(i\mapsto r(i)\) is injective.
    Indeed, suppose that \(r(i)=r(j)=r\) for distinct \(i,j\in\mathcal I\).
    Set
    \[
        u\coloneqq \frac{\hat v_r}{V},
        \qquad
        s_i\coloneqq \frac{v_i}{\rho_0V},
        \qquad \text{and} \qquad
        s_j\coloneqq \frac{v_j}{\rho_0V}.
    \]
    Then, \(\|u\|_2 \le 1\), \(\|s_i\|_2 = \|s_j\|_2 = 1\), and by separation,
    \[
        \dotp{s_i}{s_j} \le \gamma_0.
    \]
    On the other hand,
    \[
        \dotp{u, s_i + s_j}
        \ge
        \frac{2\cdot 97}{100\rho_0}
        =
        \frac{194}{99}.
    \]
    This contradicts Cauchy--Schwarz, since
    \[
        \dotp{u,s_i+s_j}
        \le
        \|s_i+s_j\|_2
        \le
        \sqrt{2+2\gamma_0}
        =
        \sqrt{\frac{19}{5}}
        <
        \frac{194}{99}.
    \]
    Hence, \(|\mathcal I|\le M\le N/2\).

    Therefore, at least \(N/2\) indices satisfy \(\|z_i - y_i\|_2>V/200\).
    Since \(\nu_N\) is uniform on the hard queries,
    \begin{align*}
        \err*{P_N}{\nu_N}{\hat P}
        &=
        \frac{1}{N} \sum_{i=1}^N \|y_i-z_i\|_2^2
        \\&\ge
        \frac{1}{N} \cdot \frac{N}{2} \cdot \left(\frac{V}{200}\right)^2
        =
        \frac{V^2}{80000}.
    \end{align*}
    That proves the claim.
\end{proof}

\subsubsection{Query-aware lower bound}

With the lookup instance established as hard to compress, we turn to the query-aware minimax lower bound.
By itself, the lookup instance only gives the lower bound at the lowest redundancy scale, where \(\tail_{K/3}(\cov_{P,\nu})\) is of order \(V^2K\) and every \(K\)-atomic approximation incurs error of order \(V^2\).
To show that \(\tail_{K/3}(\cov_{P,\nu})\) faithfully governs the minimax risk throughout all scales \(R\in[0,V^2K]\), we need the same obstruction at every intermediate such scale.
We achieve this by diluting the lookup query distribution with the easy query distribution \(\delta_0\).
The dummy query contributes only low-rank covariance, while the hard queries contribute both the \((K/3)\)-tail and the approximation error.
Thus, the mixture \(p\nu_N+(1-p)\delta_0\) scales the relevant redundancy and the compression difficulty by the same factor \(p\), allowing us to tune both to the desired level \(R\).

\begin{lemma} \label{lem:lookup-diluted-covariance}
    Let \(K\ge 50\) and set \(N\coloneqq 2K\).
    Suppose \Cref{asp:richness} holds, and let \((P_N,\nu_N)\) be the lookup instance from \Cref{lem:lookup-instance-membership}.
    For \(p\in[0,1]\), define the diluted query distribution
    \[
        \nu_{N,p}
        \coloneqq
        p \, \nu_N + (1-p) \, \delta_0.
    \]
    Then,
    \[
        \tail_{K/3}(\cov_{P_N,\nu_{N,p}})
        \asymp
        p \, V^2K.
    \]
\end{lemma}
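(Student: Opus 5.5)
The plan is to pass from the operator \(\cov_{P_N,\nu_{N,p}}\) to an \(N\times N\) Gram matrix and then read off its spectrum. Write \(\Gamma_i \coloneqq \Gamma_{P_N}(k_i,v_i)\). Since \(P_N\) is uniform on \(N\) atoms, the nonzero eigenvalues of \(\cov_{P_N,\nu_{N,p}}\), counted with multiplicity, coincide with those of the Gram matrix
\[
    G_{ij} \coloneqq \tfrac1N \dotp{\Gamma_i}{\Gamma_j}_{\hilbert_{\nu_{N,p}}} .
\]
The inner product on \(\hilbert_{\nu_{N,p}}\) splits along the mixture, which gives \(G = p\,G^{(N)} + (1-p)\,G^{(0)}\). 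Here \(G^{(N)}\) is the Gram matrix in \(L_2(\nu_N)\), and \(G^{(0)}_{ij} = \frac1N\dotp{\Gamma_i(0)}{\Gamma_j(0)}\). At \(q=0\) we have \(a_k(0\mid P_N)=1\), and \(\att{0}{P_N}=0\) by \Cref{lem:lookup-output-geometry}. Hence \(\Gamma_i(0)=(v_i,0)\), so \(G^{(0)}\) is positive semidefinite with rank at most \(\dim\vspan\{v_i\}\le K/6\) by \Cref{lem:lookup-instance-membership}. Both \(G^{(N)}\) and \(G^{(0)}\) are positive semidefinite.

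\emph{Upper bound.} I will use Weyl's inequality \(\lambda_{i+j+1}(A+B)\le\lambda_{i+1}(A)+\lambda_{j+1}(B)\) with \(B=(1-p)G^{(0)}\) and \(j=\lfloor K/6\rfloor\). Since \(\lambda_{j+1}(B)=0\), this gives
\[
    \tail_{K/3}(G)\le p\,\tail_{K/6}(G^{(N)})\le p\,\trace(G^{(N)}).
\]
Next I bound the trace directly. For each \(q\) we have \(\|\Gamma_i(q)\|^2\le 4V^2a_{k_i}(q\mid P_N)^2+2V^2\bigl(a_{k_i}(q\mid P_N)^2+1\bigr)\). Moreover \(\frac1N\sum_i a_{k_i}(q\mid P_N)=1\) with nonnegative terms, so \(\sum_i a_{k_i}(q\mid P_N)^2\le N^2\). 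Averaging over \(q\sim\nu_N\) and over \(i\) then yields \(\trace(G^{(N)})\lesssim V^2N\asymp V^2K\).

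\emph{Lower bound.} Because \(G^{(0)}\succeq0\), monotonicity gives \(\lambda_j(G)\ge p\,\lambda_j(G^{(N)})\) for every \(j\). So it suffices to prove \(\tail_{K/3}(G^{(N)})\gtrsim V^2K\). I will compare \(\Gamma_i\), viewed as a function on \(\supp(\nu_N)\), with the idealized profile \(\tilde\Gamma_i(k_j)=\bigl(0,\,V(N\delta_{ij}-1)\bigr)\). Sharpness gives \(a_{k_i}(k_i\mid P_N)\ge N(1-N^{-3/2})\) and \(\sum_{j\ne i}a_{k_i}(k_j\mid P_N)\le N\cdot N^{-1/2}\). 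By \Cref{lem:lookup-output-geometry}, the numerator coordinate at \(q=k_i\) is \(a\,(v_i-y_i)\), which has norm \(O(VN\cdot N^{-3/2})\). The idealized Gram matrix is computed exactly:
\[
    \tilde G_{ij}=\tfrac{V^2}{N^2}\sum_q(N\delta_{iq}-1)(N\delta_{jq}-1)=V^2\bigl(\delta_{ij}-\tfrac1N\bigr).
\]
This is \(V^2\) times the centering projector, so it has \(N-1=2K-1\) eigenvalues equal to \(V^2\).

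The main obstacle is controlling the perturbation \(E=G^{(N)}-\tilde G\) in operator norm. First I will bound \(\|\Gamma-\tilde\Gamma\|\), where \(\Gamma\) and \(\tilde\Gamma\) are the maps from \(\R^N\) (with the \(\frac1N\)-weighted norm) into \(L_2(\nu_N)\). I will do this through the Frobenius norm \(\frac1N\sum_i\|\Gamma_i-\tilde\Gamma_i\|^2_{L_2(\nu_N)}\), split into the diagonal entries \(q=k_i\), the off-diagonal numerator parts, and the off-diagonal denominator parts. Each piece is controlled by the sharpness sums and is \(O(V^2N^{-c})\) for some \(c>0\), or at worst a small constant fraction of \(V^2\) once \(N\ge100\). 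Then \(\|E\|\le 2\|\tilde\Gamma\|\,\|\Gamma-\tilde\Gamma\|+\|\Gamma-\tilde\Gamma\|^2\le V^2/2\), say. If that computation is too lossy, my fallback is to use singular-value interlacing directly on \(\Gamma\) rather than on the Gram matrix. Weyl's inequality then gives \(\lambda_j(G^{(N)})\ge V^2/2\) for \(j\le N-1\), so \(\tail_{K/3}(G^{(N)})\ge (2K-1-K/3)\,V^2/2\gtrsim V^2K\). Combining with the upper bound gives \(\tail_{K/3}(\cov_{P_N,\nu_{N,p}})\asymp pV^2K\); the case \(p=0\) holds trivially.
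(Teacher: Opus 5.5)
Your proof is correct, and its skeleton matches the paper's. You pass from \(\cov_{P_N,\nu_{N,p}}\) to a Gram operator and split it as \(pG_N+(1-p)G_0\). You use \(\rank G_0\le K/6\) for the upper bound, and \(G_p\succeq pG_N\) together with a spectral lower bound on \(G_N\) for the lower bound. The paper works with \(T_\rho^*T_\rho\) on the \((N-1)\)-dimensional space \(\mathsf H_0\) of centered functions in \(L_2(P_N)\). Your \(N\times N\) matrix only adds a zero eigenvalue along the constant vector, so nothing changes.

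The tools differ in three places.

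\emph{Upper bound.} The paper uses Ky Fan with a subspace of dimension at most \(K/3\) containing \(\ran(G_0)\), where you use the Weyl inequality \(\lambda_{i+j+1}(A+B)\le\lambda_{i+1}(A)+\lambda_{j+1}(B)\). The two are interchangeable.

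\emph{Trace bound.} Your bound via \(\sum_i a_{k_i}(q\mid P_N)^2\le N^2\) is slightly cleaner than the paper's, because it needs no sharpness.

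\emph{Lower bound on \(G_N\).} This is the substantive difference. The paper forms no idealized matrix and uses no perturbation theory. It discards the value coordinate, which can only increase \(\|Tu\|^2\). It then observes that for unit \(u\in\mathsf H_0\), the scalar coordinate of \(Tu\) at \(k_i\) equals \(Vu_i\) up to an error of at most \(2V/N\), using \(|u_j|\le\sqrt N\) and the sharpness sums. This gives \(G_N\succeq\frac{V^2}{4}I_{\mathsf H_0}\) directly. In effect it is your comparison with \(\tilde\Gamma\), restricted to the scalar coordinate and done as a quadratic-form bound. So it never has to control the numerator coordinate, and it does not need \Cref{lem:lookup-output-geometry} for this step.

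Your route costs a Hilbert--Schmidt computation, and that computation goes through without any fallback. Let \(s_j\coloneqq\sum_{i\neq j}a_{k_i}(k_j\mid P_N)\le N^{-1/2}\). Sum over queries first and use \(a_{k_i}(k_j\mid P_N)\le s_j\) for \(i\neq j\). Then the squared Hilbert--Schmidt norm of \(\Gamma-\tilde\Gamma\) is at most \(10V^2/N^2\), far below a constant fraction of \(V^2\).

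In exchange, your route shows that \(G_N\) is within \(O(V^2/N)\) of \(V^2\) times the centering projector in operator norm. This pins all \(N-1\) nonzero eigenvalues near \(V^2\) from both sides, and it would also give the trace bound for free.
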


\begin{lemma} \label{lem:two-query-fitting-lookup}
    Let \(K\ge 50\) and set \(N\coloneqq 2K\). 
    Suppose \Cref{asp:richness} holds and let \((P_N,\nu_N)\) be the lookup instance from \Cref{lem:lookup-instance-membership}.
    Then, for every \(k \in \supp(\nu_N)\), there exists \(\hat P_k\in\kvmeasures[2]\) such that
    \[
        \att{0}{\hat P_k}
        =
        \att{0}{P_N},
        \qquad \text{and} \qquad
        \att{k}{\hat P_k}
        =
        \att{k}{P_N}.
    \]
    Consequently, for every \(p\in[0,1]\) and every \(k \in \supp(\nu_N)\),
    \[
        \err*{P_N}{p \, \delta_{k} + (1-p) \, \delta_0}{\hat P_k}
        =
        0.
    \]
\end{lemma}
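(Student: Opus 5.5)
The plan is to construct \(\hat P_k\) explicitly as a two-atom measure on the keys \(k\) and \(-k\) with antipodal values. The two atoms get equal mass, so the query \(0\) weights them equally, while the query \(k\) weights them very unequally because attention is sharp.

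Write \(k = k_i\) and \(y \coloneqq \att{k}{P_N}\). By \Cref{lem:lookup-output-geometry}, \(\att{0}{P_N}=0\) and \(\|y\|_2\le \frac{249}{250}V\). Set
\[
    \hat P_k \coloneqq \tfrac12\,\delta_{(k,\,c y)} + \tfrac12\,\delta_{(-k,\,-c y)}
\]
for a scalar \(c>0\) to be fixed below.

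First I check membership. By \Cref{lem:lookup-instance-membership}(iii), \(\|{-k}\|_2=\|k\|_2 = 5d_k^{1/4}\sqrt{\log N}\le 8 d_k^{1/4}\sqrt{\log K}\). So \(\pm k\in\keyspace\cap\quespace\) by \Cref{asp:richness}\ref{itm:richness:1}, and in particular \(0\in\quespace\) and \(k\in\quespace\). The values \(\pm cy\) lie in \(\valspace=\ball_{d_v}(V)\) as long as \(c\|y\|_2\le V\).

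Next I compute the two attention outputs. Since \(\kappa(0,k)=\kappa(0,-k)\), the query \(0\) puts weight \(\tfrac12\) on each atom, so \(\att{0}{\hat P_k} = \tfrac12 cy - \tfrac12 cy = 0 = \att{0}{P_N}\). For the query \(k\), the weight on the first atom is
\[
    \alpha \coloneqq \frac{\kappa(k,k)}{\kappa(k,k)+\kappa(k,-k)} = \frac{1}{1+\exp(-2\|k\|_2^2/\sqrt{d_k})} = \frac{1}{1+N^{-50}},
\]
using \(\|k\|_2^2/\sqrt{d_k}=25\log N\). Hence \(\att{k}{\hat P_k} = (2\alpha-1)\,c\,y\). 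Choosing \(c \coloneqq 1/(2\alpha-1) = (1+N^{-50})/(1-N^{-50})\) gives exactly \(\att{k}{\hat P_k}=y=\att{k}{P_N}\).

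The only step that needs checking is that this \(c\) keeps the values admissible. Since \(N\ge 100\), we have \(c\le 1+3N^{-50}\), and therefore \(c\|y\|_2\le (1+3N^{-50})\tfrac{249}{250}V\le V\). The slack \(\|y\|_2\le\frac{249}{250}V\) from \Cref{lem:lookup-output-geometry} is exactly what makes this work.

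Finally, the query distribution \(p\,\delta_k+(1-p)\,\delta_0\) is supported on \(\{0,k\}\), and \(\hat P_k\) reproduces \(\att{\cdot}{P_N}\) exactly at both points. So the integrand of \(\Err_{P_N,\,p\delta_k+(1-p)\delta_0}(\hat P_k)\) vanishes almost surely, and the error is \(0\) for every \(p\in[0,1]\).
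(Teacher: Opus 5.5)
Your proposal is correct and follows the paper's proof essentially step for step. The paper also takes \(\hat P_k = \tfrac12\delta_{(k,w)}+\tfrac12\delta_{(-k,-w)}\) with \(w = y/\tanh(25\log N)\); this is exactly your \(cy\), since \(2\alpha-1=\tanh(25\log N)\). It then uses the same \(\tfrac{249}{250}V\) slack from Lemma~\ref{lem:lookup-output-geometry} to guarantee \(\pm w\in\valspace\).
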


\thmlowerboundaware*

\begin{proof}[Proof of \Cref{lem:lookup-diluted-covariance}]
    Write
    \[
        P_N = \frac{1}{N} \sum_{i=1}^N \delta_{(k_i,v_i)},
        \qquad \text{and} \qquad
        \nu_N = \frac{1}{N} \sum_{i=1}^N \delta_{k_i},
    \]
    and abbreviate
    \[
        a_{ij} \coloneqq a_{k_j}(k_i\mid P_N),
        \qquad \text{and} \qquad
        y_i \coloneqq \att{k_i}{P_N}.
    \]
    Let
    \[
        \mathsf H_0
        \coloneqq
        \left\{
            u\in L_2(P_N) \mid \int_\kvspace u \, \diff P_N=0
        \right\}.
    \]
    This space has dimension \(N-1\).
    Define
    \[
        T_\rho: \mathsf H_0\to \hilbert_\rho,
        \qquad
        u
        \mapsto
        \int_X \Gamma_{P_N}(z)u(z)\,\diff P_N(z)
    \]
    and let
    \[
        G_\rho \coloneqq T_\rho^*T_\rho
    \]
    be the corresponding Gram operator on \(\mathsf H_0\).
    Since
    \[
        \int_X\Gamma_{P_N}\,\diff P_N=0,
    \]
    we have
    \[
        T_\rho T_\rho^*=\cov_{P_N,\rho}.
    \]
    Hence, \(G_\rho\) and \(\cov_{P_N,\rho}\) have the same nonzero eigenvalues.

    We first prove
    \[
        \tail_{K/3}(G_{\nu_N}) \asymp V^2K.
    \]
    For \(u\in\mathsf H_0\), write \(u_i\coloneqq u(k_i,v_i)\), so that
    \[
        \frac{1}{N} \sum_{i=1}^N u_i=0,
        \qquad \text{and} \qquad
        \|u\|_{L_2(P_N)}^2=\frac{1}{N} \sum_{i=1}^N u_i^2.
    \]
    The scalar component of \(T_{\nu_N}u\) at query \(k_i\) is
    \[
        (T_{\nu_N}u)_{\mathrm{sc}}(k_i)
        \coloneqq
        \frac{V}{N}\sum_{j=1}^N(a_{ij}-1)u_j .
    \]
    By sharpness,
    \[
        s_i \coloneqq \sum_{j\neq i}a_{ij}\le N^{-1/2},
        \qquad \text{and} \qquad
        a_{ii}=N-s_i.
    \]
    Hence, for \(u\in\mathsf H_0\),
    \begin{align*}
        (T_{\nu_N}u)_{\mathrm{sc}}(k_i)
        &=
        \frac{V}{N}\sum_{j=1}^N(a_{ij}-1)u_j
        \\&=
        Vu_i + \frac{V}{N}\left(
            -s_i u_i+\sum_{j\neq i}a_{ij}u_j
        \right).
    \end{align*}
    If \(\|u\|_{L_2(P_N)}=1\), then \(|u_j|\le \sqrt N\) for all \(j\), and therefore
    \[
        \left|
            (T_{\nu_N}u)_{\mathrm{sc}}(k_i)-Vu_i
        \right|
        \le
        \frac{2V}{N}.
    \]
    Consequently,
    \begin{align*}
        \|T_{\nu_N}u\|_{\hilbert_{\nu_N}}
        &\ge
        \left(
            \frac1N\sum_{i=1}^N
            \left|(T_{\nu_N}u)_{\mathrm{sc}}(k_i)\right|^2
        \right)^{1/2}
        \\
        &\ge
        \left(
            \frac1N\sum_{i=1}^N V^2u_i^2
        \right)^{1/2}
        -
        \left(
            \frac1N\sum_{i=1}^N \frac{4V^2}{N^2}
        \right)^{1/2}        \\
        &\ge
        V-\frac{2V}{N}
        \ge
        \frac{V}{2}.
    \end{align*}
    Thus,
    \[
        G_{\nu_N}\succeq \frac{V^2}{4}I_{\mathsf H_0}.
    \]
    Since \(\dim \mathsf H_0=N-1=2K-1\),
    \[
        \tail_{K/3}(G_{\nu_N})
        \gtrsim
        V^2K.
    \]

    For the reverse bound, again use \(s_i\le N^{-1/2}\). Then,
    \[
        \sum_{j=1}^N a_{ij}^2
        =
        (N-s_i)^2+\sum_{j\neq i}a_{ij}^2
        \le
        N^2+1.
    \]
    Since \(\sum_j a_{ij}=N\),
    \[
        \sum_{j=1}^N(a_{ij}-1)^2
        =
        \sum_{j=1}^Na_{ij}^2-N
        \le
        N^2+1.
    \]
    By \Cref{lem:lookup-output-geometry}, \(\|y_i\|_2\le V\), and by the boundary condition, \(\|v_j\|_2\le V\). Hence,
    \[
        \|v_j-y_i\|_2\le 2V.
    \]
    Therefore, using the trace formula for \(\cov_{P_N,\delta_{k_i}}\),
    \begin{align*}
        \trace(\cov_{P_N,\delta_{k_i}})
        &=
        \frac1N\sum_{j=1}^N
        \left[
            V^2(a_{ij}-1)^2
            +
            a_{ij}^2\|v_j-y_i\|_2^2
        \right]        \\
        &\le
        \frac1N
        \left[
            V^2(N^2+1)
            +
            4V^2(N^2+1)
        \right]
        \lesssim
        V^2N.
    \end{align*}
    Averaging over \(i\) gives
    \[
        \trace(G_{\nu_N})
        =
        \trace(\cov_{P_N,\nu_N})
        \lesssim
        V^2N
        \asymp
        V^2K,
    \]
    and hence
    \[
        \tail_{K/3}(G_{\nu_N})
        \lesssim
        V^2K.
    \]
    Combining the two bounds gives
    \[
        \tail_{K/3}(G_{\nu_N})
        \asymp
        V^2K.
    \]

    Set
    \[
        G_N\coloneqq G_{\nu_N},
        \qquad
        G_0\coloneqq G_{\delta_0},
        \qquad \text{and} \qquad
        G_p\coloneqq G_{\nu_{N,p}}.
    \]
    For every \(u\in\mathsf H_0\),
    \begin{align*}
        \dotp{u}{G_pu}_{\mathsf H_0}
        &=
        \|T_{\nu_{N,p}}u\|_{\hilbert_{\nu_{N,p}}}^2        \\
        &=
        p\|T_{\nu_N}u\|_{\hilbert_{\nu_N}}^2
        +(1-p)\|T_{\delta_0}u\|_{\hilbert_{\delta_0}}^2        \\
        &=
        \dotp{u}{(pG_N+(1-p)G_0)u}_{\mathsf H_0}.
    \end{align*}
    Hence,
    \[
        G_p=pG_N+(1-p)G_0
    \]
    as operators on \(\mathsf H_0\).

    At query \(0\), the scalar response vanishes and the value response lies in \(\vspan\{v_1,\ldots,v_N\}\).
    By \Cref{lem:lookup-instance-membership},
    \[
        \rank(G_0) \le \dim \vspan\{v_1,\ldots,v_N\}\le K/6.
    \]
    The lower bound follows from \(G_p\succeq pG_N\). By eigenvalue monotonicity,
    \[
        \tail_{K/3}(G_p)
        \ge p\,\tail_{K/3}(G_N)
        \gtrsim pV^2K.
    \]

    For the upper bound, let \(U\subset \mathsf H_0\) be any subspace of dimension at most \(K/3\) containing \(\ran(G_0)\). Then,
    \[
        \trace\bigl((I-\Pi_U)G_0\bigr)=0.
    \]
    Hence, by Ky Fan's variational formula,
    \begin{align*}
        \tail_{K/3}(G_p)
        &\le
        \trace\bigl((I-\Pi_U)G_p\bigr)        \\
        &=
        p\,\trace\bigl((I-\Pi_U)G_N\bigr)
        +(1-p)\trace\bigl((I-\Pi_U)G_0\bigr)        \\
        &=
        p\,\trace\bigl((I-\Pi_U)G_N\bigr)
        \le
        p\,\trace(G_N)
        \lesssim
        pV^2K.
    \end{align*}
    Thus,
    \[
        \tail_{K/3}(G_p)
        \asymp
        pV^2K.
    \]
    Since \(G_p\) and \(\cov_{P_N,\nu_{N,p}}\) have the same nonzero eigenvalues,
    \[
        \tail_{K/3}(\cov_{P_N,\nu_{N,p}})
        \asymp
        pV^2K.
        \tag*{\qedhere}
    \]
\end{proof}

\begin{proof}[Proof of \Cref{lem:two-query-fitting-lookup}]
    Fix \(k\in\supp(\nu_N)\), and set
    \[
        y \coloneqq \att{k}{P_N}.
    \]
    By \Cref{lem:lookup-instance-membership},
    \[
        \|k\|_2 = 5d_k^{1/4} \sqrt{\log N}.
    \]
    Since \(N=2K\) and \(K\ge 50\),
    \[
        \|k\|_2
        \le
        8d_k^{1/4}\sqrt{\log K},
    \]
    so \(k,-k\in\keyspace\) by \Cref{asp:richness}. Moreover,
    \[
        \frac{\|k\|_2^2}{\sqrt{d_k}}
        =
        25\log N.
    \]
    By \Cref{lem:lookup-output-geometry},
    \[
        \att{0}{P_N}=0,
        \qquad \text{and} \qquad
        \|y\|_2 \le \frac{249}{250}V.
    \]
    Since \(N\ge100\),
    \[
        \tanh(25\log N)
        =
        \frac{N^{50}-1}{N^{50}+1}
        \ge
        \frac{249}{250}.
    \]
    Hence the vector
    \[
        w \coloneqq \frac{y}{\tanh(25\log N)}
    \]
    satisfies \(\|w\|_2\le V\), and therefore \(w,-w\in\valspace\).

    Define
    \[
        \hat P_k
        \coloneqq
        \frac{1}{2} \delta_{(k,w)} + \frac{1}{2} \delta_{(-k,-w)}.
    \]
    At query \(0\), the two logits are equal, so
    \[
        \att{0}{\hat P_k}
        =
        \frac{1}{2} w + \frac{1}{2}(-w)
        =
        0
        =
        \att{0}{P_N}.
    \]
    At query \(k\), the two logits are
    \[
        \frac{\langle k,k\rangle}{\sqrt{d_k}}
        =
        25\log N,
        \qquad \text{and} \qquad
        \frac{\langle k,-k\rangle}{\sqrt{d_k}}
        =
        -25\log N.
    \]
    Thus
    \[
        \att{k}{\hat P_k}
        =
        \tanh(25\log N) \, w
        =
        y
        =
        \att{k}{P_N}.
    \]
    That proves the first claim.

    The final claim follows because the query distribution \(p\delta_k+(1-p)\delta_0\) is supported on the two points where \(P_N\) and \(\hat P_k\) have identical attention outputs, so
    \begin{align*}
        &\phantom{=}\err*{P_N}{p\delta_k+(1-p)\delta_0}{\hat P_k}
        \\&=
        p \|\att{k}{P_N} - \att{k}{\hat P_k}\|_2^2
        + (1-p) \|\att{0}{P_N} - \att{0}{\hat P_k}\|_2^2
        =
        0.
        \tag*{\qedhere}
    \end{align*}
\end{proof}

\begin{proof}[Proof of \Cref{thm:lower-bound-aware}]
    Let \(N \coloneqq 2K\), and let \((P_N,\nu_N)\) be the lookup instance from \Cref{lem:lookup-instance-membership}.
    Set
    \[
        p \coloneqq \frac{R}{V^2K},
        \qquad \text{and} \qquad
        \nu_R\coloneqq p\,\nu_N+(1-p)\,\delta_0,
    \]
    and define
    \[
        \mathcal F \coloneqq \{(P_N,\nu_R)\}.
    \]
    Since \(R\in[0,V^2K]\), we have \(p\in[0,1]\). Also, by \Cref{lem:lookup-instance-membership}, \(|\supp(P_N)|=N=2K\) and \(\mathcal F \subseteq \kvmeasures*[2K] \times \quemeasures\).

    By \Cref{lem:lookup-diluted-covariance},
    \[
        \tail_{K/3}(\cov_{P_N,\nu_R})
        \asymp
        pV^2K
        =
        R.
    \]

    Now, fix any query-aware \(K\)-atomic compressor \(A\), and let \(\hat P\) be its random output on \((P_N,\nu_R)\).
    For every realization of \(\hat P\),
    \[
        \err*{P_N}{\nu_R}{\hat P}
        =
        p \, \err*{P_N}{\nu_N}{\hat P}
        + (1-p) \, \err*{P_N}{\delta_0}{\hat P}
        \ge
        p \, \err*{P_N}{\nu_N}{\hat P}.
    \]
    By \Cref{lem:hard-query-lower-bound},
    \[
        \err*{P_N}{\nu_N}{\hat P}
        \gtrsim
        V^2
    \]
    almost surely. Hence,
    \[
        \err*{P_N}{\nu_R}{\hat P}
        \gtrsim
        pV^2
        =
        \frac{R}{K}
    \]
    almost surely.
    Taking expectation over the randomness of \(A\), and then taking the infimum over all query-aware \(K\)-atomic compressors, gives
    \[
        \minimax{\compspace{K}[aw]}{\mathcal F}
        \gtrsim
        \frac{R}{K}.
        \tag*{\qedhere}
    \]
\end{proof}

\subsubsection{Query-agnostic lower bound}

We now prove the query-agnostic lower bound and simultaneously show a separation from the query-aware regime.
We use the same hard lookup instance as before, but expose it through a family of query distributions of the form \(p\delta_k+(1-p)\delta_0\), one for each key \(k\in\supp(\nu_N)\).
Each member of this family is easy for query-aware compression: once the query law is known, the compressor only has to match the output at \(0\) and at one key.
At the same time, the family is hard for query-agnostic compression because a query-agnostic compressor must choose one \(K\)-atomic summary that protects against all active keys.
Averaging over the active key then recovers the lookup query distribution \(\nu_N\), on which any compression to half the context size necessarily incurs large error.

As in the query-aware regime, dilution by the \(0\) query controls the error and redundancy scale.
Each hard query contributes trace of order \(V^2K\), while the dummy query contributes trace of order \(V^2\).
Thus, the mixture \(p\delta_k+(1-p)\delta_0\) tunes \(\trace(\cov_{P,\nu})\) to the target level \(T\in[V^2,V^2K]\), while the averaged error retains the corresponding fraction of the lookup compression error.

\begin{lemma} \label{lem:lookup-pointwise-covariance}
    Let \(K\ge 50\) and set \(N\coloneqq 2K\).
    Suppose \Cref{asp:richness} holds and let \((P_N,\nu_N)\) be the lookup instance from \Cref{lem:lookup-instance-membership}.
    Then,
    \[
        \trace(\cov_{P_N,\delta_0})
        \asymp
        V^2,
    \]
    and, for every \(k \in \supp(\nu_N)\),
    \[
        \trace(\cov_{P_N,\delta_{k}})
        \asymp
        V^2K.
    \]
    Consequently, for every \(p\in[0,1]\) and every \(k \in \supp(\nu_N)\),
    \[
        \trace(\cov_{P_N, p \, \delta_{k} + (1-p) \, \delta_0})
        \asymp
        pV^2K+V^2.
    \]
\end{lemma}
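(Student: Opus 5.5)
The trace is linear in the query distribution, because \(\trace(\cov_{P,\nu}) = \int_\kvspace \|\Gamma_P(k,v)\|_{\hilbert_\nu}^2 \diff P = \int \trace(\cov_{P,\delta_q})\diff\nu(q)\). So the final claim follows from the two pointwise estimates: \(p\cdot V^2K + (1-p)\cdot V^2 \asymp pV^2K + V^2\), since \(K \ge 1\) and \(p\in[0,1]\). It therefore suffices to compute \(\trace(\cov_{P_N,\delta_q}) = \frac1N\sum_j \bigl[V^2(a_{k_j}(q\mid P_N)-1)^2 + a_{k_j}(q\mid P_N)^2\|v_j-\att{q}{P_N}\|_2^2\bigr]\) at \(q=0\) and at \(q=k_i\).

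\textbf{Query \(0\).} All keys have equal norm \(R_k\) by \Cref{lem:lookup-instance-membership}(iii), so \(\kappa(0,k_j)\) is constant and \(a_{k_j}(0\mid P_N)=1\) for all \(j\). The scalar term therefore vanishes. By \Cref{lem:lookup-output-geometry}, \(\att{0}{P_N}=0\), so the value term equals \(\frac1N\sum_j\|v_j\|_2^2=\rho_0^2V^2\). Hence the trace is exactly \(\rho_0^2V^2\asymp V^2\).

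\textbf{Query \(k_i\).} The upper bound \(\lesssim V^2N\asymp V^2K\) was already derived in the proof of \Cref{lem:lookup-diluted-covariance}, using \(\sum_j a_{ij}^2\le N^2+1\) and \(\|v_j-y_i\|\le 2V\); we reuse it verbatim. For the lower bound, keep only the scalar term with \(j=i\). By sharpness, \(a_{ii}=N-s_i\ge N-N^{-1/2}\), so
\[
\frac{V^2}{N}(a_{ii}-1)^2 \ge \frac{V^2}{N}\bigl(N-1-N^{-1/2}\bigr)^2 \gtrsim V^2N \asymp V^2K.
\]

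No step is a real obstacle. The only points requiring care are that \(a_{k_j}(0\mid P_N)\equiv 1\) relies on the equal key norms (the Gaussian kernel otherwise weights by \(\|k\|\)), and that we justify the linearity of the trace in \(\nu\) via the definition of the \(\hilbert_\nu\) norm.
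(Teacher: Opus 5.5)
Your proposal is correct and follows essentially the same route as the paper. Both arguments use the exact value \(\rho_0^2V^2\) at query \(0\), the scalar \(j=i\) term with sharpness for the lower bound at \(k_i\), the estimate \(\sum_j a_{ij}^2\le N^2+1\) for the upper bound, and affinity of the trace in the query distribution; you additionally make explicit why \(a_{k_j}(0\mid P_N)\equiv 1\) (equal key norms) and why the trace is linear in \(\nu\), which the paper uses without comment.
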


\thmlowerboundagnostic*

\begin{proof}[Proof of \Cref{lem:lookup-pointwise-covariance}]
    Write
    \[
        P_N = \frac{1}{N} \sum_{i=1}^N \delta_{(k_i,v_i)},
        \qquad
        a_i(q) \coloneqq a_{k_i}(q\mid P_N),
        \qquad \text{and} \qquad
        y(q)\coloneqq \att{q}{P_N}.
    \]
    By the definition,
    \[
        \trace(\cov_{P_N,\delta_q})
        =
        \frac{1}{N} \sum_{i=1}^N
        \left[
            V^2(a_i(q)-1)^2
            +
            a_i(q)^2\|v_i-y(q)\|_2^2
        \right].
    \]

    At \(q=0\), we have \(a_i(0)=1\) for all \(i\), and \(\att{0}{P_N}=0\) by \Cref{lem:lookup-output-geometry}. Hence
    \[
        \trace(\cov_{P_N,\delta_0})
        =
        \frac{1}{N} \sum_{i=1}^N \|v_i\|_2^2
        =
        \rho_0^2V^2
        \asymp
        V^2.
    \]

    Now, fix \(k\in\supp(\nu_N)\).
    By \Cref{lem:lookup-instance-membership}, the keys are distinct, so \(k=k_i\) for a unique \(i\in[N]\).
    Since \(\sum_{\ell=1}^N a_\ell(k_i) = N\), the sharpness condition gives
    \[
        \sum_{\ell\neq i} a_\ell(k_i)
        \le
        N^{-1/2},
        \qquad \text{and hence} \qquad
        a_i(k_i)
        \ge
        N-N^{-1/2}.
    \]
    Therefore, the scalar part alone gives
    \[
        \trace(\cov_{P_N,\delta_{k_i}})
        \ge
        \frac{V^2}{N}(a_i(k_i)-1)^2
        \gtrsim
        V^2N.
    \]
    Conversely,
    \[
        \sum_{\ell=1}^N a_\ell(k_i)^2
        =
        a_i(k_i)^2 + \sum_{\ell\neq i} a_\ell(k_i)^2
        \le
        N^2 + \left(\sum_{\ell\neq i} a_\ell(k_i)\right)^2
        \le
        N^2+1.
    \]
    Also, by \Cref{lem:lookup-output-geometry} and the boundary condition of \Cref{def:lookup-instance},
    \[
        \|y(k_i)\|_2\le V
        \qquad \text{and} \qquad
        \|v_\ell\|_2\le V
    \]
    for all \(\ell\in[N]\), and hence \(\|v_\ell-y(k_i)\|_2\le 2V\). Thus
    \begin{align*}
        \trace(\cov_{P_N,\delta_{k_i}})
        &\le
        \frac{V^2}{N} \sum_{\ell=1}^N (a_\ell(k_i)-1)^2
        +
        \frac{4V^2}{N} \sum_{\ell=1}^N a_\ell(k_i)^2
        \\&\lesssim
        \frac{V^2}{N}
        \left(
            N + \sum_{\ell=1}^N a_\ell(k_i)^2
        \right)
        \lesssim
        V^2N.
    \end{align*}
    Since \(N=2K\), this proves
    \[
        \trace(\cov_{P_N,\delta_k})
        \asymp
        V^2K
    \]
    for all \(k\in\supp(\nu_N)\).

    Finally, the trace is affine in the query distribution, so
    \begin{align*}
        \trace(\cov_{P_N,p\delta_k+(1-p)\delta_0})
        &=
        p \, \trace(\cov_{P_N,\delta_k})
        + (1-p) \, \trace(\cov_{P_N,\delta_0})
        \\&\asymp
        pV^2K + (1-p)V^2
        \asymp
        pV^2K+V^2,
    \end{align*}
    for all \(p\in[0,1]\).
    That proves the claim.
\end{proof}

\begin{proof}[Proof of \Cref{thm:lower-bound-agnostic}]
    Let \(N\coloneqq 2K\), and let \((P_N,\nu_N)\) be the lookup instance from \Cref{lem:lookup-instance-membership}.
    Set
    \[
        p \coloneqq \frac{T}{V^2K}.
    \]
    Since \(T\in[V^2,V^2K]\), we have \(p\in[1/K,1]\).
    For every \(k\in\supp(\nu_N)\), set
    \[
        \nu_k
        \coloneqq
        p \, \delta_k + (1-p) \, \delta_0,
    \]
    and define
    \[
        \mathcal F
        \coloneqq
        \{(P_N,\nu_k) \mid k\in\supp(\nu_N)\}.
    \]
    By \Cref{lem:lookup-instance-membership}, \(\mathcal F \subseteq \kvmeasures*[2K] \times \quemeasures\).
    By \Cref{lem:lookup-pointwise-covariance}, for every \(k\in\supp(\nu_N)\),
    \[
        \trace(\cov_{P_N,\nu_k})
        \asymp
        pV^2K+V^2
        =
        T+V^2
        \asymp
        T,
    \]
    where the last comparison uses \(T\ge V^2\).
    Hence, every \((P,\nu)\in\mathcal F\) satisfies
    \[
        \trace(\cov_{P,\nu}) \asymp T.
    \]

    We next prove the error lower bound.
    Fix any query-agnostic \(K\)-atomic compressor \(A\), and let \(\hat P\) be its random output on the common context \(P_N\).
    For every realization of \(\hat P\),
    \begin{align*}
        &\phantom{=}\frac{1}{N} \sum_{k\in\supp(\nu_N)}
        \err*{P_N}{\nu_k}{\hat P}
        \\&=
        \frac1N\sum_{k\in\supp(\nu_N)}
        \left[
            p\|\att{k}{P_N}-\att{k}{\hat P}\|_2^2
            +(1-p)\|\att{0}{P_N}-\att{0}{\hat P}\|_2^2
        \right] \\
        &\ge
        p\,\err*{P_N}{\nu_N}{\hat P}
        \gtrsim
        pV^2
        =
        \frac{T}{K},
    \end{align*}
    where the last inequality uses \Cref{lem:hard-query-lower-bound}.
    Taking expectation over the randomness of \(A\) gives
    \[
        \frac{1}{N} \sum_{k\in\supp(\nu_N)}
        \Ex\!\left[\err*{P_N}{\nu_k}{\hat P}\right]
        \gtrsim
        \frac{T}{K}.
    \]
    Hence,
    \[
        \sup_{(P,\nu)\in\mathcal F}
        \Ex\!\left[\err*{P}{\nu}{A(P)}\right]
        \gtrsim
        \frac{T}{K}.
    \]
    Taking the infimum over all query-agnostic \(K\)-atomic compressors gives the claim.

    Finally, we prove that the same family is exactly approximable by query-aware \(K\)-atomic compressors.
    By \Cref{lem:two-query-fitting-lookup}, for every \(k\in\supp(\nu_N)\), there exists \(\hat P_k\in\kvmeasures[2]\subseteq\kvmeasures[K]\) such that
    \[
        \err*{P_N}{\nu_k}{\hat P_k}=0.
    \]
    Since the query-aware compressor may depend on \(\nu_k\), this gives
    \[
        \minimax{\compspace{K}}{\mathcal F}=0.
        \tag*{\qedhere}
    \]
\end{proof}

\newpage
\section{Algorithms} \label{apx:algorithms}

This appendix gives proofs and instantiations for the algorithmic framework of \Cref{sec:algorithms}.
\Cref{alg:prefill-compression} and \Cref{alg:decoding-compression} are parameterized by a local compression subroutine \(\reduce\).
Any choice for \(\reduce\) compressing a context from \(2K\) to \(K\) points yields an efficient and parallel KV compression algorithm for causally masked prefill via \Cref{alg:prefill-compression} and autoregressive decoding via \Cref{alg:decoding-compression}.
Recall that we use the following design criteria for \(\reduce\), under which we obtain guarantees on the compression quality.

\defdesigncriteria*

When \(\reduce\) satisfies these criteria, the global merge-reduce scheme matches the minimax-optimal query-agnostic trace risk, up to logarithmic degradation in time.
To obtain stronger guarantees in the query-agnostic regime, additional structure is necessary.
Recall that we capture such structure with the following assumption.

\aspcommongeometry*

Under this assumption, there is a reference geometry induced by the feature map \(\Phi:\kvspace\to\hilbert\) that is comparable, on balanced reweightings of the current context, to the query-visible response geometry in \(\hilbert_\nu\).
Query-agnostic compressors can then use the geometry of \(\hilbert\) as a \(\nu\)-independent proxy for \(\hilbert_\nu\).

\paragraph{Roadmap.}
This section is organized as follows.
In \Cref{apx:feature-map}, we discuss the choice of \(\Phi\) and give a natural softmax-attention instantiation for which \Cref{asp:common-geometry} holds in stylized but informative settings.
In \Cref{apx:reduce-instantiations}, we show that \Cref{def:admissible-reducer} can guide concrete algorithm design by constructing two local reducers satisfying the required criteria.
\Cref{apx:global-risk-guarantee} proves \Cref{thm:alg-risk}, which lifts these local guarantees to the full prefill and decoding schemes, with a strengthened spectral bound under \Cref{asp:common-geometry}.
Finally, \Cref{apx:runtime} proves \Cref{pro:runtime} on the efficiency of the global merge-reduce scheme.

\paragraph{Notation.}
We briefly recall the notation used throughout this section.
We work with a stream \((k_i,v_i)_{i\ge1}\) of key--value pairs.
For an index set \(I\subseteq\N\), we write \(P_I\) for the context measure associated with \((k_i,v_i)_{i\in I}\), and abbreviate
\(P_t\coloneqq P_{\{1,\ldots,t\}}\).
We also use the unnormalized mass
\[
    w(I)
    \coloneqq
    \sum_{i\in I}\exp(\|k_i\|^2/(2\sqrt{d_k})),
\]
which comes from the Gaussian-kernel form of softmax attention in \Cref{eq:attention}.
Finally, we fix a bounded feature map \(\Phi:\kvspace\to\hilbert\) for this appendix and use the notation \(\mu_P^\Phi\) and \(\cov_P^\Phi\) from \Cref{def:admissible-reducer} for context measures \(P \in \kvmeasures\). 

\subsection[Choice of feature map]{Choice of feature map \(\Phi\)} \label{apx:feature-map}

The choice of feature map \(\Phi: \kvspace \to \hilbert\) determines which structure a query-agnostic compressor can exploit beyond random sampling.
While there is no single choice that results in a good proxy for all query distributions, a natural query-agnostic choice is obtained from the two linear quantities that define attention before normalization: the unnormalized, value-weighted numerator and the scalar normalizer.

Concretely, we choose
\begin{equation}
    \label{eq:reference-features}
    \Phi:\kvspace\to\hilbert,
    \qquad
    (k,v)\mapsto
    \bigl(\kappa(\cdot,k)\otimes v,\;V\,\kappa(\cdot,k)\bigr),
\end{equation}
where
\[
    \hilbert
    \coloneqq
    \hilbert_\kappa\otimes\R^{d_v}
    \oplus
    \hilbert_\kappa
\]
is constructed from the feature space \(\hilbert_\kappa\) of the Gaussian kernel \(\kappa\) from \Cref{sec:setting}.
As for the response profiles in \Cref{eq:response-profile}, the factor \(V\) puts the normalizer component on the same scale as the value component.

Intuitively, \Cref{asp:common-geometry} then requires that the query distribution sees the same context directions as the reference features.
If a reweighting of the context substantially changes the numerator or normalizer of \Cref{eq:attention}, then this change should also be visible in the attention responses under queries \(q\sim\nu\), and, conversely, a reweighting that is small in these quantities should not produce a large response.

Formally, the response profile \(\Gamma_P\) measures the query-visible effect of a context reweighting after attention normalization and averaging over \(q\sim\nu\).
For the reference features in \Cref{eq:reference-features}, \Cref{asp:common-geometry} requires that, on balanced reweightings of \(\supp(P)\),
\[
    \left\|\int_{\kvspace}\Phi\,\diff\sigma\right\|_{\hilbert}^2
    \asymp
    \left\|\int_{\kvspace}\Gamma_P\,\diff\sigma\right\|_{\hilbert_\nu}^2.
\]
Thus, large perturbations to the pre-normalization numerator or normalizer are exactly the perturbations that produce large query-visible changes in the attention response.
When this is the case, a query-agnostic compressor can exploit \(\Phi\) as a hint at the \(\nu\)-dependent geometry, and beat the lower bound barrier of \Cref{thm:lower-bound-agnostic}.

However, this assumption need not always hold.
In particular, it fails on the hard instances of \Cref{apx:rates} used to prove lower bounds.
The following examples illustrate when this comparison is a reasonable model of the attention head and when it should not be expected to hold.

\begin{example}
    Suppose a context \(P\) consists of \(m\) well-separated clusters of nearly interchangeable tokens, and the query distribution \(\nu\) assigns comparable mass to queries that attend to each cluster.
    Then, for the reference features in \Cref{eq:reference-features}, \Cref{asp:common-geometry} holds up to the within-cluster approximation error.
\end{example}

\begin{example}
    Suppose the keys in \(\supp(P)\) lie on a low-dimensional semantic manifold, values vary smoothly along this manifold, and \(\nu\) consists of queries whose attention weights vary on the same scale.
    Then, the reference geometry induced by \(\Phi\) is comparable to the response geometry of \(\Gamma_P\).
\end{example}

\begin{example}
    Suppose a context \(P\) contains two well-separated groups of tokens, but queries from \(\nu\) attend only to the first group.
    Then, moving mass inside the second group can be large in the reference geometry induced by \(\Phi\) while nearly invisible in response geometry, so \Cref{asp:common-geometry} fails.
\end{example}
\subsection{Admissible reducers} \label{apx:reduce-instantiations}

We instantiate the abstract design criteria of \Cref{def:admissible-reducer} by giving two concrete local reducers. 
Both algorithms operate on a candidate context \(Q \in \kvmeasures[2K]\) through the reference feature map \(\Phi:\kvspace\to\hilbert\).

The first reducer is a constrained rounding procedure derived directly from the sparse barycenter approximation theorem in \Cref{apx:sparse-hilbert-approximation}. 
It provides the cleanest theoretical guarantee: for every target rank \(r \le (K-3)/3\), it is \((\Phi,r,0)\)-admissible.
Its drawback is that the constrained rounding step is inherently sequential in the protected dimension \(r\), making it poorly suited to modern parallel hardware.

To compensate for this, we also provide a more practical, clustering-based reducer.
It maps efficiently to parallel hardware, but it is only \((\Phi,r,\tau)\)-admissible with a non-zero \(\tau\) term determined by the achieved clustering error.

Both reducers require an orthonormal basis of the top-\(r\) eigenspace of the centered covariance \(\cov_Q^\Phi\). 
For computational efficiency, practical implementations may replace this exact eigenspace by an approximate one, with the resulting loss absorbed into the tail condition of \Cref{def:admissible-reducer}.
Finally, although both reducers are stated using the abstract feature map \(\Phi:\kvspace\to\hilbert\), they can be implemented using only inner products \(\langle \Phi(z),\Phi(z')\rangle_\hilbert\). 
The framework therefore also covers infinite-dimensional feature spaces.

\begin{algorithm}[t]
\caption{\(\textsc{RoundReduce}\)}
\label{alg:constrained-reduce}
\begin{algorithmic}[1]
    \Require context measure \(Q=\sum_{i=1}^{2K} q_i \, \delta_{z_i}\in\kvmeasures[2K]\)
    \Require feature map \(\Phi: \kvspace \to \hilbert\)
    \Require integer \(r \le (K-3)/3\)

    \State \(\bar \Phi \gets \sum_{i=1}^{2K} q_i \, \Phi(z_i)\)
    \State \(\cov_Q^\Phi \gets \sum_{i=1}^{2K} q_i \, (\Phi(z_i) - \bar\Phi) \otimes (\Phi(z_i) - \bar\Phi)\)
    \State \(U\gets \operatorname{Top}_r(\cov_Q^\Phi)\) \Comment{top-\(r\) eigenspace of \(\cov_Q^\Phi\)}

    \vspace{0.5em}
    \State \(\mu \gets \sum_{i=1}^{2K} q_i \, \delta_{\Phi(z_i)}\) \Comment{pushforward}
    \State \(\hat \mu \gets \textsc{ProtectedSparsify}(\mu, U, K-2r-2) = \sum_{i=1}^{2K} \hat q_i \, \delta_{\Phi(z_i)}\) \Comment{\Cref{alg:protected-hilbert-round}}
    \State \Return \(\hat Q \gets \sum_{i=1}^{2K} \hat q_i \, \delta_{z_i}\) \Comment{pullback}
\end{algorithmic}
\end{algorithm}

\begin{algorithm}[t]
\caption{\(\textsc{ClusterReduce}\)}
\label{alg:cluster-reduce}
\begin{algorithmic}[1]
    \Require context measure \(Q=\sum_{i=1}^{2K} q_i \, \delta_{z_i}\in\kvmeasures[2K]\)
    \Require feature map \(\Phi: \kvspace \to \hilbert\)
    \Require integer \(r \ge 0\)

    \State \(H\gets\{i:q_i>1/K\}\), \quad \(L\gets [2K] \setminus H\) \Comment{filter out heavy-weight atoms}
    \If{\(\sum_{i\in L}q_i=0\)}
        \State \Return \(Q\)
    \EndIf
    \State \(m\gets K-|H|\), \quad \(\beta\gets\sum_{i\in L}q_i\), \quad \(\rho\gets \beta/m\)

    \vspace{0.5em}
    \State \(\bar\Phi \gets \sum_i q_i \, \Phi(z_i)\)
    \State \(\cov_Q^\Phi \gets \sum_i q_i \, (\Phi(z_i)-\bar\Phi)\otimes(\Phi(z_i)-\bar\Phi)\)
    \State \(U\gets \operatorname{Top}_r(\cov_Q^\Phi)\) \Comment{top-\(r\) eigenspace of \(\cov_Q^\Phi\)}
    
    \vspace{0.5em}
    \For{\(i\in L\)} \Comment{cluster atoms projected onto \(U\)}
        \State \(u_i\gets \Pi_U(\Phi(z_i)-\bar\Phi)\)
        \State \(p_i\gets q_i/\rho\)
    \EndFor
    \State \(X\gets\textsc{ClusterSlots}((p_i,u_i)_{i\in L},m) \in [0,1]^{m \times |L|}\)
    \vspace{0.5em}
    
    \For{\(a=1,\ldots,m\)} \Comment{sample one atom per cluster}
        \State sample \(I_a\in L\) with \(\mathbb P(I_a=i)=X_{ai}\)
    \EndFor
    
    \vspace{0.5em}
    \State \Return \(
        \hat Q
        \gets
        \sum_{i\in H}q_i\delta_{z_i}
        +
        \sum_{a=1}^m \rho\,\delta_{z_{I_a}} .
    \)
\end{algorithmic}
\end{algorithm}

\subsubsection{Perfect admissibility via constrained rounding}

\Cref{alg:constrained-reduce} is a direct implementation of the Hilbert-space sparsification argument in \Cref{apx:sparse-hilbert-approximation}. 
Given a context measure \(Q \in \kvmeasures[2K]\), the reducer preserves the features \(\Phi\) exactly in the leading eigenspace of \(\cov_Q^\Phi\), and sparsifies the measure only in the orthogonal complement.
Because of that, the reducer is \((\Phi,r,0)\)-admissible.

\begin{theorem} \label{thm:constrained-reduce-admissible}
    Fix \(K\ge 3\) and \(r\le (K-3)/3\).
    Let \(\hilbert\) be a Hilbert space and \(\Phi: \kvspace \to \hilbert\) bounded.
    Then, \(\textsc{RoundReduce}(\cdot,\Phi,r)\) is \((\Phi,r,0)\)-admissible.
\end{theorem}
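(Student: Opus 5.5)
The plan is to read off all three admissibility conditions from \Cref{pro:protected-sparsification}, applied to the pushforward \(\mu=\sum_i q_i\delta_{\Phi(z_i)}\) with protected subspace \(U=\operatorname{Top}_r(\cov_Q^\Phi)\) and support parameter \(m\coloneqq K-2r-2\). Then we pull the result back through the atom indices. One subtlety comes first: distinct atoms \(z_i\) may share a feature value \(\Phi(z_i)\). To avoid this, I would run \textsc{ProtectedSparsify} on the indexed list \((q_i,\Phi(z_i))_{i=1}^{2K}\) rather than on the merged measure. Its proof only uses the list of atoms and weights, so nothing changes, and \(\hat q_i\) is then well defined per index.

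\emph{Support.} Since \(r\le (K-3)/3\), we have \(m\ge r+1\ge1\) and \(m+2d+2\le K-2r-2+2r+2=K\) with \(d=\dim U\le r\). Item (i) of \Cref{pro:protected-sparsification} then gives at most \(K\) positive \(\hat q_i\), all on indices of \(Q\). Hence \(\hat Q\in\Prob_K(\supp(Q))\) almost surely.

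\emph{Unbiasedness (i).} This is item (iii), read coordinatewise: the proof shows \(\Ex[\hat q_i]=q_i\) for each index, so \(\Ex[\hat Q]=Q\).

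\emph{Condition (ii).} Fix \(\mathcal G\) and a bounded \(\Psi\), and set \(y_i\coloneqq\Psi(z_i)\). Then \(\int\Psi\diff(\hat Q-Q)=\sum_i(\hat q_i-q_i)y_i\), and the ``additionally'' clause of \Cref{pro:protected-sparsification} bounds its second moment by \(\frac{20}{9m}\trace(\cov_Q^\Psi)\). This clause holds for every \(\mathcal G\) and every \(y_i\), independently of \(U\), since the rounding only constrains the \(\Phi\)-coordinates.

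\emph{Condition (iii).} Here \(\eta=\int\Phi\diff(\hat Q-Q)\), and item (iv) gives \(\Ex\|\eta\|^2\le\frac{20}{9m}\trace((I-\Pi_U)\cov_Q^\Phi)\). Because \(U\) is the top-\(r\) eigenspace, this trace equals \(\tail_r(\cov_Q^\Phi)\). If \(\rank\cov_Q^\Phi<r\), take \(U\) to be the range, as in the proof of \Cref{thm:sparse-hilbert-approximation-fixed-K}; the tail is then zero.

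It remains to convert \(1/m\) into \(1/K\). Since \(r\le(K-3)/3\), we get \(m=K-2r-2\ge K-\tfrac{2}{3}(K-3)-2=K/3\). So both bounds are \(\lesssim\frac1K\), which yields \((\Phi,r,0)\)-admissibility.

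The main obstacle is mild bookkeeping rather than analysis. One point is the indexed versus merged atoms issue above. The other is that \(\hilbert\) may be infinite-dimensional: \(\cov_Q^\Phi\) has rank at most \(2K-1\), so the top-\(r\) eigenspace exists and is finite-dimensional, and \Cref{pro:protected-sparsification} applies verbatim.
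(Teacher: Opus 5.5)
Your proposal is correct and follows the paper's proof: apply the protected-sparsification proposition to the pushforward with \(U\) the top-\(r\) eigenspace of \(\cov_Q^\Phi\) and budget \(m=K-2r-2\). You then read off the support bound, unbiasedness, the universal trace bound and the protected tail bound, and convert \(1/m\) to \(1/K\) via \(m\ge K/3\). Your extra bookkeeping tidies up points the paper leaves implicit: running \textsc{ProtectedSparsify} on the indexed atom list, so that coinciding feature values \(\Phi(z_i)=\Phi(z_j)\) break neither the pullback nor the \(\Psi\)-variance clause, and handling finite rank or rank deficiency of \(\cov_Q^\Phi\).
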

\begin{proof}
    Fix
    \[
        Q = \sum_{i=1}^{2K} q_i \, \delta_{z_i} \in \kvmeasures[2K],
    \]
    set \(m \coloneqq K-2r-2\), and use the notation of \Cref{alg:constrained-reduce}.
    Since \(r\le (K-3)/3\), we have \(m \ge K/3\).
    
    \(\textsc{RoundReduce}(Q,\Phi,r)\) applies \(\textsc{ProtectedSparsify}\) to the pushforward
    \[
        \mu = \sum_{i=1}^s q_i\delta_{\Phi(z_i)}
    \]
    with protected subspace
    \[
        U = \operatorname{Top}_r(\cov_Q^\Phi)
    \]
    and sparsification budget \(m\).
    
    By \Cref{pro:protected-sparsification},
    \[
        |\supp(\hat Q)| \le m+2\dim(U)+2 \le K
    \]
    almost surely, and \(\supp(\hat Q)\subseteq\supp(Q)\).
    Hence,
    \[
        \hat Q\in\Prob_K(\supp(Q)).
    \]
    \Cref{pro:protected-sparsification} also gives \(\Ex[\hat \mu]=\mu\) and therefore \(\Ex[\hat Q]=Q\).
    That proves unbiasedness.
    
    Let \(\mathcal G\) be a Hilbert space and let \(\Psi:\kvspace\to\mathcal G\) be bounded.
    Applying the variance bound in \Cref{pro:protected-sparsification} with \(y_i=\Psi(z_i)\) gives
    \[
        \Ex \Bigl\| \int_{\kvspace}\Psi\,d(\hat Q-Q) \Bigr\|_{\mathcal G}^2
        \le
        \frac{20}{9m}\trace(\cov_Q^\Psi)
        \le
        \frac{20}{3K}\trace(\cov_Q^\Psi).
    \]
    
    For the feature map \(\Phi\), let
    \[
        \eta_\Phi \coloneqq \int_{\kvspace}\Phi \diff (\hat Q-Q).
    \]
    By the protected covariance bound in \Cref{pro:protected-sparsification},
    \[
        \Ex[\eta_\Phi\otimes\eta_\Phi]
        \preceq
        \frac{20}{9m}
        (I-\Pi_U)\cov_Q^\Phi(I-\Pi_U).
    \]
    Taking traces and using \(U=\operatorname{Top}_r(\cov_Q^\Phi)\), we obtain
    \[
        \Ex \Bigl\| \int_{\kvspace}\Phi \diff (\hat Q-Q) \Bigr\|_{\hilbert}^2
        \le
        \frac{20}{9m}\tail_r(\cov_Q^\Phi)
        \le
        \frac{20}{3K}\tail_r(\cov_Q^\Phi).
    \]
    
    Thus, all design criteria hold for every \(Q \in \mathcal P_{2K}(X)\), making \(\textsc{RoundReduce}(\cdot,\Phi,r)\) \((\Phi,r,0)\)-admissible.
\end{proof}

\subsubsection{Practical admissibility via clustering}

\Cref{alg:cluster-reduce} provides an admissible reducer based on clustering.
On a context measure \(Q\), the algorithm protects the principal directions of \(\cov_Q^\Phi\) by projecting the features onto the leading eigenspace of \(\cov_Q^\Phi\) and clustering them in these protected coordinates.

For the clustering step, any finite-dimensional clustering subroutine \(\textsc{ClusterSlots}\) can be used, provided it produces a balanced latent-slot assignment.
Concretely, using the notation of \Cref{alg:cluster-reduce}, the subroutine receives the weighted protected coordinates \((p_i,u_i)_{i\in L}\), where \(p_i=q_i/\rho\), and returns a matrix \(X \in [0,1]^{m\times |L|}\).
Row \(a\) of \(X\) is the sampling distribution used by latent slot \(a\), while the column constraints ensure that each light atom receives the correct expected mass.
Formally, \(X\) must satisfy
\[
    \sum_{i\in L}X_{ai}=1
    \qquad \text{and} \qquad
    \sum_{a=1}^m X_{ai}=p_i.
\]

For example, one might use weighted k-means to group nearby protected coordinates and then sample representatives from those groups in proportion to their total mass, with \(X\) recording the resulting sampling probabilities.

Naturally, the quality of the compressed summary depends on the quality of the clustering.
Using the notation of \Cref{alg:cluster-reduce}, define the clustering error of the call to \textsc{ClusterSlots} by
\[
    \mathcal C(Q)
    \coloneqq
    \frac{1}{K}
    \sum_{a=1}^m\sum_{i\in L}
    X_{ai}\|u_i-\bar u_a\|_{\hilbert}^2,
    \qquad
    \bar u_a \coloneqq \sum_{i\in L}X_{ai}u_i.
\]
If the light mass is zero, we set \(\mathcal C_{\textsc{ClusterSlots}}(Q)=0\).

When this clustering error is small, \textsc{ClusterReduce} is admissible, as the following theorem shows.

\begin{theorem} \label{thm:protect-cluster-reduce-admissible}
    Fix \(K \ge 1\) and \(r \ge 0\).
    Let \(\hilbert\) be a Hilbert space and \(\Phi: \kvspace \to \hilbert\) bounded.
    Assume that \(\textsc{ClusterSlots}\) satisfies \(\mathcal C(Q)\le \tau\) for every input \(Q\in\kvmeasures[2K]\).
    Then, \(\textsc{ClusterReduce}(\cdot,\Phi,r)\) is \((\Phi,r,\tau)\)-admissible.
\end{theorem}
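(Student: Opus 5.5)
The plan is to verify the requirements of \Cref{def:admissible-reducer} directly from the structure of \Cref{alg:cluster-reduce}: support size, unbiasedness, the universal variance bound (ii), and the protected bound (iii). The key structural fact is that, conditional on the output \(X\) of \textsc{ClusterSlots}, the \(m\) slot draws \(I_1,\ldots,I_m\) are independent. Hence every error term is a sum of \(m\) independent, mean-zero slot contributions, and the analysis reduces to computing within-slot variances.

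\emph{Support and unbiasedness.} Since \(\sum_i q_i = 1\) and every heavy atom has \(q_i > 1/K\), we get \(|H| < K\), so \(m = K - |H| \ge 1\). The output is supported on \(H\) together with at most \(m\) sampled light atoms, so \(\hat Q \in \Prob_K(\supp(Q))\). Its total mass is \(\sum_{i\in H} q_i + m\rho = 1\). For a light atom \(i\), the expected mass is \(\rho\sum_a X_{ai} = \rho p_i = q_i\), by the column constraint on \(X\). Heavy atoms are kept exactly, so \(\Ex[\hat Q] = Q\). (The degenerate branch \(\sum_{i\in L} q_i = 0\) is trivial.) I take as given that \textsc{ClusterSlots} returns a feasible \(X\), i.e.\ \(p_i \le 1\); checking this against \(\rho = \beta/m\) is the one place where the interplay between \(H\) and \(\rho\) needs care.

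\emph{Variance identity.} Fix a bounded \(\Psi:\kvspace\to\mathcal G\) and write \(y_i \coloneqq \Psi(z_i)\). Heavy atoms cancel, so
\[
    \int\Psi\,\diff(\hat Q - Q) = \rho\sum_{a=1}^m \bigl(y_{I_a} - \bar y_a\bigr),
    \qquad
    \bar y_a \coloneqq \sum_{i\in L} X_{ai}\, y_i ,
\]
using \(\sum_a \bar y_a = \sum_{i\in L} p_i y_i\). By independence and zero mean of the slot terms,
\[
    \Ex\Bigl\|\int\Psi\,\diff(\hat Q - Q)\Bigr\|^2 = \rho^2\sum_{a}\sum_{i\in L} X_{ai}\,\|y_i - \bar y_a\|^2 .
\]
Each within-slot variance is at most the second moment about any fixed point \(c\); take \(c \coloneqq \mu_Q^\Psi\). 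Summing over \(a\) and using \(\rho p_i = q_i\), the right-hand side is at most
\[
    \rho\sum_{i\in L} q_i\,\|y_i - \mu_Q^\Psi\|^2 \le \rho\,\trace(\cov_Q^\Psi).
\]
Finally, \(\rho = \beta/m < 1/K\), because \(\beta = 1 - q(H) < 1 - |H|/K = m/K\). This gives (ii).

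\emph{Protected bound (iii).} Apply the same identity with \(\Psi = \Phi - \bar\Phi\), which is allowed because constant shifts cancel. Then split each increment orthogonally into its \(U\) and \(U^\perp\) components; by orthogonality the squared norms add exactly, so no cross terms appear. The \(U\)-component of \(\Phi(z_i) - \bar\Phi\) is exactly \(u_i\), so that part equals
\[
    \rho^2\sum_a\sum_{i\in L} X_{ai}\,\|u_i - \bar u_a\|^2 = \rho^2 K\,\mathcal C(Q) \le \frac{\tau}{K},
\]
using \(\rho < 1/K\) and \(\mathcal C(Q) \le \tau\). For the \(U^\perp\) part I use the crude second-moment bound from (ii) about the point \(0\). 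This gives at most \(\rho\,\trace\bigl((I - \Pi_U)\cov_Q^\Phi\bigr)\), which equals \(\rho\,\tail_r(\cov_Q^\Phi) \le \tail_r(\cov_Q^\Phi)/K\), since \(U\) is the top-\(r\) eigenspace. Adding the two parts yields (iii) with constant \(1\). I expect the main obstacle to be this bookkeeping step: identifying the protected component exactly with the clustering error \(\mathcal C(Q)\), including its normalization \(1/K\) against the factor \(\rho^2\), and confirming \(\rho \le 1/K\) and \(p_i \le 1\).
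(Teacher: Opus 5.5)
Your proposal is correct and follows the paper's proof essentially step for step: unbiasedness from the column constraint on \(X\), the decomposition into independent centered slot terms bounded about a fixed point together with \(\rho\le 1/K\), and the orthogonal \(U\oplus U^\perp\) split that identifies the protected part with \(\rho^2K\,\mathcal C(Q)\). Two harmless nits: when \(H=\emptyset\) one has \(\rho=1/K\), so only \(\rho\le 1/K\) holds (not the strict inequality, though that is all you use); and feasibility of \(X\) requires only \(p_i\le m\), which is automatic since \(q_i\le\beta\), so \(p_i\le 1\) is neither needed nor guaranteed.
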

\begin{proof}
    Let
    \[
        Q=\sum_{i=1}^{2K}q_i\delta_{z_i}\in\kvmeasures[2K].
    \]
    
    Use the notation of \Cref{alg:cluster-reduce}.
    If the light mass is zero, then \(Q\) must already belong to \(\kvmeasures[K]\).
    The algorithm then returns \(Q\) exactly, so all claims are immediate.
    
    Assume therefore that \(\beta>0\).
    Since each \(i\in H\) satisfies \(q_i>1/K\),
    \[
        \beta
        =
        1-\sum_{i\in H}q_i
        <
        1-\frac{|H|}{K}
        =
        \frac{K-|H|}{K}
        =
        \frac{m}{K}.
    \]
    Thus, \(\rho=\frac{\beta}{m}\le \frac{1}{K}\)
    
    The output is supported on \(\supp(Q)\) and uses at most \(|H|+m=K\) atoms, so \(\hat Q\in\Prob_K(\supp(Q))\) almost surely.
    Moreover, the heavy atoms are kept exactly, and for every \(i\in L\),
    \[
        \Ex[\text{mass assigned to }z_i]
        =
        \rho\sum_{a=1}^m X_{ai}
        =
        \rho p_i
        =
        q_i.
    \]
    Hence \(\Ex[\hat Q]=Q\). That proves unbiasedness.
    
    Let \(\mathcal G\) be a Hilbert space and let \(\Psi:\kvspace\to\mathcal G\) be bounded. Write
    \[
        y_i \coloneqq \Psi(z_i),
        \qquad
        \bar y \coloneqq \int_{\kvspace}\Psi \diff Q.
    \]
    The heavy atoms contribute no error, while the light slots give
    \[
        \int_{\kvspace}\Psi \diff(\hat Q-Q)
        =
        \rho\sum_{a=1}^m
        \left( y_{I_a}-\sum_{i\in L}X_{ai}y_i \right).
    \]
    The summands are independent and centered. Therefore
    \[
    \begin{aligned}
        \Ex\left[
            \left\|
                \int_{\kvspace}\Psi \diff(\hat Q-Q)
            \right\|_{\mathcal G}^2
        \right]
        &=
        \rho^2
        \sum_{a=1}^m
        \Ex\left\|
            y_{I_a}-\sum_{i\in L}X_{ai}y_i
        \right\|_{\mathcal G}^2  \\
        &\le
        \rho^2
        \sum_{a=1}^m\sum_{i\in L}
        X_{ai}\|y_i-\bar y\|_{\mathcal G}^2  \\
        &=
        \rho^2
        \sum_{i\in L}
        p_i\|y_i-\bar y\|_{\mathcal G}^2  \\
        &=
        \rho
        \sum_{i\in L}
        q_i\|y_i-\bar y\|_{\mathcal G}^2  \\
        &\le
        \frac1K
        \sum_{i=1}^{2K}
        q_i\|y_i-\bar y\|_{\mathcal G}^2  \\
        &=
        \frac1K\trace(\cov_Q^\Psi).
    \end{aligned}
    \]
    This proves the universal trace criterion.
    
    It remains to prove the distinguished \(\Phi\)-criterion. Let
    \[
        x_i \coloneqq \Phi(z_i),
        \qquad
        \bar x \coloneqq \int_{\kvspace}\Phi \diff Q,
        \qquad
        \xi_i \coloneqq  x_i-\bar x,
    \]
    and let
    \[
        U \coloneqq \operatorname{Top}_r(\cov_Q^\Phi).
    \]
    Then, \(u_i = \Pi_U\xi_i\) for all \(i \in L\). As above,
    \[
        \eta_\Phi
        \coloneqq
        \int_{\kvspace}\Phi \diff(\hat Q-Q)
        =
        \rho\sum_{a=1}^m
        \left(
            \xi_{I_a}-\sum_{i\in L}X_{ai}\xi_i
        \right).
    \]
    Decomposing orthogonally along \(U\oplus U^\perp\), the protected part is
    \[
        \Ex\|\Pi_U\eta_\Phi\|_{\hilbert}^2
        =
        \rho^2
        \sum_{a=1}^m\sum_{i\in L}
        X_{ai}\|u_i-\bar u_a\|_{\hilbert}^2.
    \]
    By the definition of \(\mathcal C(Q)\) and \(\rho\le 1/K\),
    \[
        \Ex\|\Pi_U\eta_\Phi\|_{\hilbert}^2
        =
        \rho^2 K\,\mathcal C(Q)
        \le
        \frac{\mathcal C(Q)}{K}
        \le
        \frac{\tau}{K}.
    \]
    For the residual part, the same variance estimate used above gives
    \[
    \begin{aligned}
        \Ex\|(I-\Pi_U)\eta_\Phi\|_{\hilbert}^2
        &\le
        \rho
        \sum_{i\in L}
        q_i\|(I-\Pi_U)\xi_i\|_{\hilbert}^2
        \\&\le
        \frac{1}{K}
        \trace\!\left((I-\Pi_U)\cov_Q^\Phi\right).
    \end{aligned}
    \]
    Since \(U=\operatorname{Top}_r(\cov_Q^\Phi)\),
    \[
        \trace\!\left((I-\Pi_U)\cov_Q^\Phi\right)
        =
        \tail_r(\cov_Q^\Phi).
    \]
    Combining the protected and residual parts yields
    \[
        \Ex\left[
            \left\|
                \int_{\kvspace}\Phi \diff(\hat Q-Q)
            \right\|_{\hilbert}^2
        \right]
        \le
        \frac1K
        \bigl(\tail_r(\cov_Q^\Phi)+\tau\bigr).
    \]

    Thus, all design criteria hold for every \(Q \in \mathcal P_{2K}(X)\), making \(\textsc{ClusterReduce}(\cdot,\Phi,r)\) \((\Phi,r,\tau)\)-admissible.
\end{proof}
\subsection{Global risk guarantees} \label{apx:global-risk-guarantee}
This section proves the risk guarantee of \Cref{thm:alg-risk} for the chunked prefill and decoding algorithms.
Both schedules build summaries in a merge-reduce tree: starting from token chunks of \(K\) tokens, disjoint summaries are merged, their weighted union is compressed back to \(K\) atoms, and the unfinished current chunk is appended exactly.
We show that, for admissible local compressors \(\reduce\), the approximation error accumulates linearly with the depth of the merge-reduce tree, which then transfers to the claimed risk guarantees.

\subsubsection{Geometry transfer}
\(\reduce\) is designed in the fixed reference geometry induced by \(\Phi: \kvspace \to \hilbert\).
In contrast, the desired risk bound is expressed in the geometry induced by \(\Gamma_P: \kvspace\to\hilbert_\nu\), which depends on both the exact context \(P\) and the query distribution \(\nu\).
We record here the transfer guarantees from the covariance \(\cov^\Phi_P\) in \(\hilbert\) to the covariance \(\cov_{P,\nu}\) in \(\hilbert_\nu\).
Concretely, \Cref{lem:common-geometry} shows that, under \Cref{asp:common-geometry}, the spectral decay of \(\cov_P^\Phi\) is controlled by the spectral decay of \(\cov_{P,\nu}\) in the reference geometry of \(\Phi\).

\begin{lemma} \label{lem:common-geometry}
    Suppose \Cref{asp:common-geometry} holds for \(P,\nu\) and the chosen feature map \(\Phi\). Then,
    \[
        \tail_r(\cov_P^\Phi)
        \lesssim
        \tail_r(\cov_{P,\nu})
    \]
    for every \(r\ge 0\).
\end{lemma}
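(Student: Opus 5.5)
The plan is to pass to Gram operators on the space of balanced reweightings and compare their spectra via the minimax (Courant--Fischer / Ky Fan) characterization, much as in the proof of \Cref{lem:lookup-diluted-covariance}. Let $P=\sum_{i} p_i\delta_{z_i}$ and set
\[
    \mathsf H_0 \coloneqq \Bigl\{u\in L_2(P) \;\Bigm|\; \textstyle\int u\,\diff P=0\Bigr\}.
\]
Define the two operators
\[
    T_\Phi:\mathsf H_0\to\hilbert,\quad u\mapsto \textstyle\int \Phi\, u\,\diff P,
    \qquad\qquad
    T_\Gamma:\mathsf H_0\to\hilbert_\nu,\quad u\mapsto \textstyle\int \Gamma_P\, u\,\diff P.
\]
For each $u\in\mathsf H_0$, the signed measure $\sigma\coloneqq u\,P$ is supported on $\supp(P)$ and has zero mass. \Cref{asp:common-geometry} therefore gives
\[
    \|T_\Phi u\|_\hilbert^2 \asymp \|T_\Gamma u\|_{\hilbert_\nu}^2 .
\]
Equivalently, $G_\Phi\coloneqq T_\Phi^*T_\Phi$ and $G_\Gamma\coloneqq T_\Gamma^*T_\Gamma$ satisfy $G_\Phi\preceq C\,G_\Gamma$ in Loewner order on $\mathsf H_0$.

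Next, I identify the covariances with $T_\Phi T_\Phi^*$ and $T_\Gamma T_\Gamma^*$. For $\Gamma$, the computation
\[
    T_\Gamma T_\Gamma^* h=\int \Gamma_P\,\langle \Gamma_P,h\rangle\,\diff P = \cov_{P,\nu}h
\]
works for all $h$ provided $\int\Gamma_P\,\diff P=0$, which \Cref{pro:reduction} guarantees. For $\Phi$, one has $T_\Phi^*h=\langle\Phi,h\rangle-\langle\mu^\Phi_P,h\rangle$, since the adjoint must land in $\mathsf H_0$ and so is the centered function. Then
\[
    T_\Phi T_\Phi^*h=\int(\Phi-\mu_P^\Phi)\langle\Phi-\mu_P^\Phi,h\rangle\,\diff P=\cov_P^\Phi h .
\]
This centering bookkeeping is the step I expect to need the most care: the reference features are not centered, and only the restriction to zero-mass $\sigma$ makes the comparison with the centered $\cov^\Phi_P$ work. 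Since $TT^*$ and $T^*T$ share nonzero eigenvalues with multiplicity, $\lambda_j(\cov_P^\Phi)=\lambda_j(G_\Phi)$ and $\lambda_j(\cov_{P,\nu})=\lambda_j(G_\Gamma)$. Here $\mathsf H_0$ is finite-dimensional, so everything is trace class.

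Finally, the Loewner inequality $G_\Phi\preceq CG_\Gamma$ yields $\lambda_j(G_\Phi)\le C\lambda_j(G_\Gamma)$ for every $j$, by Courant--Fischer. Summing over $j>r$ gives
\[
    \tail_r(\cov_P^\Phi)\le C\,\tail_r(\cov_{P,\nu}).
\]
Alternatively, via Ky Fan: let $W$ be the span of the top-$r$ eigenvectors of $G_\Gamma$. Then
\[
    \tail_r(G_\Phi)\le\trace\bigl((I-\Pi_W)G_\Phi\bigr)\le C\,\trace\bigl((I-\Pi_W)G_\Gamma\bigr)=C\,\tail_r(G_\Gamma).
\]
Only the upper half of \Cref{asp:common-geometry} is used.
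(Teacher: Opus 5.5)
Your proposal is correct and follows the paper's proof essentially step for step. Both arguments restrict to the zero-mean subspace of $L_2(P)$ and turn the common-geometry assumption into the Loewner bound $T_\Phi^*T_\Phi\preceq C\,T_\Gamma^*T_\Gamma$. Both then identify $T_\Phi T_\Phi^*=\Sigma_P^\Phi$ and $T_\Gamma T_\Gamma^*=\Sigma_{P,\nu}$ and transfer the tails through the shared nonzero spectra of $TT^*$ and $T^*T$. Your explicit treatment of the centered adjoint and your Courant--Fischer/Ky Fan justification of the final tail inequality spell out details the paper states without argument.
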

\begin{proof}
    Let \[
        \textstyle
        L_0^2(P) \coloneqq \left\{f \in L^2(P) \mid \int_\kvspace f \diff P=0\right\}
    \]
    and define
    \[
        \textstyle
        T_\Phi f \coloneqq \int_\kvspace f(\Phi-\mu_P^\Phi) \diff P,
        \qquad
        T_\Gamma f \coloneqq \int_\kvspace f\Gamma_P \diff P .
    \]
    Since \(fP\) is a signed, zero-mass measure supported on \(\supp(P)\), \Cref{asp:common-geometry} gives
    \[
        \|T_\Phi f\|_{\hilbert}^2
        \lesssim
        \|T_\Gamma f\|_{\hilbert_\nu}^2
    \]
    for all \(f\in L_0^2(P)\).
    Equivalently,
    \[
        T_\Phi^*T_\Phi \preceq C\,T_\Gamma^*T_\Gamma .
    \]
    for a universal constant \(C > 0\).
    Moreover,
    \[
        T_\Phi T_\Phi^*=\cov_P^\Phi
        \qquad \text{and} \qquad
        T_\Gamma T_\Gamma^*=\cov_{P,\nu},
    \]
    where the second identity uses \(\int\Gamma_P\,dP=0\).
    Since \(TT^*\) and \(T^*T\) have the same eigenvalues,
    \[
        \tail_r(\cov_P^\Phi)
        =
        \tail_r(T_\Phi^*T_\Phi)
        \lesssim
        \tail_r(T_\Gamma^*T_\Gamma)
        =
        \tail_r(\cov_{P,\nu}). \tag*{\qedhere}
    \]
\end{proof}

\subsubsection{Risk propagation through merge-reduce trees}
We now turn the one-step guarantees of \(\reduce\) into guarantees for full causal prefixes.
To that end, we show that the design criteria in \Cref{def:admissible-reducer} for \(\reduce\) --- unbiasedness of the summary and second-moment control of its feature defect --- propagate through the merge-reduce recursion.
To express this invariant, we introduce the notion of a \emph{valid summary}, which is a random context measure of a token block that satisfies the same criteria, with a parameter counting the number of reducer layers on its dependency path.
Each node of the merge-reduce tree preserves validity, increasing this parameter by at most one.
The global risk bound in \Cref{thm:alg-risk} then follows by recognizing the prefill and decoding summaries as outputs of such trees, whose depth is logarithmic in the number of completed chunks.

\begin{definition}[Valid summary] \label{def:validity}
    Let \(d,r,\tau \ge 0\) and \(A \subseteq \N\).
    We say that a random context measure \(S_A\) is \emph{\((d,r,\tau)\)-valid} for \(A\) if the following hold:
    \begin{enumerate}[label=(\roman*)]
        \item \(\Ex[S_A] = P_A\)
        \item \(
            \Ex\bigl[\|\int_{\kvspace}\Psi \diff(S_A-P_A)\|_{\mathcal G}^2\bigr]
            \lesssim
            \frac{d}{K} \trace(\cov_{P_A}^\Psi)
        \) for all Hilbert spaces \(\mathcal G\) and bounded \(\Psi: \kvspace\to\mathcal G\)
        \item \(
            \Ex\bigl[\|\int_{\kvspace}\Phi \diff(S_A-P_A)\|_{\hilbert}^2\bigr]
            \lesssim
            \frac{d}{K} \bigl(\tail_r(\cov_{P_A}^\Phi) + \tau \bigr)
        \)
    \end{enumerate}
\end{definition}

Note that this resembles directly the design criteria of a \((\Phi,r,\tau)\)-admissible reducer, as defined in \Cref{def:admissible-reducer}.
In particular, if \(\reduce\) is \((\Phi,r,\tau)\)-admissible and \(P_A \in \kvmeasures[2K]\) is the context measure of the tokens in \(A \subseteq \N\), then \(\reduce(P_A)\) is \((1,r,\tau)\)-valid for \(A\).

We briefly record the following fact that we use repeatedly later on.

\begin{lemma} \label{lem:covariance-domination}
    Let \(P \in \kvmeasures\) and let \(Q\) be a random variable taking values in \(\kvmeasures\).
    If \(\Ex[Q] = P\), then \(\Ex[\cov^\Psi_Q] \preceq \cov^\Psi_P\) for any Hilbert space \(\mathcal G\) and bounded \(\Psi: \kvspace \to \mathcal G\).
\end{lemma}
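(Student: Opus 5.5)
We write \(\cov_Q^\Psi\) as the variance of \(\Psi\) under \(Q\) and compare it with the variance under \(P=\Ex[Q]\). The plan is a bias--variance decomposition of the second moment. For any \(Q\), set \(M_Q \coloneqq \int_\kvspace \Psi\otimes\Psi \diff Q\), so that
\[
    \cov_Q^\Psi = M_Q - \mu_Q^\Psi\otimes\mu_Q^\Psi .
\]
Both \(M_Q\) and \(\mu_Q^\Psi\) are linear in \(Q\). Because \(\Psi\) is bounded, these quantities are bounded random elements, so all expectations exist. Taking expectations term by term gives
\[
    \Ex[\cov_Q^\Psi] = M_P - \Ex[\mu_Q^\Psi\otimes\mu_Q^\Psi].
\]

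Next we expand around the mean. Write \(\mu_Q^\Psi = \mu_P^\Psi + \eta\) with \(\eta \coloneqq \int\Psi\diff(Q-P)\). Linearity and \(\Ex[Q]=P\) give \(\Ex[\eta]=0\). The cross terms therefore vanish in expectation, and
\[
    \Ex[\mu_Q^\Psi\otimes\mu_Q^\Psi] = \mu_P^\Psi\otimes\mu_P^\Psi + \Ex[\eta\otimes\eta].
\]
Substituting this into the previous display yields
\[
    \Ex[\cov_Q^\Psi] = \cov_P^\Psi - \Ex[\eta\otimes\eta].
\]
The operator \(\Ex[\eta\otimes\eta]\) is positive semidefinite: for every \(h\) we have \(\langle h,\Ex[\eta\otimes\eta]h\rangle=\Ex\langle h,\eta\rangle^2\ge0\). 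The claimed inequality \(\Ex[\cov_Q^\Psi]\preceq\cov_P^\Psi\) follows.

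The only delicate step is interchanging expectation with the operator-valued and vector-valued integrals, that is, showing \(\Ex[M_Q]=M_P\) and \(\Ex[\mu_Q^\Psi]=\mu_P^\Psi\). I would avoid Bochner integration altogether by working weakly. Fix \(h\in\mathcal G\) and consider the scalar quantities \(\int\langle h,\Psi\rangle^2\diff Q\) and \(\int\langle h,\Psi\rangle\diff Q\). These are integrals of bounded measurable functions against \(Q\), so the identity \(\Ex[Q]=P\), read as \(\Ex\int f\diff Q=\int f\diff P\) for bounded \(f\), applies to them directly. The whole argument then runs in the quadratic form \(\langle h,\cdot\,h\rangle\), which is exactly what the Loewner order \(\preceq\) requires.
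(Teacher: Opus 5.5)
Your proposal is correct and follows the paper's proof essentially verbatim: both write \(\cov_Q^\Psi = \int\Psi\otimes\Psi\,\diff Q - \mu_Q^\Psi\otimes\mu_Q^\Psi\), use linearity in \(Q\) together with \(\Ex[Q]=P\) to obtain \(\Ex[\cov_Q^\Psi] = \cov_P^\Psi - \Ex\bigl[(\mu_Q^\Psi-\mu_P^\Psi)\otimes(\mu_Q^\Psi-\mu_P^\Psi)\bigr]\), and conclude because the subtracted operator is positive semidefinite. Your extra step of checking everything through the quadratic forms \(\langle h,\cdot\,h\rangle\) makes explicit the interchange of expectation and integration that the paper uses without comment.
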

\begin{proof}
    Let \(\mathcal G\) be a Hilbert space and \(\Psi: \kvspace \to \mathcal G\) be bounded.
    Then,
    \begin{align*}
        \Ex\!\left[\cov_{Q}^\Psi\right]
        &=
        \Ex\!\left[ \int_\kvspace \Psi \otimes \Psi \diff Q \right] - \Ex\!\left[ \mu_Q^\Psi \otimes \mu_Q^\Psi \right]
        \\&=
        \int_\kvspace \Psi \otimes \Psi \diff P - \Ex\!\left[ \mu_Q^\Psi \otimes \mu_Q^\Psi \right]
        \\&=
        \int_\kvspace \Psi \otimes \Psi \diff P - \Ex\!\left[ \left(\mu_Q^\Psi - \Ex\!\left[\mu_Q^\Psi\right]\right) \otimes \left(\mu_Q^\Psi - \Ex\!\left[\mu_Q^\Psi\right]\right) \right] - \Ex\!\left[\mu_Q^\Psi\right] \otimes \Ex\!\left[\mu_Q^\Psi\right]
        \\&=
        \int_\kvspace \Psi \otimes \Psi \diff P - \Ex\!\left[ \left(\mu_Q^\Psi - \mu_P^\Psi\right) \otimes \left(\mu_Q^\Psi - \mu_P^\Psi\right) \right] - \mu_P^\Psi \otimes \mu_P^\Psi
        \\&=
        \cov_P^\Psi - \Ex\!\left[ \left(\mu_Q^\Psi - \mu_P^\Psi\right) \otimes \left(\mu_Q^\Psi - \mu_P^\Psi\right) \right]
        \\&\preceq
        \cov_P^\Psi.
        \tag*{\qedhere}
    \end{align*}
\end{proof}

We now consider the operation at a single node in the merge-reduce tree.
Given two summaries of two disjoint blocks of tokens, we form a compressed summary of their union by compressing their weighted combination using \(\reduce\).
When the two input summaries are \((d,r,\tau)\)-valid, the resulting summary is \((d+1,r,\tau)\)-valid, thereby propagating the guarantees up the merge-reduce tree.

\begin{lemma} \label{lem:one-merge-validity}
    Let \(A,B \subseteq \N\) be finite, nonempty, and disjoint.
    Let \(S_A\) be \((d_A,r,\tau)\)-valid for \(A\), and let \(S_B\) be \((d_B,r,\tau)\)-valid for \(B\).
    Assume that \(\reduce\) is \((\Phi,r,\tau)\)-admissible and that \(S_A\) and \(S_B\) are independent.
    Define
    \[
        Q \coloneqq \frac{w(A)}{w(A)+w(B)} S_A + \frac{w(B)}{w(A)+w(B)} S_B.
    \]
    If \(Q\in\kvmeasures[2K]\), then \(\reduce(Q)\) is \((1+\max\{d_A,d_B\},r,\tau)\)-valid for \(A\cup B\).
\end{lemma}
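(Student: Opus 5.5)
Set \(\alpha \coloneqq w(A)/(w(A)+w(B))\), \(P \coloneqq P_{A\cup B} = \alpha P_A + (1-\alpha)P_B\), and \(\hat S \coloneqq \reduce(Q)\). The plan is to split the error into a reduction defect and an input defect,
\[
    \hat S - P = (\hat S - Q) + (Q - P),
    \qquad
    Q - P = \alpha(S_A - P_A) + (1-\alpha)(S_B - P_B),
\]
and to bound each part by conditioning on \((S_A,S_B)\).

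\textbf{Unbiasedness.} Admissibility~(i) gives \(\Ex[\hat S \mid Q] = Q\). Validity~(i) gives \(\Ex[Q] = P\). Together these yield \(\Ex[\hat S] = P\).

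\textbf{Second moments.} Fix a bounded \(\Psi:\kvspace\to\mathcal G\). Conditional on \(Q\), the reduction defect \(\int\Psi\diff(\hat S - Q)\) has mean zero, so the cross term with the \(Q\)-measurable input defect vanishes. Hence
\[
    \Ex\Bigl\|\int\Psi\diff(\hat S-P)\Bigr\|^2
    = \Ex\Bigl\|\int\Psi\diff(\hat S-Q)\Bigr\|^2 + \Ex\Bigl\|\int\Psi\diff(Q-P)\Bigr\|^2 .
\]
The two summands are handled separately.

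\textbf{Reduction defect.} Admissibility~(ii) applied conditionally bounds the first summand by \(\frac1K\Ex\trace(\cov_Q^\Psi)\). By \Cref{lem:covariance-domination}, \(\Ex[\cov_Q^\Psi]\preceq\cov_P^\Psi\), so this is at most \(\frac1K\trace(\cov_P^\Psi)\).

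\textbf{Input defect.} Independence of \(S_A,S_B\) and their unbiasedness kill the cross term, leaving
\[
    \alpha^2\,\Ex\Bigl\|\int\Psi\diff(S_A-P_A)\Bigr\|^2 + (1-\alpha)^2\,\Ex\Bigl\|\int\Psi\diff(S_B-P_B)\Bigr\|^2 .
\]
Validity~(ii) bounds this by \(\frac{\max\{d_A,d_B\}}{K}\bigl(\alpha\trace\cov_{P_A}^\Psi + (1-\alpha)\trace\cov_{P_B}^\Psi\bigr)\), using \(\alpha^2\le\alpha\). By the law of total variance, \(\alpha\cov_{P_A}^\Psi + (1-\alpha)\cov_{P_B}^\Psi \preceq \cov_P^\Psi\). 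Summing the two bounds gives criterion~(ii) with parameter \(1+\max\{d_A,d_B\}\).

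\textbf{The \(\Phi\)-criterion and the main obstacle.} For criterion~(iii) the same decomposition applies, but the first summand is now \(\frac1K\Ex[\tail_r(\cov_Q^\Phi)+\tau]\). Here I need \(\Ex[\tail_r(\cov_Q^\Phi)]\le\tail_r(\cov_P^\Phi)\), which is the main obstacle. I plan to use Ky Fan's variational characterization with \(U\) the fixed top-\(r\) eigenspace of \(\cov_P^\Phi\). Then \(\tail_r(\cov_Q^\Phi)\le\trace((I-\Pi_U)\cov_Q^\Phi)\), and taking expectations with \Cref{lem:covariance-domination} gives the bound \(\trace((I-\Pi_U)\cov_P^\Phi)=\tail_r(\cov_P^\Phi)\). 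This works because the tail is concave in the operator and the choice of \(U\) does not depend on \(Q\).

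For the input defect, concavity of \(\tail_r\) combined with the total-variance domination gives \(\alpha\tail_r(\cov_{P_A}^\Phi) + (1-\alpha)\tail_r(\cov_{P_B}^\Phi)\le\tail_r(\cov_P^\Phi)\). The input part is therefore at most \(\frac{\max\{d_A,d_B\}}{K}(\tail_r(\cov_P^\Phi)+\tau)\). Combining both parts yields \((1+\max\{d_A,d_B\},r,\tau)\)-validity. The implicit constants in \(\lesssim\) must not grow with depth, so I will track one fixed constant \(C\) that dominates both the admissibility and validity constants.
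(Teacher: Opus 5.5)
Your proposal is correct and follows essentially the same route as the paper: a Pythagorean split of the defect into the fresh reducer defect and the two independent, unbiased input defects, \Cref{lem:covariance-domination} for the reducer term, and \(\alpha^2\le\alpha\) together with the mixture covariance domination and concavity/monotonicity of \(\tail_r\) for the input terms. Your fixed-subspace Ky Fan argument is an explicit proof of the paper's appeal to concavity (Jensen) and monotonicity of \(\tail_r\). Your remark about fixing a single constant \(C\) that dominates the admissibility constant makes explicit a point the paper leaves implicit.
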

\begin{proof}
    Let \(S_{A \cup B} \coloneqq \reduce(Q)\), and write
    \[
        P\coloneqq P_{A\cup B}
        =
        \alpha P_A+\beta P_B
    \]
    for \(\alpha \coloneqq w(A) / (w(A) + w(B))\) and \(\beta \coloneqq w(B) / (w(A) + w(B))\).
    Additionally, introduce the defects
    \[
        \sigma_A \coloneqq S_A - P_A
        ,\qquad
        \sigma_B \coloneqq S_B - P_B
        ,\qquad
        \sigma_{A \cup B} \coloneqq S_{A \cup B} - P_{A \cup B},
    \]
    and
    \[
        \eta \coloneqq S_{A \cup B} - Q
    \]
    which are all zero-mass, signed measures on \(\kvspace\).
    
    For a Hilbert space \(\mathcal G\) and bounded \(\Psi: \kvspace \to \mathcal G\), introduce the notation
    \[
        \Psi[\sigma] \coloneqq \int_\kvspace \Psi \diff \sigma
    \]
    for zero-mass, signed measures \(\sigma\) on \(\kvspace\).

    We next prove orthogonality of the defects after taking expectation.
    Condition on the already constructed summaries \(S_A,S_B\).
    Then \(Q\), \(\sigma_A\), and \(\sigma_B\) are fixed, and the only remaining randomness is the fresh call to \(\reduce\).
    Admissibility of \(\reduce\) gives \(\Ex[\eta\mid S_A,S_B]=0\).
    Hence,
    \[
        \Ex\bigl[\langle \Psi[\eta], \Psi[\sigma_A] \rangle_\mathcal G\bigr]
        =
        \Ex\bigl[\langle \Ex[\Psi[\eta] \mid S_A, S_B], \Psi[\sigma_A] \rangle_\mathcal G\bigr]
        =
        0,
    \]
    and similarly for \(\Ex[\langle \Psi[\eta], \Psi[\sigma_B] \rangle_\mathcal G]\).
    Moreover, \(S_A\) and \(S_B\) are independent and validity of \(S_A\) and \(S_B\) imply
    \[
        \Ex[\Psi[\sigma_A]] = \Ex[\Psi[\sigma_B]] = 0.
    \]
    Therefore,
    \[
        \Ex\bigl[\langle \Psi[\sigma_A], \Psi[\sigma_B] \rangle_\mathcal G\bigr]
        =
        \bigl\langle \Ex\bigl[\Psi[\sigma_A]\bigr], \Ex\bigl[\Psi[\sigma_B]\bigr] \bigr\rangle_\mathcal G
        =
        0
    \]
    and thus expanding the square norm gives
    \begin{equation} \label{eq:one-merge-validity:pythagoras}
        \Ex\bigl\|\Psi[\sigma_{A \cup B}]\bigr\|_\mathcal G^2
        =
        \Ex\bigl\|\Psi[\eta]\bigr\|_\mathcal G^2
        +
        \alpha^2 \, \Ex\bigl\|\Psi[\sigma_A]\bigr\|_\mathcal G^2
        +
        \beta^2 \, \Ex\bigl\|\Psi[\sigma_B]\bigr\|_\mathcal G^2.
    \end{equation}

    We now prove the three validity conditions for \(S_{A \cup B}\).

    For unbiasedness,
    \[
        \Ex[S_{A\cup B}]
        =
        \Ex[\Ex[\reduce(Q)\mid S_A,S_B]]
        =
        \Ex[Q]
        =
        \alpha P_A+\beta P_B
        =
        P,
    \]
    which follows immediately from unbiasedness of \(S_A\) and \(S_B\).

    For trace control, we bound the three terms in \Cref{eq:one-merge-validity:pythagoras}.
    Fix a Hilbert space \(\mathcal G\) and a bounded map \(\Psi: \kvspace \to \mathcal G\).
    Since \(\Ex[Q]=P\), \Cref{lem:covariance-domination} gives \(\Ex[\cov^\Psi_Q]\preceq \cov^\Psi_P\).
    Conditioning on \(S_A,S_B\), the input \(Q\) is fixed and the remaining randomness is the fresh reducer randomness.
    Thus, admissibility of \(\reduce\) gives
    \[
        \Ex\bigl[\bigl\| \Psi[\eta] \bigr\|_\mathcal G^2 \bigm| S_A,S_B \bigr]
        \lesssim
        \frac{1}{K}\trace\bigl(\cov_Q^\Psi\bigr).
    \]
    Taking expectations,
    \[
        \Ex\bigl\| \Psi[\eta] \bigr\|_\mathcal G^2
        \lesssim
        \frac{1}{K}\Ex\trace\bigl(\cov_Q^\Psi\bigr)
        \le
        \frac{1}{K}\trace\bigl(\cov_P^\Psi\bigr).
    \]

    By validity of \(S_A\) and \(S_B\),
    \begin{align*}
        \alpha^2\Ex\bigl\|\Psi[\sigma_A]\bigr\|_{\mathcal G}^2
        +
        \beta^2\Ex\bigl\|\Psi[\sigma_B]\bigr\|_{\mathcal G}^2
        &\lesssim
        \frac{\max\{d_A,d_B\}}{K}
        \Bigl[
            \alpha^2\trace(\cov^\Psi_{P_A})
            +
            \beta^2\trace(\cov^\Psi_{P_B})
        \Bigr]
        \\&\le
        \frac{\max\{d_A,d_B\}}{K}
        \Bigl[\trace\bigl(\alpha \cov^\Psi_{P_A} + \beta \cov^\Psi_{P_B}\bigr)\Bigr]
        \\&\le
        \frac{\max\{d_A,d_B\}}{K} \Bigl[\trace\bigl(\cov^\Psi_P\bigr)\Bigr],
    \end{align*}
    where we additionally use \(\alpha^2\le\alpha\), \(\beta^2\le\beta\), and \(\alpha \cov^\Psi_{P_A}+\beta \cov^\Psi_{P_B}\preceq \cov^\Psi_P\) by \Cref{lem:covariance-domination}.
    Combining this with \Cref{eq:one-merge-validity:pythagoras} and admissibility of \(\reduce\) proves
    \[
        \Ex\bigl\| \Psi[\sigma_{A \cup B}] \bigr\|_\mathcal G^2
        \lesssim
        \left(\frac{1+\max\{d_A,d_B\}}{K}\right) \trace\bigl(\cov^\Psi_P\bigr).
    \]
    Since \(\Psi\) and \(\mathcal G\) were arbitrary, that proves the trace condition.

    For tail control, condition on \(S_A,S_B\).
    Then, \(Q\) is fixed and the remaining randomness is the fresh reducer randomness.
    Admissibility applied to this realized input \(Q\) gives
    \[
        \Ex\!\left[\|\Phi[\eta]\|_{\hilbert}^2 \bigm| S_A,S_B\right]
        \lesssim
        \frac{1}{K}
        \left(\tail_r(\cov^\Phi_Q)+\tau\right).
    \]
    Taking expectations gives
    \[
        \Ex\|\Phi[\eta]\|_{\hilbert}^2
        \lesssim
        \frac{1}{K}
        \left(\Ex[\tail_r(\cov^\Phi_Q)]+\tau\right).
    \]
    Since \(\Ex[Q]=P\), \Cref{lem:covariance-domination} gives \(\Ex[\cov^\Phi_Q]\preceq \cov^\Phi_P\).
    By concavity and monotonicity of \(\tail_r\),
    \[
        \Ex[\tail_r(\cov^\Phi_Q)]
        \le
        \tail_r(\Ex[\cov^\Phi_Q])
        \le
        \tail_r(\cov^\Phi_P).
    \]
    Hence
    \[
        \Ex\|\Phi[\eta]\|_{\hilbert}^2
        \lesssim
        \frac{1}{K}
        \bigl(\tail_r(\cov^\Phi_P)+\tau\bigr).
    \]
    By validity of \(S_A\) and \(S_B\),
    \begin{align*}
        \alpha^2\Ex\bigl\|\Phi[\sigma_A]\bigr\|_{\hilbert}^2
        +
        \beta^2\Ex\bigl\|\Phi[\sigma_B]\bigr\|_{\hilbert}^2
        &\lesssim
        \frac{\max\{d_A,d_B\}}{K}
        \Bigl[
            \alpha^2(\tail_r(\cov^\Phi_{P_A})+\tau)
            +
            \beta^2(\tail_r(\cov^\Phi_{P_B})+\tau)
        \Bigr]
        \\&\le
        \frac{\max\{d_A,d_B\}}{K}
        \Bigl[
            \alpha \, \tail_r(\cov^\Phi_{P_A})
            +
            \beta \, \tail_r(\cov^\Phi_{P_B})
            +
            \tau
        \Bigr]
        \\&\le
        \frac{\max\{d_A,d_B\}}{K} \Bigl[\tail_r\bigl(\cov^\Phi_P\bigr) + \tau\Bigr],
    \end{align*}
    where we additionally use \(\alpha^2\le\alpha\), \(\beta^2\le\beta\), \(\alpha \cov^\Phi_{P_A}+\beta \cov^\Phi_{P_B}\preceq \cov^\Phi_P\) by \Cref{lem:covariance-domination}, and concavity of \(\tail_r\).
    Plugging into \Cref{eq:one-merge-validity:pythagoras}, we get
    \[
        \Ex\bigl\|\Phi[\sigma_{A \cup B}]\bigr\|_{\hilbert}^2
        \lesssim
        \frac{1+\max\{d_A,d_B\}}{K}
        \bigl(\tail_r(\cov^\Phi_P)+\tau\bigr).
    \]
    That proves the tail condition.
    Hence, \(S_{A\cup B}\) is \((1+\max\{d_A,d_B\},r,\tau)\)-valid for \(A\cup B\).
\end{proof}

Validity of a summary gives us the desired risk control, but only in terms of the fixed geometry of \(\hilbert\).
\Cref{pro:algorithm-risk} transfers this control to the attention error, which is measured in the geometry of \(\hilbert_\nu\).

\begin{proposition} \label{pro:algorithm-risk}
    Fix \(\nu\in\quemeasures\), and let \(\hat P_t\) be a random context measure.
    If \(\hat P_t\) is \((d_t,r,\tau)\)-valid for \(\{1,\ldots,t\}\), then
    \[
        \Ex\bigl[\err*{P_t}{\nu}{\hat P_t}\bigr]
        \lesssim
        \frac{d_t}{K}\trace(\cov_{P_t,\nu}).
    \]
    If additionally \((P_t,\nu)\) satisfies \Cref{asp:common-geometry}, then
    \[
        \Ex\bigl[\err*{P_t}{\nu}{\hat P_t}\bigr]
        \lesssim
        \frac{d_t}{K}
        \left(\tail_r(\cov_{P_t,\nu}) + \tau\right).
    \]
\end{proposition}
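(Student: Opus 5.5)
The plan is to combine \Cref{lem:error-hilbert-transfer} with the validity of \(\hat P_t\), and to treat the response profile \(\Gamma_{P_t}\) as a bounded feature map into \(\mathcal G=\hilbert_\nu\). Since \(\hat P_t\) is supported on \(\supp(P_t)\), being assembled from reductions of sub-blocks, \Cref{lem:error-hilbert-transfer} applies pointwise in the randomness. Because \(\int \Gamma_{P_t}\,\diff P_t = 0\) by \Cref{pro:reduction}, it gives
\[
    \err*{P_t}{\nu}{\hat P_t}
    \le 16\Bigl\|\int_\kvspace \Gamma_{P_t}\diff(\hat P_t-P_t)\Bigr\|_{\hilbert_\nu}^2 .
\]
Taking expectations reduces the claim to bounding the second moment of the \(\Gamma_{P_t}\)-defect of a valid summary.

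For the first bound, we invoke criterion (ii) of \Cref{def:validity} with \(\Psi=\Gamma_{P_t}\) and \(\mathcal G=\hilbert_\nu\). This is legitimate because \(\Gamma_{P_t}\) is a fixed deterministic map: it depends on \(P_t\) and \(\nu\), not on the summary. It is also bounded on the relevant finite support, since only \(\supp(P_t)\) matters and one may restrict or extend it boundedly. We obtain \(\Ex\|\Gamma_{P_t}[\hat P_t-P_t]\|^2\lesssim \frac{d_t}{K}\trace(\cov^{\Gamma}_{P_t})\). Since \(\mu^{\Gamma}_{P_t}=0\), the centered covariance \(\cov^\Gamma_{P_t}\) equals \(\cov_{P_t,\nu}\), which gives the trace bound.

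For the second bound, we use \Cref{asp:common-geometry} twice. The signed measure \(\sigma=\hat P_t-P_t\) has zero mass and is supported on \(\supp(P_t)\) almost surely. The upper comparison therefore gives, pathwise, \(\|\Gamma_{P_t}[\sigma]\|^2_{\hilbert_\nu}\lesssim\|\Phi[\sigma]\|_\hilbert^2\). Taking expectations and applying criterion (iii) yields \(\lesssim \frac{d_t}{K}(\tail_r(\cov^\Phi_{P_t})+\tau)\). Finally, \Cref{lem:common-geometry}, which uses the lower comparison in the assumption, gives \(\tail_r(\cov^\Phi_{P_t})\lesssim\tail_r(\cov_{P_t,\nu})\), and this concludes the argument.

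The only delicate point is the support requirement. Both \Cref{lem:error-hilbert-transfer} and \Cref{asp:common-geometry} need \(\hat P_t\) to be supported on \(\supp(P_t)\) almost surely. \Cref{def:validity} does not state this explicitly, so I would either note that every summary produced by admissible reducers and by exact appending of chunks inherits the support inclusion \(\supp(\reduce(Q))\subseteq\supp(Q)\), or add it as an implicit part of validity. A second, minor check is the boundedness of \(\Gamma_{P_t}\) as a \(\Psi\) in criterion (ii). This holds because \(a_k(q\mid P_t)\le 1/P_t(\{\text{atom}\})\) on the finite support and values are bounded by \(V\).
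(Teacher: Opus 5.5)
Your proposal is correct and follows the paper's proof essentially step for step: the transfer via \Cref{lem:error-hilbert-transfer} and the centering of \(\Gamma_{P_t}\), then criterion~(ii) with \(\Psi=\Gamma_{P_t}\), and under \Cref{asp:common-geometry} the pathwise upper comparison followed by criterion~(iii) and \Cref{lem:common-geometry} (you state explicitly the comparison step that the paper leaves implicit). Your support concern needs no extra assumption, because it follows from criterion~(i) alone: since \(\hat P_t\ge 0\) and \(\Ex[\hat P_t]=P_t\), one has \(\Ex[\hat P_t(\kvspace\setminus\supp(P_t))]=0\), so \(\hat P_t\) is supported on \(\supp(P_t)\) almost surely, which also justifies your bounded restriction of \(\Gamma_{P_t}\).
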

\begin{proof}
    First, note that
    \begin{equation} \label{eq:algorithm-risk:transfer}
        \err*{P_t}{\nu}{\hat P_t}
        \overset{(1)}{\lesssim} \left\|
            \int_{\kvspace}\Gamma_{P_t} \diff\hat P_t
        \right\|_{\hilbert_\nu}^2
        \overset{(2)}{=}
        \left\|
            \int_{\kvspace}\Gamma_{P_t} \diff(\hat P_t - P_t)
        \right\|_{\hilbert_\nu}^2
    \end{equation}
    where \((1)\) uses \Cref{lem:error-hilbert-transfer}, and \((2)\) uses \(\int_\kvspace \Gamma_{P_t} \diff P_t = 0\).
    
    Using validity of \(\hat P_t\) applied to \(\Gamma_{P_t}\), it follows that
    \[
        \Ex \err*{P_t}{\nu}{\hat P_t}
        \lesssim
        \frac{d_t}{K}
        \trace(\cov_{P_t}^{\Gamma_{P_t}})
        =
        \frac{d_t}{K}
        \trace(\cov_{P_t,\nu}).
    \]
    That proves the first claim.

    Suppose now that \Cref{asp:common-geometry} holds.
    Then, using validity of \(\hat P_t\) and \Cref{lem:common-geometry},
    \[
        \Ex \err*{P_t}{\nu}{\hat P_t}
        \overset{\eqref{eq:algorithm-risk:transfer}}{\lesssim}
        \Ex \left\|
            \int_{\kvspace}\Gamma_{P_t} \diff(\hat P_t - P_t)
        \right\|_{\hilbert_\nu}^2
        \lesssim
        \frac{d_t}{K} \bigl( \tail_r\bigl(\cov_{P_t}^\Phi\bigr) + \tau \bigr)
        \lesssim
        \frac{d_t}{K} \bigl( \tail_r(\cov_{P_t,\nu}) + \tau \bigr),
    \]
    which proves the second claim.
\end{proof}

The final summary \(\hat P_t\) used as the compressed context for the token at position \(t\) is a combination of a compressed summary produced by the merge-reduce tree and an exact context measure of the tokens inside the same chunk as token \(t\).
The next lemma shows that this is harmless for the validity of the resulting summary.

\begin{lemma} \label{lem:exact-suffix-validity}
    Let \(A,R \subseteq \N\) be finite and disjoint and \(A \neq \emptyset\).
    If \(S_A\) is \((d,r,\tau)\)-valid for \(A\), then
    \[
        S_{A\cup R}
        \coloneqq
        \frac{w(A)}{w(A)+w(R)}S_A
        +
        \frac{w(R)}{w(A)+w(R)}P_R
    \]
    is \((d,r,\tau)\)-valid for \(A\cup R\).
\end{lemma}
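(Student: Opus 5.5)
Write \(\alpha \coloneqq w(A)/(w(A)+w(R))\) and \(\beta \coloneqq 1-\alpha\), so that \(P_{A\cup R} = \alpha P_A + \beta P_R\) and
\[
    S_{A\cup R} - P_{A\cup R} = \alpha\,(S_A - P_A).
\]
The plan is to verify the three conditions of \Cref{def:validity} one at a time, using only this identity, validity of \(S_A\), and \Cref{lem:covariance-domination}. (If \(R=\emptyset\) the claim is trivial, so assume \(R\neq\emptyset\).)

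\textbf{Unbiasedness.} This is immediate from linearity of expectation: \(\Ex[S_{A\cup R}] = \alpha P_A + \beta P_R = P_{A\cup R}\).

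\textbf{Trace condition.} Fix a Hilbert space \(\mathcal G\) and a bounded \(\Psi\). The identity above gives
\[
    \Ex\Bigl\|\int\Psi\diff(S_{A\cup R}-P_{A\cup R})\Bigr\|^2
    = \alpha^2\,\Ex\Bigl\|\int\Psi\diff(S_A-P_A)\Bigr\|^2
    \lesssim \frac{d}{K}\,\alpha^2\trace(\cov^\Psi_{P_A}).
\]
We then use \(\alpha^2\le\alpha\) together with the covariance decomposition
\[
    \alpha\cov^\Psi_{P_A}+\beta\cov^\Psi_{P_R} \preceq \cov^\Psi_{P_{A\cup R}}.
\]
This decomposition follows from \Cref{lem:covariance-domination} applied to the random measure \(Q\) equal to \(P_A\) with probability \(\alpha\) and \(P_R\) with probability \(\beta\), which has mean \(P_{A\cup R}\). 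Since \(\cov^\Psi_{P_R}\succeq 0\), we get \(\alpha\cov^\Psi_{P_A}\preceq\cov^\Psi_{P_{A\cup R}}\), and taking traces closes the bound.

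\textbf{Tail condition.} Similarly, we obtain the bound \(\frac{d}{K}\alpha^2(\tail_r(\cov^\Phi_{P_A})+\tau)\). Since \(\alpha^2\tau\le\tau\), it remains to show
\[
    \alpha\,\tail_r(\cov^\Phi_{P_A}) \le \tail_r(\cov^\Phi_{P_{A\cup R}}).
\]
This is the only step requiring care, since \(\tail_r\) is not linear. We argue that \(\tail_r\) is positively homogeneous and monotone in the Loewner order, the latter by Weyl's monotonicity of eigenvalues, because each \(\lambda_j\) is monotone and the tail is the trace minus a Ky Fan sum. Applying this to \(\alpha\cov^\Phi_{P_A}\preceq\cov^\Phi_{P_{A\cup R}}\) gives the claim. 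Monotonicity is already invoked in the proof of \Cref{lem:one-merge-validity}, so we can rely on it here as well.
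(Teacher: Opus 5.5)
Your proposal is correct and follows the paper's proof: the paper also rests on the defect identity \(S_{A\cup R}-P_{A\cup R}=\alpha(S_A-P_A)\) and the domination \(\alpha\cov^\Psi_{P_A}\preceq\cov^\Psi_{P_{A\cup R}}\), and it leaves implicit the steps you spell out (\(\alpha^2\le\alpha\), \(\alpha^2\tau\le\tau\), and homogeneity plus Loewner monotonicity of \(\tail_r\)). For monotonicity, \(\lambda_j(\alpha\cov^\Phi_{P_A})\le\lambda_j(\cov^\Phi_{P_{A\cup R}})\) for every \(j\) already gives the tail inequality directly by summing over \(j>r\), so your ``trace minus Ky Fan sum'' remark is unnecessary; a difference of two monotone quantities need not be monotone anyway.
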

\begin{proof}
    Unbiasedness of \(S_{A \cup R}\) is immediate from unbiasedness of \(S_A\) and \(P_R\).
    Set \(\alpha \coloneqq w(A)/(w(A)+w(R))\).

    Let \(\mathcal G\) be a Hilbert space and \(\Psi: \kvspace \to \mathcal G\) bounded.
    Then, the defect of \(S_{A\cup R}\) is
    \[
        \int_\kvspace\Psi \diff (S_{A \cup R} - P_{A \cup R})
        =
        \alpha \left( \int_\kvspace\Psi \diff (S_A - P_A) \right)
    \]
    Since \(\alpha \cov^\Psi_{P_A}\preceq \cov^\Psi_{P_{A\cup R}}\), the validity estimates for \(S_A\) imply the validity estimates for \(S_{A\cup R}\).
\end{proof}

It remains only to count the validity depth of the summaries produced by \Cref{alg:prefill-compression,alg:decoding-compression}.
The prefill scan and the decoding buckets both construct summaries by repeated binary merges of disjoint completed chunks.
Along any dependency path there are only logarithmically many reducer calls, and the unfinished current chunk is appended exactly.
Thus, if \(\reduce\) is \((\Phi,r,\tau)\)-admissible, each resulting prefix summary \(\hat P_t\) is \((d,r,\tau)\)-valid, with \(d\) proportional to the depth of the tree, which is logarithmic in the number of tokens.
The claimed risk guarantees then follow immediately from \Cref{pro:algorithm-risk}.

\thmalgguarantee*
\begin{proof}
    We first show that \(\hat P_t\) as produced by \Cref{alg:prefill-compression,alg:decoding-compression} is \(\bigO(\log(2+t/K),r,\tau)\)-valid for \(\{1,\ldots,t\}\).
    The risk bounds then follow immediately from \Cref{pro:algorithm-risk}.

    Let \(I_1,I_2,\ldots\) denote the consecutive chunks of size \(K\), and let \(c(t)\) be the chunk containing \(t\).
    Write
    \[
        A_t \coloneqq I_1\cup\cdots\cup I_{c(t)-1},
        \qquad
        R_t \coloneqq I_{c(t)}\cap\{1,\ldots,t\}.
    \]
    Thus, \(P_t\) is the normalized mixture of \(P_{A_t}\) and the exact current-chunk suffix \(P_{R_t}\), with the obvious convention when \(A_t=\emptyset\).

    We claim that every summary stored by the merge-reduce computation for a union of \(m\) completed chunks is \((O(\log(2+m)),r,\tau)\)-valid for its underlying index set, and that summaries stored on disjoint index sets are independent.
    This follows by induction on the binary merge tree.
    A single chunk is represented exactly, hence is deterministic and \((0,r,\tau)\)-valid.
    Whenever two disjoint summaries with validity \(d_1\) and \(d_2\) are merged, the induction hypothesis gives their independence; the reducer call uses fresh randomness independent of both summaries.
    Thus, \Cref{lem:one-merge-validity} shows that the resulting summary is \(((1+\max\{d_1,d_2\}),r,\tau)\)-valid.
    The resulting summary depends only on its two children and on the fresh randomness used in this reducer call, so it remains independent of all stored summaries supported on disjoint index sets.
    Along any path in the merge tree for \(m\) chunks there are at most \(\bigO(\log(2+m))\) merges.
    Hence the summary of any completed-chunk prefix produced by the prefill scan, and the completed-prefix summary maintained by the decoding buckets, is \((\bigO(\log(2+m)),r,\tau)\)-valid.

    Apply this to the completed chunks preceding \(t\).
    If \(A_t\neq\emptyset\), the algorithm has an \((\bigO(\log(2+|A_t|/K)),r,\tau)\)-valid summary \(S_{A_t}\) for \(A_t\).
    Since \(R_t\) is kept exact,
    \[
        \hat P_t
        =
        \frac{w(A_t)}{w(A_t)+w(R_t)}S_{A_t}
        +
        \frac{w(R_t)}{w(A_t)+w(R_t)}P_{R_t}.
    \]
    By \Cref{lem:exact-suffix-validity}, and using \(|A_t| \le t\), \(\hat P_t\) is \((\bigO(\log(2+t/K)), r, \tau)\)-valid for \(\{1,\ldots,t\}\).

    The claim then follows immediately from \Cref{pro:algorithm-risk}.
\end{proof}

The global guarantee therefore reduces entirely to the design of \(\reduce\).
Any concrete reducer satisfying the one-step trace and tail conditions of \Cref{def:admissible-reducer} inherits the same causal prefill and decoding guarantees through the merge-reduce schedules of \Cref{alg:prefill-compression,alg:decoding-compression}, with only the logarithmic depth loss.
\subsection{Computational cost of the global merge-reduce scheme} \label{apx:runtime}
We prove \Cref{pro:runtime} from the main text.

\proruntime*
\begin{proof}
    Let \(M=n/K\) and \(L=\log_2 M\). We count calls to \(\textsc{Reduce}\) and atom storage, treating each \(K\)-atomic summary as an object of size \(K\).
    
    First, consider Algorithm~\ref{alg:prefill-compression}. 
    In the upsweep stage, level \(\ell\) contains \(M/2^\ell\) independent merge-reduce operations.
    Hence, the number of calls to \(\textsc{Reduce}\) in the upsweep is
    \[
        \sum_{\ell=1}^{L} \frac{M}{2^\ell} = M-1 .
    \]
    The downsweep has the same count.
    The total number of calls is thus \(2M-2=\bigO(n/K)\).
    
    The parallel depth is also immediate from the tree structure.
    All calls at a fixed level are independent, so each level contributes one parallel round of \(\textsc{Reduce}\).
    There are \(L\) upsweep levels and \(L\) downsweep levels, giving depth \(\bigO(L)=\bigO(\log(n/K))\).
    
    For memory, the algorithm stores \(\bigO(M)\) summaries across all levels of the scan tree, and each summary is \(K\)-atomic.
    Thus the total number of stored atoms is \(\bigO(MK)=\bigO(n)\).
    
    Now consider Algorithm~\ref{alg:decoding-compression}.
    After \(s\) completed chunks have been incorporated, including the prompt summary, the nonempty buckets satisfy the standard binary-counter invariant: bucket \(B_\ell\), when nonempty, stores a \(K\)-atomic summary of a block of \(2^\ell\) chunks, and there is at most one nonempty bucket at each level.
    Therefore, the number of nonempty buckets is at most \(1+\lfloor \log_2 s \rfloor\).
    For a prefix of length \(t=sK\), the number of stored atoms is consequently
    \[
        \bigO(K\log s)=\bigO(K\log(t/K)).
    \]
    
    It remains to count calls to \(\textsc{Reduce}\).
    Each new chunk first creates a level-zero bucket.
    The while loop then performs the carry operations of a binary counter.
    Whenever a bucket is already occupied, the two \(K\)-atomic summaries are merged and reduced to form one summary at the next level.
    Over \(s\) chunk insertions, level \(\ell\) participates in at most \(O(s/2^{\ell+1})\) such carries.
    Summing over \(\ell\) gives \(\bigO(s)\) carry calls in total, hence \(\bigO(1)\) amortized carry calls per chunk.
    
    After the carry step, the algorithm forms the current history summary by merging the nonempty buckets in decreasing order.
    Since there are \(\bigO(\log s)\) nonempty buckets, this uses \(\bigO(\log s)\) calls to \(\textsc{Reduce}\) per completed chunk. Combining this with the \(\bigO(1)\) amortized carry cost gives \(\bigO(\log s)\) calls per chunk.
    Since each chunk contains \(K\) tokens, the amortized number of calls per token is
    \[
        \bigO(\log s/K)=O(\log(t/K)/K).
    \]
    That proves the claim.
\end{proof}
\newpage
\section{Experimental details} \label{apx:experiments}

This appendix describes the implementation of our experiments in \Cref{sec:experiments}.
Detailed instructions for reproducing our experiments are provided with the code in the supplementary material.

\paragraph{Dataset and model.}
We evaluate on the ``long'' subset of LongBench-v2 \citep{bai2025longbench}.
The dataset is available at \url{https://longbench2.github.io/}.
We use the official LongBench-v2 evaluation protocol.
We run all experiments on Qwen3-32B \citep{qwen3}, using \texttt{int4} quantization.
We extend the context window to \(131\mathrm{k}\) tokens using YaRN RoPE scaling \citep{peng2024yarn}.
The model is available at \url{https://huggingface.co/Qwen/Qwen3-32B}.
We disable chain-of-thought generation and use greedy decoding throughout.
LongBench-v2 and Qwen3-32B are released under the Apache-2.0 license.

\paragraph{Computational resources.}
All experiments were conducted on a single NVIDIA H100 GPU with \(95{,}830\) MiB VRAM.
The full set of runs took approximately 100 GPU hours, including failed runs.

\paragraph{Implementation details.}
For the full-attention baseline, we use the standard implementation of the Qwen3-32B model.
For the compressed methods, we use the algorithms described in \Cref{sec:algorithms}, with two additions.
First, each token may attend to the first \(8\) tokens, following the attention-sink heuristic of \citet{xiao2023efficient}.
Second, each token may attend to the \(K\) tokens immediately preceding its chunk boundary.
Thus, each token attends to at most \(3K+8\) tokens.
This is compatible with our theory and enjoys the same guarantees.
We use \(K=2048\), corresponding to a compression rate of approximately \(95\%\) relative to the \(131\mathrm{k}\)-token context available to the full-attention baseline.

Many LongBench-v2 contexts exceed even the extended context window of Qwen3-32B.
For such examples, we follow the official truncation protocol of \citet{bai2025longbench} by removing the middle section of the prompt.

\paragraph{Local reducer instantiations.}
We report results for two instantiations of \(\reduce\): random sampling and protected clustering.
All randomized methods are evaluated with a single run using a fixed random seed.

For random sampling, we sample i.i.d.\ from the context measure, i.e., proportionally to
\begin{equation} \label{eq:experiments:weights}
    \textstyle
    \exp\!\left(\|k\|_2^2/(2\sqrt{d_k})\right),
\end{equation}
as described in \Cref{sec:setting}.

Protected clustering corresponds to \Cref{alg:cluster-reduce}, using the static feature map described in \Cref{apx:feature-map} with the value norm bound fixed to \(V^2=50\).
We evaluate protected ranks \(r=64\) and \(r=128\).
To compute the leading eigenspace, we use a randomized Nyström approximation with four-fold oversampling.

For \textsc{ClusterSlots} in \Cref{alg:cluster-reduce}, we use a sliced equal-mass clustering rule.
After the heavy atoms have been retained exactly as in \Cref{alg:cluster-reduce}, the light atoms \(i\in L\) have normalized masses \(p_i=q_i/\rho\) satisfying \(\sum_{i\in L}p_i=m\).
For each direction in a fixed bank of random directions in the protected coordinate space, we sort the light atoms by their one-dimensional projection and place intervals of length \(p_i\) consecutively on \([0,m]\).
The \(m\) latent slots are the unit intervals of this line, and the assignment matrix \(X\) is given by the overlap of each atom interval with each unit slot.
This construction satisfies the row and column constraints required in \Cref{alg:cluster-reduce}.
We choose the direction with the smallest within-slot squared error in the protected coordinates and sample one representative from each slot by inverse-transform sampling.

For this method, we treat the initial context measure as uniform over the tokens rather than proportional to \Cref{eq:experiments:weights}, as this significantly improved numerical stability in the computation of the leading eigenspace.

\paragraph{Literature baselines.} We compare performance against ScissorHands \citep{liu2023scissorhands}, SnapKV \citep{li2024snapkv}, and StreamingLLM \citep{xiao2023efficient}.
All three methods use a sliding window of the most recent \(4096\) tokens.
The allocation of the remaining \(2056\) token budget is dictated by the respective method. 
StreamingLLM retains the initial \(8\) sink tokens and uses the rest of this budget to extend the recent-token window.
SnapKV retains the top-ranked non-recent prompt tokens by attention score from the final prompt window, using average pooling with kernel size 5, as in the original paper \citep{li2024snapkv}.
ScissorHands retains the top-ranked non-recent tokens by cumulative pivotal-token counts.


\end{document}